\documentclass{article}

\PassOptionsToPackage{numbers, compress}{natbib}
\usepackage[main, final]{neurips_2026}

\usepackage[utf8]{inputenc} 
\usepackage[T1]{fontenc}    
\usepackage{hyperref}       
\usepackage{url}            
\usepackage{booktabs}       
\usepackage{amsfonts}       
\usepackage{nicefrac}       
\usepackage{microtype}      
\usepackage{xcolor}         
\usepackage{colortbl}      
\usepackage{kotex}
\usepackage{wrapfig}
\usepackage{enumitem}
\usepackage{multirow} 
\usepackage{xspace}
\usepackage{bbm} 
\usepackage{booktabs}
\usepackage{multirow}
\usepackage[table]{xcolor}
\usepackage{graphicx} 
\usepackage{amssymb} 
\usepackage{tabularx}
\usepackage{wrapfig}
\usepackage[toc,page]{appendix}
\usepackage{etoc}
\usepackage{algorithm}
\usepackage{bm}
\usepackage{algpseudocode}
\newcolumntype{C}{>{\centering\arraybackslash}X}

\usepackage{graphicx}
\usepackage{booktabs} 

\usepackage{hyperref}

\newcommand{\proposed}{\texttt{SafeCut}\xspace}

\usepackage{amsmath, amssymb, amsthm}
\usepackage{graphicx}                       
\usepackage{wrapfig}
\usepackage[most]{tcolorbox}                
\usepackage[framemethod=TikZ]{mdframed}     
\usepackage{caption}
\usepackage{subcaption}
\usepackage{pifont}
\usepackage{enumitem}
\newtcolorbox{contributionbox}{
    enhanced,
    colback=cyan!3,
    colframe=cyan!3,
    borderline west={1.2pt}{0pt}{cyan!60!black},
    sharp corners,
    boxrule=0pt,
    left=5pt,
    right=4pt,
    top=4pt,
    bottom=4pt
}

\newtcolorbox{namedbox}[2][]{enhanced,breakable,
  colback=cyan!4, colframe=cyan!65!black,
  boxed title style={colback=cyan!65!black, colframe=cyan!65!black},
  coltitle=white, fonttitle=\small,
  title={#2},
  attach boxed title to top left={xshift=1.0mm, yshift*=-\tcboxedtitleheight/2},
  boxsep=0.1pt, left=2pt, right=2pt, top=4pt, bottom=4pt, arc=3pt,
  #1}

\mdfdefinestyle{mathblockframe}{
  linecolor=gray!40,
  linewidth=0.5pt,
  backgroundcolor=gray!8,
  roundcorner=3pt,
  innertopmargin=5pt,
  innerbottommargin=5pt,
  innerleftmargin=3pt,
  innerrightmargin=3pt,
  skipabove=3pt,
  skipbelow=3pt,
}

\newmdtheoremenv[style=mathblockframe]{theorem}{Theorem}[section]
\newmdtheoremenv[style=mathblockframe]{proposition}{Proposition}[section]
\newmdtheoremenv[style=mathblockframe]{lemma}{Lemma}[section]
\newmdtheoremenv[style=mathblockframe]{corollary}{Corollary}[section]
\newmdtheoremenv[style=mathblockframe]{property}{Property}[section]

\theoremstyle{definition}
\newmdtheoremenv[style=mathblockframe]{definition}{Definition}[section]
\newmdtheoremenv[style=mathblockframe]{assumption}{Assumption}[section]
\newmdtheoremenv[style=mathblockframe]{observation}{Observation}[section]

\theoremstyle{remark}
\newmdtheoremenv[style=mathblockframe]{remark}{Remark}[section]

\usepackage[textsize=tiny]{todonotes}
\usepackage{float}
\usepackage{stfloats}

\title{Safeguarding Mutual Correction in \\ Source-Free Domain Adaptation via Cut Statistics}

\author{%
  Seongjun Lee \\
  Department of Artificial Intelligence \\
  Korea University \\
  Seoul, Republic of Korea \\
  \texttt{pyoung7307@korea.ac.kr} \\
  \And
  Changhee Lee\thanks{Corresponding author.} \\
  Department of Artificial Intelligence \\
  Korea University \\
  Seoul, Republic of Korea \\
  \texttt{changheelee@korea.ac.kr} \\
}

\begin{document}

\maketitle
\begin{abstract}
    Source-Free Domain Adaptation (SFDA) aims to adapt a source-pretrained model to an unlabeled target domain without access to the original source domain. While early single-model approaches rely on self-refinement, they are inherently susceptible to confirmation bias and struggle to correct their own systematic errors. To overcome this limitation, recent methods introduce Vision-Language (ViL) models as external knowledge sources. However, these approaches operate in a largely unidirectional paradigm, using the ViL model primarily to supervise the source-pretrained model. This overlooks a key structural property: the two models exhibit distinct failure modes -- where one produces an incorrect prediction, the other may produce a correct one, creating a natural opportunity for mutual correction within the target domain. Yet, without ground-truth labels, identifying which model is correct on any given sample is non-trivial, and naively exchanging predictions risks propagating errors across models. To address this challenge, we propose \proposed, a novel approach that leverages the cut statistic as a label-free measure of prediction reliability to gate cross-model supervision. Our approach dynamically controls both the direction and strength of supervision based on relative reliability, selectively amplifying true corrections while suppressing miscorrections on a per-sample basis. We further provide theoretical justification showing that this reliability-gated mechanism guarantees a net-positive correction signal. Extensive experiments across diverse SFDA benchmarks demonstrate that \proposed achieves state-of-the-art performance, highlighting the effectiveness of safeguarding mutual correction in SFDA via cut statistics. Codes are available \textcolor{blue}{\underline{\href{https://github.com/SeongJunLee-net/SafeCut_NIPS26}{here}}}.
\end{abstract}

\section{Introduction}
\label{sec:intro}

One of the central challenges in deploying deep learning models is the presence of distribution shift between training and deployment environments.
Unsupervised Domain Adaptation (UDA) addresses this issue by aligning source and target distributions without relying on target labels~\citep{dann_ganin_icml,mkmdd_long_icml}. However, it requires access to source data -- an assumption often violated in privacy- and regulation-sensitive domains such as healthcare and finance. Source-Free Domain Adaptation (SFDA) has therefore emerged as a practical alternative, operating under the \textit{stricter constraint} that only a source-pretrained model and unlabeled target data are accessible, while both source data and target labels are \textit{completely inaccessible}~\citep{shot_liang_icml}.

SFDA assumes that, despite distribution shifts, the source and target domains share an underlying semantic structure, allowing knowledge learned from the source domain to remain partially transferable. Early approaches capitalize on this shared structure by leveraging the structural proximity of predictions or latent features induced by the source-pretrained model to derive self-supervised adaptation signals on the target domain~\citep{shot_liang_icml,nrc_yang_nips,aad_yang_nips,adacontrast_chen_cvpr,plue_litrico_cvpr,cowa_lee_icml}. However, because these single-model approaches rely exclusively on self-refinement, they are highly susceptible to confirmation bias and inherently struggle to correct their own systematic errors, as incorrect predictions are often distributed throughout the target domain, making them difficult to correct through local proximity or consistency-based supervision~\citep{cowa_lee_icml, elr_yi_iclr}.

\begin{wrapfigure}{r}{0.49\textwidth}
    \vspace{-2mm}
    \centering
    \includegraphics[width=0.49\textwidth]{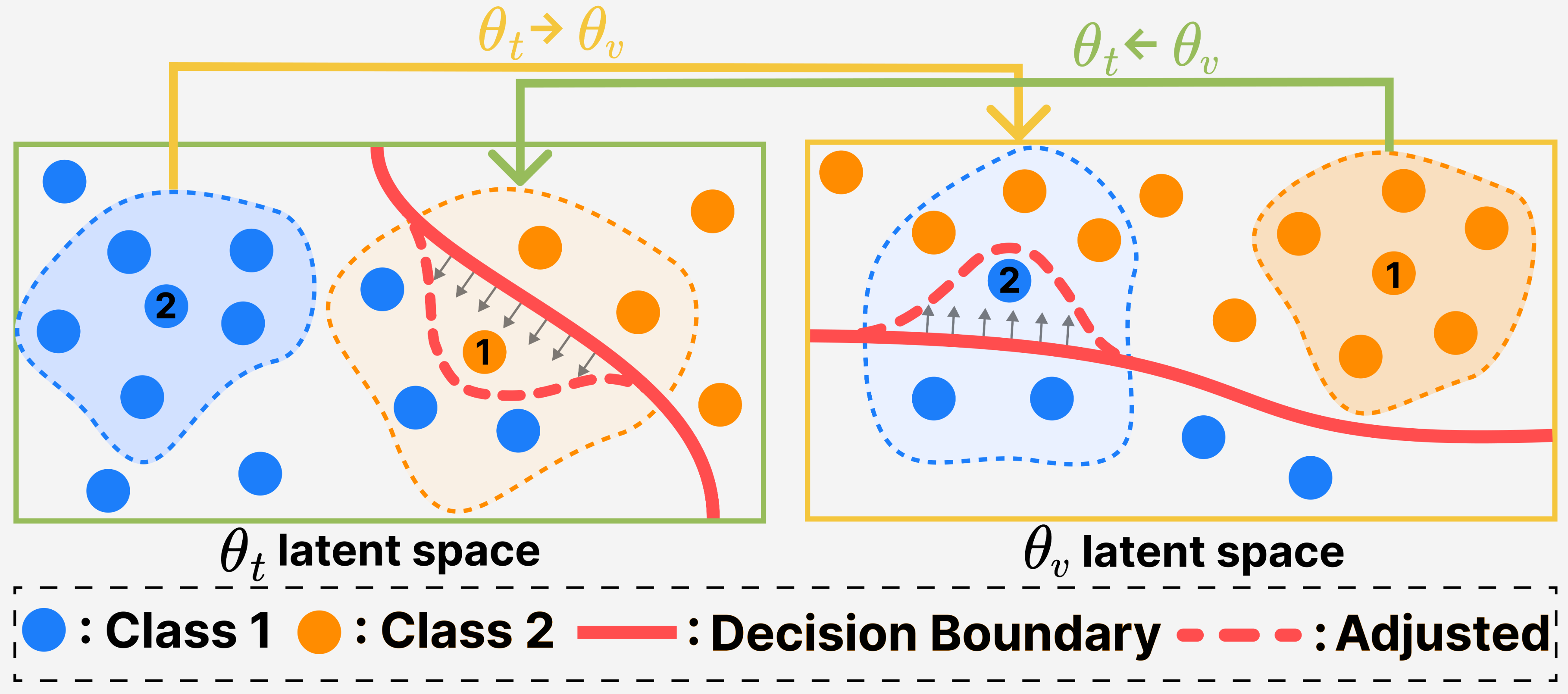}
    \caption{\textbf{Reliability-based Mutual Correction.}
    The model with lower cut corrects the other (Sample 1: $\theta_v\!\to\!\theta_t$, Sample 2: $\theta_t\!\to\!\theta_v$).}
    \label{fig:motivation} \vspace{-2mm}
\end{wrapfigure}
Recent work introduces Vision-Language (ViL) models as an external knowledge source, enabling cross-model supervision beyond self-training~\cite{difo_tang_cvpr,prode_tang_iclr}. Their rich, generalized representations provide corrective signals in regions where the source-pretrained model is uncertain or biased. Despite these advantages, existing approaches treat ViL as a universally more reliable knowledge source. In practice, however, ViL models may have distinct failure modes from the source-pretrained models -- cases where one model errs while the other is correct -- and their relative reliability varies across samples and target domains. Unconditionally accepting ViL predictions ignores this structural variability, which limits potential gains from adaptation. Moreover, such one-sided reliance can hinder generalization in domains where ViL models are inherently less reliable, such as medical imaging~\citep{reclip_hu_wacv,cpclip_yu_miccai}. More fundamentally, it overlooks the opportunity to leverage these distinct failure modes for mutual correction.

To address this, we propose a novel approach, \textbf{Safe}guarding mutual correction via \textbf{Cut} statistics (\proposed), which establishes a positive feedback loop through \textit{reliability-gated knowledge transfer} on the target domain. Using the \textit{cut statistic}~\citep{cutstats_muhlenbach_jiis} as a label-free measure of local prediction consistency, our method dynamically controls both the direction and strength of cross-model supervision (Figure~\ref{fig:motivation}). This selectively amplifies corrective signals while suppressing miscorrections, enabling the two models to refine each other in a principled manner. As a result, \proposed achieves state-of-the-art performance across diverse domain adaptation benchmarks.
\vspace{-1mm}
\begin{contributionbox}
\textbf{Our contributions are summarized as follows:}
\begin{itemize}[leftmargin=1.5em]
    \item We provide extensive empirical evidence that the cut statistic is strongly and negatively correlated with prediction accuracy in domain adaptation settings, and further establish a theoretical foundation based on cut margins to exploit different failure modes.
    \item Building on this insight, we propose \proposed, a reliability-gated method that leverages the cut statistic to dynamically control both the direction and strength of cross-model supervision.
    \item We demonstrate that \proposed achieves state-of-the-art performance across diverse domain adaptation benchmarks, highlighting the effectiveness of exploiting complementary failure modes for SFDA.
\end{itemize}
\end{contributionbox}

\section{Related Works}\label{sec:rel}
\paragraph{Source-Free Domain Adaptation.} Early work aligns target representations via self-supervised pseudo-labeling~\cite{shot_liang_icml}, and subsequent methods exploit target neighborhood structure for prediction consistency, pseudo-label refinement, and contrastive adaptation~\cite{nrc_yang_nips,aad_yang_nips,adacontrast_chen_cvpr}. Alternatively, several approaches estimate prediction reliability via GMM-based confidence or neighborhood uncertainty, and subsequently use these quantities for self-supervision~\cite{plue_litrico_cvpr,cowa_lee_icml}. However, by relying solely on the source model's predictions and the resulting latent structures, these methods are inherently constrained by the model's own blind spots and confirmation bias. To address this, recent work introduces ViL as an auxiliary knowledge source, focusing on adapting ViL to the target domain or denoising its pseudo-labels~\cite{difo_tang_cvpr,prode_tang_iclr}. These methods are specifically tailored to the zero-shot capabilities of ViL and do not explore their extension to more general counterpart models; furthermore, they leave the critical question of how effective knowledge transfer can occur between the two models largely unaddressed.

\paragraph{Cross-Model Supervision.} Cross-model supervision has been explored in mutual learning~\cite{DeepMutualLearning_Zhang_cvpr} and co-teaching~\cite{coteaching_han_nips} frameworks, where two models iteratively refine each other -- either by exchanging predictions or by selecting reliable samples for the peer to learn from. However, these approaches fundamentally rely on labeled data or noisy label supervision, making them inapplicable to the fully unsupervised SFDA setting. More recently, weak-to-strong generalization~\cite{w2s_burns_openai, weak2strong_theory_lang_nips} demonstrates that a weaker model's imperfect supervision can nonetheless elicit strong generalization from a more capable peer. \proposed builds on this insight and extends it to the label-free SFDA setting, leveraging the cut statistic to dynamically route supervision based on local prediction consistency — enabling reliable cross-model correction without any label information.


\section{Mutual Correction in Source-free Domain Adaptation}
\label{sec:prelim}

\subsection{Problem Setting}
\label{sec:setting}

We consider a $C$-class classification task over the input space $\mathcal{X}$ and the label space $\mathcal{Y} = \{1, \dots, C\}$. In the \textit{source-free domain adaptation} (SFDA) setting, we are provided with a source model $\theta_s: \mathcal{X} \to \mathbb{R}^C$ pre-trained on a labeled source domain $\mathcal{D}_s$. Our objective is to adapt this model to an unlabeled target domain $\mathcal{D}_t = \{x_j\}_{j=1}^n$, whose underlying distribution differs from that of the source. 
Following the standard SFDA setting, the source data $\mathcal{D}_{s}$ is \emph{strictly inaccessible} during adaptation, due to data privacy or storage limitations.
We initialize our target model $\theta_t$ with the source parameters (i.e., $\theta_t = \theta_s$) and must bridge the domain gap using only the unlabeled samples in $\mathcal{D}_t$. The core challenge lies in recovering the target discriminative structure without the supervision of the source distribution or ground-truth target labels.

We further introduce a \emph{peer (counterpart) model} $\theta_v$ designed to provide \emph{complementary correctness patterns}. Specifically, the two models (i.e., $\theta_t$ and $\theta_v$) are assumed to make errors on different subsets of the target domain: there exist non-empty disagreement regions where only one of the two models is correct. 
This functional asymmetry enables \emph{mutual correction}, whereby each model can provide informative supervision to its peer on samples where its own prediction is correct and the other's is not. 
When applied iteratively, such cross-model supervision forms a positive reinforcing feedback loop, in which progressively refined predictions from one model serve as higher-fidelity pseudo-labels for the other, ultimately improving both models beyond their initial performance \cite{DeepMutualLearning_Zhang_cvpr, w2s_burns_openai, weak2strong_theory_lang_nips}.

\textbf{Dynamics of Mutual Correction.}~~
To formalize the dynamics of mutual correction, we partition the target data based on the correctness of $\theta_t$ and $\theta_v$. For sample $x$, let $y(x) \in \mathcal{Y}$ denote the (unknown) ground-truth label and $\hat{y}_i(x)$ the predicted label from model $\theta_i$ with $i \in \{t, v\}$.
We define the set of correct predictions for model $\theta_i$ as $\mathcal{C}_{i} =  \{x \in \mathcal{D}_t: \hat{y}_i(x) = y(x)\}$ and the set of incorrect predictions as its complement, $\overline{\mathcal{C}}_{i} = \mathcal{D}_{t}\setminus \mathcal{C}_{i}$. The intersection of these sets induces a partition of the target domain into four mutually exclusive (unobservable) subsets:
\begin{align} \label{eq:subsets_oracle}
    \mathcal{S}_{cc} = \mathcal{C}_{t} \cap \mathcal{C}_{v}, \quad
    \mathcal{S}_{c\bar{c}} = \mathcal{C}_{t} \cap \overline{\mathcal{C}}_{v}, \quad 
    \mathcal{S}_{\bar{c}c} = \overline{\mathcal{C}}_{t} \cap \mathcal{C}_{v}, \quad
    \mathcal{S}_{\bar{c}\bar{c}} = \overline{\mathcal{C}}_{t} \cap \overline{\mathcal{C}}_{v},
\end{align}
where each subset corresponds to a distinct joint correctness pattern of the two models.

Due to the absence of ground-truth labels in SFDA, mutual correction relies on each model using the other's predictions as pseudo-labels. This cross-model supervision induces two competing dynamics: \vspace{-2mm}
\begin{itemize}[leftmargin=1.5em]
    \item \textbf{Correction:}
    This occurs when a model receives a correct pseudo-label from its peer that can rectify its own error. Specifically, for samples in $\mathcal{S}_{\bar{c}c}$, the peer model $\theta_v$ provides correct supervision to $\theta_t$; symmetrically, $\theta_t$ can correct $\theta_v$ for samples in $\mathcal{S}_{c
    \bar{c}}$. Incorporating such signals directly improves the receiving model's discriminative power.    
    \item \textbf{Miscorrection:}
    This occurs when an incorrect prediction is used as supervision, potentially corrupting a model that is already correct. For instance, if $\theta_v$ supervises $\theta_t$ on $\mathcal{S}_{c\bar{c}}$ (or vice versa on $\mathcal{S}_{\bar{c}c}$), the peer model provides an erroneous pseudo-label. Accepting such signals propagates errors, leading to confirmation bias and a degradation of performance.   
\end{itemize}
The ideal strategy would accept supervision exclusively on \textit{correctly predicted} samples -- i.e., employing $\theta_v$ as a supervisor only on $\mathcal{S}_{\bar{c}c}$ and $\theta_t$ only on $\mathcal{S}_{c\bar{c}}$ -- thereby maximizing correction while avoiding miscorrection.
However, this \textit{oracle} selection is infeasible in practice: since ground-truth labels are \textit{not available}, there is no direct way to determine which model is correct for a given sample. The central challenge is therefore to approximate this selection in a \textit{fully unsupervised} manner, so that correction signals are amplified while miscorrection signals are suppressed. 

\subsection{When to Rely on Which Model: A Cut Statistic Approach}  \label{sec:observation}

To address the challenge of identifying correctly predicted samples without access to ground-truth labels, we employ the \emph{cut statistic}~\cite{cutstats_muhlenbach_jiis}, a graph-based measure that quantifies consistency of model predictions within a sample's local neighborhood. 
 
\begin{figure}[t]
    \centering
    \begin{subfigure}[b]{0.49\textwidth}
        \centering
        \includegraphics[width=\textwidth]{Contents/Figures/Figure2_1.pdf}
        \caption{Cl $\rightarrow$ Pr (Office-Home)}
        \label{fig:obs-clpr}
    \end{subfigure}
    \hfill
    \begin{subfigure}[b]{0.49\textwidth}
        \centering
        \includegraphics[width=\textwidth]{Contents/Figures/Figure2_2.pdf}
        \caption{ S $\rightarrow$ C (DomainNet-126)}
        \label{fig:obs-rwpr}
    \end{subfigure}\vspace{-1mm}
    \caption{%
    \textbf{Cut statistic as a reliability proxy.}
    For each adaptation task, the left panel shows that prediction accuracy 
    decreases (approximately) monotonically with the cut statistic $z$ (computed via \eqref{eq:cut_estimate}) for both models. 
    The right panel shows the sample density in the $(z_t, z_v)$ plane for $\mathcal{S}_{c\bar{c}}$ (blue) and $\mathcal{S}_{\bar{c}c}$ (orange), which are concentrated in $\Delta(x)\ge \gamma_0$ and $\Delta(x)\le -\xi_0$, respectively.
}
    \label{fig:observation}\vspace{-1em}
\end{figure}

\textbf{Cut Statistic.}~~
For each target sample $x \in \mathcal{D}_t$, let $\mathcal{N}_{K}(x)$ denote its $K$-nearest neighbors. 
Then, the cut statistic for model $\theta_i$, where $i\in \{t, v\}$, is defined as:
\begin{equation} \label{eq:cut_original}
    z_{i}(x) = \!\! \sum_{x' \in \mathcal{N}_K(x)} \! \!\! w(x, x')  \cdot \mathbbm{1}[\hat{y}_{i}(x) \neq \hat{y}_{i}(x')]
\end{equation}
where $w(x, x')$ is a distance-based weighting function that assigns greater importance to closer neighbors. Intuitively, the cut statistic quantifies the \textit{local smoothness} of model predictions over the data graph. A low value indicates that a prediction is consistent with those of its neighbors, whereas a high value indicates that such local consistency does not hold. 

Under the cluster assumption~\cite{semi_chapelle_mit}, locally consistent predictions are generally more likely to be correct, suggesting that the cut statistic should be negatively correlated with predictive accuracy~\cite{cutstats_muhlenbach_jiis, weak2strong_theory_lang_nips}. However, in domain adaptation settings -- where distribution shift may distort the underlying structure -- this relationship is not guaranteed to hold. To examine whether this intuition remains valid, we empirically analyze the relationship between local consistency and prediction accuracy across diverse adaptation scenarios. Specifically, for each model $\theta_i$ with $i \in \{t, v\}$, we compute $z_{i}(x)$ for each target sample $x \in \mathcal{D}_{t}$, group samples into bins according to their cut values, and measure the average accuracy within each bin.
The results (Figure~\ref{fig:observation} left) reveal a clear and consistent trend across all evaluated settings, motivating the following observation (see Appendix~\ref{appendix:extended_obs} for additional results):

\vspace{1mm}
\begin{observation}\textit{(\textbf{Approximate Monotonicity})}
\label{obs:monotonicity}
\textit{
    For both $\theta_i$ with $i \in \{t, v\}$, prediction accuracy is approximately monotonically decreasing with respect to the cut statistic $z_i$.
}
\end{observation}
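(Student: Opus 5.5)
This observation is empirical, so it cannot be proved deductively; the plan is to justify it with a controlled measurement, and then with a stylized model that explains why the trend should appear under the cluster assumption~\cite{semi_chapelle_mit}.

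\emph{Empirical protocol.} For each adaptation task and each model $\theta_i$, $i\in\{t,v\}$, I would build the $K$-nearest-neighbour graph once on a fixed feature space and compute $z_i(x)$ via \eqref{eq:cut_original} for every $x\in\mathcal{D}_t$. I would then sort the samples into quantile bins of $z_i$ and report the bin accuracy $\mathrm{acc}_i(b)=|\{x\in b:\hat y_i(x)=y(x)\}|/|b|$. Quantile bins keep the per-bin variance comparable. To make ``approximately monotone'' a concrete, checkable claim, I would report three things:
\begin{itemize}
\item the Spearman correlation between bin index and $\mathrm{acc}_i(b)$, which should be strongly negative;
\item the total size of the upward jumps, $\sum_b(\mathrm{acc}_i(b+1)-\mathrm{acc}_i(b))_+$, which should be small relative to the overall drop;
\item bootstrap confidence bands, showing that any non-monotone blips lie within sampling noise.
\end{itemize}
Repeating this across benchmarks, as in Figure~\ref{fig:observation} and Appendix~\ref{appendix:extended_obs}, supports the statement for both $\theta_t$ and $\theta_v$.

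\emph{Explanatory argument.} Model each neighbour's label as $y(x')=y(x)$ with probability $p$ under the cluster assumption, and suppose each model errs independently with rate $\epsilon$. Conditioning on whether $\hat y_i(x)$ is correct, the expected cut for a correct prediction is
\[
\sum_{x'} w(x,x')\,\bigl(1-p(1-\epsilon)\bigr)
\]
(up to the small chance of coincidental agreement with wrong neighbours). For an incorrect prediction it is close to $\sum_{x'} w(x,x')$. The cut value therefore tends to be larger when the prediction is wrong. By Bayes' rule,
\[
P(\text{correct}\mid z)=\frac{\pi f_c(z)}{\pi f_c(z)+(1-\pi)f_e(z)},
\]
where $\pi$ is the prior probability of a correct prediction and $f_c$, $f_e$ are the conditional densities of $z$ given a correct or an incorrect prediction. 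This posterior is decreasing in $z$ exactly when the likelihood ratio $f_c/f_e$ is monotone. In the binomial or sum-of-Bernoulli model, the correct case has a smaller per-neighbour success probability, and the monotone likelihood ratio property follows.

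\emph{Main obstacle.} The hard step is justifying this likelihood-ratio monotonicity under domain shift, because errors are spatially correlated: whole clusters can be consistently mislabelled. In that case a low $z$ can occur even for a wrong prediction. I would therefore only claim \emph{approximate} monotonicity, and quantify the deviation through the fraction of such coherent wrong clusters. Empirically, this effect shows up as a non-zero error floor in the lowest-$z$ bin rather than as a reversal of the trend.
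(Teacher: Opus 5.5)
Your empirical protocol is essentially the paper's own support for this observation. The paper does not prove it either. It bins target samples by cut value and plots per-bin accuracy for $\theta_t$ and $\theta_v$ (Figure~\ref{fig:observation}, left), and repeats this over all Office-Home and DomainNet-126 tasks, the dual-ResNet setting, and several training checkpoints (Appendix~\ref{appendix:extended_obs}). The paper judges ``approximately'' only visually, so your Spearman statistic, upward-jump total and bootstrap bands genuinely sharpen the claim. The paper also offers no mechanism beyond the cluster-assumption intuition, and states explicitly that this intuition is not guaranteed under domain shift.

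One fidelity point: the curves the paper relies on use $\hat z_i$ from \eqref{eq:cut_estimate}. That statistic is normalized to $[0,1]$, restricted to mutual neighbours, and computed from each model's own features $\phi_i$, not from one fixed feature space. The gating and the cut margin $\Delta$ are built from exactly this quantity, so this is the version you should bin, not the raw \eqref{eq:cut_original} on a shared graph.

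Your explanatory model goes beyond the paper, and its last step does not go through as written.
\begin{itemize}
\item The monotone-likelihood-ratio conclusion holds for the \emph{unweighted} number of disagreeing neighbours. This covers the binomial case, and also the Poisson-binomial case with $q^c_j\le q^e_j$ for every neighbour, because the likelihood-ratio order is preserved under convolution of log-concave laws.
\item However, you set the model up with the weights $w(x,x')$. For a weighted sum, MLR can fail even under your independence assumptions.
\item Counterexample: take three neighbours with normalized weights $0.9, 0.05, 0.05$ and disagreement probabilities $q_c<q_e<1$.
\begin{itemize}
\item At $z=0.1$ (both light neighbours disagree), $f_c/f_e=\bigl(\frac{q_c}{q_e}\bigr)^2\frac{1-q_c}{1-q_e}$.
\item At $z=0.9$ (only the heavy neighbour disagrees), $f_c/f_e=\frac{q_c}{q_e}\bigl(\frac{1-q_c}{1-q_e}\bigr)^2$.
\item The second is larger by the factor $\frac{q_e(1-q_c)}{q_c(1-q_e)}>1$, so $\mathbb{P}(\text{correct}\mid z=0.9)>\mathbb{P}(\text{correct}\mid z=0.1)$.
\end{itemize}
\end{itemize}
In your model the sufficient statistic is the disagreement count. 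The weighted statistic aligns with it only approximately, for example after binning, or when the weights happen to track the per-neighbour log-odds.

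So either state the mechanism for the unweighted count, or list this as a second source of the ``approximate'' qualifier alongside the spatially correlated errors you already identify. Either way, as in the paper, the observation itself rests on the empirical part, not on the stylized derivation.
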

\vspace{-2mm}

\textbf{Connecting Correction Dynamics with Cut Statistics.}~~
Based on this observation, we assume that the conditional accuracy $\mathbb{P}\big(\hat{y}_i(x) = y(x) \mid z_i(x)\big)$ is monotonically decreasing in with respect to $z_i \in [0,1]$.\footnote{For notational simplicity, we will omit the dependence on $x$ in $y(x)$ and $\hat{y}_i(x)$ when it is clear from context.}
Since this monotonicity holds for both $\theta_t$ and $\theta_v$, the relative difference in their cut statistics can offer a principled, \textit{label-free} criterion for determining which model provides more reliable prediction on a per-sample basis.
To formalize this, we define the \textit{cut margin} $\Delta(x):= z_v(x) - z_t(x)$ to characterize the \textit{relative expertise} of the two models on sample $x$. The sign of $\Delta(x)$ indicates the direction of reliability: a positive value indicates that $\theta_t$ is more likely to be correct, while a negative value suggests the peer model $\theta_v$ is more reliable.
Crucially, the magnitude $|\Delta(x)|$ reflects the strength of this reliability gap, and thus measures the potential fidelity of the correction signal.
Building on this intuition, we approximate the unobservable oracle partitions $\mathcal{S}_{c\bar{c}}$ and $\mathcal{S}_{\bar{c}c}$ in \eqref{eq:subsets_oracle} by introducing margin thresholds $\gamma, \xi \in [0,1]$, identifying regions of the target domain where one model provides a clear reliability gap over the other:
\begin{equation} \label{eq:param_s}
    \hat{\mathcal{S}}_{c\bar{c}}(\gamma) = \{x \in \mathcal{D}_t : \Delta(x) \ge \gamma\}, \quad 
    \hat{\mathcal{S}}_{\bar{c}c}(\xi) = \{x \in \mathcal{D}_t :  \Delta(x) \le - \xi\} .
\end{equation}
Given different failure modes of $\theta_t$ and $\theta_v$, and recalling that the monotonic decrease of each model's accuracy with its own cut statistic (Observation~\ref{obs:monotonicity}), the model with higher local consistency is more likely to be correct and can thus provide high-fidelity pseudo-labels to supervise its less reliable counterpart. Notably, this intuition remains valid under approximate monotonicity; we provide theoretical bounds under noisy conditions in Appendix~\ref{app:noisy_scenario}. We formalize this intuition through the following baseline condition for effective mutual correction: \vspace{1mm}
\begin{proposition}[\textbf{Reliability-Separated Regions}]
\label{prop:threshold}
There exist baseline margins $\gamma_0, \xi_0$ such that:
\begin{align}
    \mathbb{P}\big(\hat{y}_t = y \mid x \in \hat{\mathcal{S}}_{c\bar{c}}(\gamma_0)\big) &> \mathbb{P}\big(\hat{y}_v = y \mid x \in \hat{\mathcal{S}}_{c\bar{c}}(\gamma_0)\big), \\
    \mathbb{P}\big(\hat{y}_v = y \mid x \in \hat{\mathcal{S}}_{\bar{c}c}(\xi_0)\big) &> \mathbb{P}\big(\hat{y}_t = y \mid x \in \hat{\mathcal{S}}_{\bar{c}c}(\xi_0)\big). 
\end{align}
Please refer to Appendix~\ref{app:proposition_clean} for the full proof of Proposition~\ref{prop:threshold}.
\end{proposition}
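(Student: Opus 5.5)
The proof uses only the monotonicity assumption stated just before the proposition: $g_i(z) := \mathbb{P}(\hat{y}_i = y \mid z_i = z)$ is monotonically decreasing in $z \in [0,1]$ for $i \in \{t,v\}$. One further ingredient is needed. I would assume, in line with the different-failure-modes premise, that conditional on $z_i$ the correctness event of model $\theta_i$ does not otherwise depend on the other model's cut value. That is, $\mathbb{P}(\hat{y}_i = y \mid z_t, z_v) = g_i(z_i)$. I would also assume the two models are comparably calibrated as functions of their cut statistics. The simplest version is $g_t = g_v =: g$ with $g$ strictly decreasing. Failing that, one can allow a bounded discrepancy $\sup_z |g_t(z) - g_v(z)| \le \epsilon$ and absorb it into the margin.

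\textbf{Step 1 (pointwise comparison).} Take any $x$ with $\Delta(x) = z_v(x) - z_t(x) \ge \gamma_0 > 0$. Then $z_v \ge z_t + \gamma_0$. Monotonicity gives $g(z_v) \le g(z_t + \gamma_0)$. Under strict decrease and compactness of $[0,1]$, the quantity $\delta(\gamma_0) := \inf_{z} \big(g(z) - g(z+\gamma_0)\big)$ is positive. Hence $g_t(z_t) - g_v(z_v) \ge \delta(\gamma_0) - \epsilon$. I choose $\gamma_0$ so that $\delta(\gamma_0) > \epsilon$. With identical curves, any $\gamma_0 > 0$ suffices, provided the set below is nonempty.

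\textbf{Step 2 (integrate over the region).} Write
\begin{equation*}
\mathbb{P}\big(\hat{y}_t = y \mid x \in \hat{\mathcal{S}}_{c\bar{c}}(\gamma_0)\big) = \mathbb{E}\big[g_t(z_t) \mid \Delta \ge \gamma_0\big]
\end{equation*}
using the conditional-independence assumption, and do the same for $v$. The pointwise strict inequality from Step 1 then integrates to the first claimed inequality. This requires $\hat{\mathcal{S}}_{c\bar{c}}(\gamma_0)$ to have positive mass, which follows from the nonempty disagreement regions posited in Section~\ref{sec:setting}. The second inequality follows symmetrically, with $\xi_0$ and the roles of $t$ and $v$ swapped.

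\textbf{Main obstacle.} The main obstacle is justifying the reduction $\mathbb{P}(\hat{y}_i = y \mid x \in \hat{\mathcal{S}}) = \mathbb{E}[g_i(z_i) \mid x \in \hat{\mathcal{S}}]$. Conditioning on the region also conditions on the peer's cut value, which may be correlated with $\theta_i$'s correctness beyond $z_i$. Monotonicity alone, stated only for the marginal conditional accuracy, does not control this.

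I would handle this in two stages. First, I would state the proposition under the explicit assumption that the conditional accuracy depends on $(z_t, z_v)$ only through the model's own cut, and is monotone in it. Second, I would note that one can alternatively take $\gamma_0, \xi_0$ close to their maximal values, where one model's cut is near $0$ and the other's near $1$, so that the gap $g(0) - g(1) > 0$ dominates any residual dependence. This existential choice of large margins is what makes the statement hold with only approximate monotonicity, matching the noisy-case bounds deferred to Appendix~\ref{app:noisy_scenario}.
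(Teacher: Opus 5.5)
Your proposal is correct and follows essentially the paper's route: the paper likewise writes $\mathbb{P}(\hat{y}_t = y \mid C) - \mathbb{P}(\hat{y}_v = y \mid C)$ as $\mathbb{E}_{x\mid C}\big[\mathbb{P}(\hat{y}_t = y \mid z_t(x)) - \mathbb{P}(\hat{y}_v = y \mid z_v(x))\big]$ (assuming, as you do, that correctness depends only on the model's own cut), argues the integrand is positive because $z_t < z_v$ on $\hat{\mathcal{S}}_{c\bar{c}}(\gamma_0)$, and integrates; your comparable-curves hypothesis ($g_t = g_v$, or $\sup_z|g_t - g_v| \le \epsilon$ with $\delta(\gamma_0) > \epsilon$) makes explicit what the paper simply invokes as the ``cross-model reliability ordering.'' Two small caveats: positive mass of $\{\Delta \ge \gamma_0\}$ and $\{\Delta \le -\xi_0\}$ must be assumed directly, since it does not follow from the nonempty disagreement regions (these are defined by correctness, not by cut values), and your large-margin fallback without own-cut sufficiency is only heuristic, because marginal monotonicity of $g_i$ places no constraint on accuracy on a low-probability slice such as $\{z_t \approx 0,\, z_v \approx 1\}$.
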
 \vspace{-3mm}
Here, $\gamma_0$ and $\xi_0$ denote the minimum reliability gaps required for one model’s prediction to serve as a trustworthy supervisory signal for its peer.

While Proposition~\ref{prop:threshold} shows that the supervising model is more accurate in regions where it is locally more consistent than its peer, effective mutual correction requires a stronger guarantee: cross-model supervision must produce a \textit{net-positive correction signal}, where corrections outnumber miscorrections. The following theorem proves that this dominance holds under the baseline margins:
\begin{theorem}[\textbf{Correction Dominance}]
\label{thm:correction}
For the baseline margins $\gamma_0, \xi_0$ satisfying Proposition~\ref{prop:threshold}:
\begin{align}
    \mathbb{P}\big(\hat{y}_t = y \land \hat{y}_v \ne y \mid x \in \hat{\mathcal{S}}_{c\bar{c}}(\gamma_0)\big) &>\mathbb{P}\big(\hat{y}_t \ne y \land \hat{y}_v = y \mid x \in \hat{\mathcal{S}}_{c\bar{c}}(\gamma_0)\big), \\
    \mathbb{P}\big(\hat{y}_v = y \land \hat{y}_t \ne y \mid x \in \hat{\mathcal{S}}_{\bar{c}c}(\xi_0)\big) &> \mathbb{P}\big(\hat{y}_v \ne y \land \hat{y}_t = y \mid x \in \hat{\mathcal{S}}_{\bar{c}c}(\xi_0)\big).
\end{align}
Please refer to Appendix~\ref{app:theorem_clean} for the full proof of Theorem~\ref{thm:correction}.
\end{theorem}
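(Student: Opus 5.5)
This follows from Proposition~\ref{prop:threshold} by one inclusion--exclusion step that cancels the jointly-correct mass. Fix a region $R$, which is either $\hat{\mathcal{S}}_{c\bar{c}}(\gamma_0)$ or $\hat{\mathcal{S}}_{\bar{c}c}(\xi_0)$, and write $\mathbb{P}_R(\cdot) = \mathbb{P}(\cdot \mid x \in R)$. Conditioning on the correctness of the other model, decompose each marginal accuracy into two disjoint events:
\begin{align}
    \mathbb{P}_R(\hat{y}_t = y) &= \mathbb{P}_R(\hat{y}_t = y \land \hat{y}_v = y) + \mathbb{P}_R(\hat{y}_t = y \land \hat{y}_v \ne y), \\
    \mathbb{P}_R(\hat{y}_v = y) &= \mathbb{P}_R(\hat{y}_t = y \land \hat{y}_v = y) + \mathbb{P}_R(\hat{y}_t \ne y \land \hat{y}_v = y).
\end{align}
These follow from the partition in \eqref{eq:subsets_oracle} restricted to $R$. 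The events are disjoint because $\mathcal{S}_{cc}, \mathcal{S}_{c\bar{c}}, \mathcal{S}_{\bar{c}c}, \mathcal{S}_{\bar{c}\bar{c}}$ partition $\mathcal{D}_t$.

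Subtracting the two identities, the common term $\mathbb{P}_R(\hat{y}_t = y \land \hat{y}_v = y)$ (the mass of $\mathcal{S}_{cc}\cap R$) cancels. This gives
\begin{equation}
    \mathbb{P}_R(\hat{y}_t = y) - \mathbb{P}_R(\hat{y}_v = y) = \mathbb{P}_R(\hat{y}_t = y \land \hat{y}_v \ne y) - \mathbb{P}_R(\hat{y}_t \ne y \land \hat{y}_v = y).
\end{equation}

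For $R = \hat{\mathcal{S}}_{c\bar{c}}(\gamma_0)$, the first inequality of Proposition~\ref{prop:threshold} says the left-hand side is strictly positive. Hence the right-hand side is strictly positive, which is the first claimed inequality. For $R = \hat{\mathcal{S}}_{\bar{c}c}(\xi_0)$, the same identity with the roles of $t$ and $v$ swapped, together with the second inequality of Proposition~\ref{prop:threshold}, gives the second claim.

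The only delicate point is well-definedness of the conditioning. The argument needs $\mathbb{P}(x \in R) > 0$ for both regions at the baseline margins. This is implicit in Proposition~\ref{prop:threshold}, since its strict inequalities are stated for these conditional probabilities and so presuppose that the regions are non-empty. I would state this explicitly rather than re-derive it. Beyond that, the step is pure bookkeeping: the doubly-incorrect mass $\mathcal{S}_{\bar{c}\bar{c}}$ never appears, and the doubly-correct mass cancels exactly. The theorem is therefore essentially a reformulation of the proposition's accuracy gap as a gap between correction mass and miscorrection mass.
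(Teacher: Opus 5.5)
Your proposal is correct and follows the paper's proof essentially step for step. The paper also splits $\mathbb{P}(\hat{y}_t = y \mid C)$ and $\mathbb{P}(\hat{y}_v = y \mid C)$ over the jointly-correct event, subtracts to cancel it, and then applies Proposition~\ref{prop:threshold} to each of the two regions; your explicit note that both regions need positive mass for the conditioning to be defined is a sensible addition that the paper leaves implicit.
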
 \vspace{-3mm}
Figure~\ref{fig:observation} (right) illustrates this net-positive correction signal: as samples move toward the center of each mode in the $(z_t, z_v)$ plane, the density of corrections increases while that of miscorrections diminishes. This suggests that choosing sufficiently strict margins $\gamma_{\text{strict}}, \xi_{\text{strict}}$ beyond the baseline can simultaneously maximize corrections and minimize miscorrections. The following corollary formalizes this margin-driven trade-off:


\begin{corollary}[\textbf{Margin-Driven Strict Dominance}]
\label{corollary:strict_dominance}
For sufficiently strict margins $\gamma_{\text{strict}},\xi_{\text{strict}}$, the ratio of true correction to miscorrection increases compared to the baseline margins $\gamma_0,\xi_0$:
\begin{align}
    \frac{\mathbb{P}\big(\hat{y}_t = y \land \hat{y}_v \ne y \mid x \in \hat{\mathcal{S}}_{c\bar{c}}(\gamma_{\text{strict}})\big)}{\mathbb{P}\big(\hat{y}_t \ne y \land \hat{y}_v = y \mid x \in \hat{\mathcal{S}}_{c\bar{c}}(\gamma_{\text{strict}})\big)} &\ge \frac{\mathbb{P}\big(\hat{y}_t = y \land \hat{y}_v \ne y \mid x \in \hat{\mathcal{S}}_{c\bar{c}}(\gamma_0)\big)}{\mathbb{P}\big(\hat{y}_t \ne y \land \hat{y}_v = y \mid x \in \hat{\mathcal{S}}_{c\bar{c}}(\gamma_0)\big)} > 1.
\end{align}
An analogous result holds for $\hat{\mathcal{S}}_{\bar{c}c}(\cdot)$ with thresholds $\xi_0$ and $\xi_{\text{strict}}$.
\end{corollary}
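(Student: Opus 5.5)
We prove the statement for $\hat{\mathcal{S}}_{c\bar{c}}(\cdot)$; the $\hat{\mathcal{S}}_{\bar{c}c}(\cdot)$ case follows by exchanging the roles of $t$ and $v$ and replacing $\Delta$ by $-\Delta$.

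\textbf{Reduction to marginal accuracies.} Fix a margin $\gamma$ and condition on $x \in \hat{\mathcal{S}}_{c\bar{c}}(\gamma)$. Write $a=\mathbb{P}(\hat{y}_t = y \land \hat{y}_v = y)$, $b=\mathbb{P}(\hat{y}_t = y \land \hat{y}_v \ne y)$ and $c=\mathbb{P}(\hat{y}_t \ne y \land \hat{y}_v = y)$, all conditional on this region. Then $\mathbb{P}(\hat{y}_t=y)=a+b$ and $\mathbb{P}(\hat{y}_v=y)=a+c$, so
\[
b-c=\mathbb{P}(\hat{y}_t=y)-\mathbb{P}(\hat{y}_v=y).
\]
The correction-minus-miscorrection gap therefore equals the accuracy gap. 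Theorem~\ref{thm:correction} is exactly this identity at $\gamma_0$ combined with Proposition~\ref{prop:threshold}, and it gives the right-hand inequality $b(\gamma_0)/c(\gamma_0)>1$.

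\textbf{Showing the ratio $R(\gamma)=b(\gamma)/c(\gamma)$ does not decrease.} Decompose $\hat{\mathcal{S}}_{c\bar{c}}(\gamma)$ into level sets of $(z_t,z_v)$. On each level set, use a working conditional-independence assumption (given the cut values, the two models' correctness events are independent). Writing $p_i(z_i)=\mathbb{P}(\hat{y}_i=y\mid z_i)$, the pointwise ratio is
\[
r(z_t,z_v)=\frac{p_t(z_t)\bigl(1-p_v(z_v)\bigr)}{\bigl(1-p_t(z_t)\bigr)\,p_v(z_v)}.
\]
This is the product of the odds of $\theta_t$ with the inverse odds of $\theta_v$. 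By the assumed monotonicity of $p_i$, $r$ is nonincreasing in $z_t$ and nondecreasing in $z_v$, hence nondecreasing along the direction of $\Delta=z_v-z_t$.

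Then $R(\gamma)$ is a ratio of integrals, $\int b\,d\mu \big/ \int c\,d\mu$ over $\{\Delta\ge\gamma\}$. Equivalently, it is a $c$-weighted average of $r$ over that region. Raising $\gamma$ from $\gamma_0$ to $\gamma_{\text{strict}}$ removes the band $\gamma_0\le\Delta<\gamma_{\text{strict}}$. If $r$ on the removed band is no larger than $R(\gamma_{\text{strict}})$, the weighted average cannot decrease (a mediant argument), so $R(\gamma_{\text{strict}})\ge R(\gamma_0)$.

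\textbf{Main obstacle.} Monotonicity of $r$ in $\Delta$ holds only coordinatewise, not as a function of $\Delta$ alone. A point with small $\Delta$ but very small $z_t$ could have a larger $r$ than a point with large $\Delta$, so the band-removal inequality does not follow automatically. I would handle this through the quantifier ``sufficiently strict'': as $\gamma\to 1$ the region shrinks toward $z_t\approx 0$, $z_v\approx 1$, where $r$ attains its supremum $r^*=\sup r$. Hence $R(\gamma)\to r^*\ge R(\gamma_0)$, and any $\gamma_{\text{strict}}$ close enough to the extreme satisfies the claim.

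If $R(\gamma_0)=r^*$ is possible, equality is allowed by the $\ge$. Alternatively, I would state an explicit regularity assumption, such as $r$ depending on $(z_t,z_v)$ only through $\Delta$, or the stochastic dominance supported by Figure~\ref{fig:observation}, under which the mediant argument holds for every $\gamma_{\text{strict}}\ge\gamma_0$.
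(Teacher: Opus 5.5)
The paper gives no proof of this corollary. Its appendix proves only the difference statements: Theorem~\ref{thm:correction}, and the noisy Theorem~\ref{app:theorem_noise} with $b-c\ge\eta\gamma-\epsilon$. The ratio claim itself rests on Figure~\ref{fig:observation}, so there is no paper argument to match. Your reduction $b-c=\mathbb{P}(\hat y_t=y)-\mathbb{P}(\hat y_v=y)$ is exactly the paper's identity. You are also right that it cannot reach the ratio. Writing $a$ for the both-correct mass, $b/c=(P_t-a)/(P_v-a)$ depends on an overlap that marginal monotonicity leaves free. So some joint hypothesis like your conditional independence is unavoidable, and you should state it as a hypothesis of the corollary rather than a working device. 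From the paper's own results one only gets the guaranteed bound $b/c\ge(1+g)/(1-g)$, with $g=\eta\gamma-\epsilon\in(0,1)$, using $b+c\le 1$. That bound grows with $\gamma$, but it compares bounds, not the actual ratios.

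The genuine gap is your limit step. The claim that ``as $\gamma\to 1$ the region shrinks toward $(0,1)$, where $r$ attains $\sup r$'' silently assumes two things: that $\mathbb{P}(\Delta\ge\gamma)>0$ for every $\gamma<1$, and that $r$ approaches $r(0,1)$ along the support. The latter needs continuity of $p_t$ at $0$ and $p_v$ at $1$, or an atom at the corner. Nothing in the setting guarantees either. If $\Delta$ has essential supremum $\Delta_{\max}<1$, stricter margins either empty the region or concentrate it on $\{\Delta=\Delta_{\max}\}$, where $r$ need not be large. The statement then fails even under your independence assumption and exact monotonicity.

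Here is a counterexample. Take two equally likely cut profiles $A=(z_t,z_v)=(0,0.3)$ and $B=(0.5,0.9)$, with $p_t(0)=0.99$, $p_t(0.5)=0.6$, $p_v(0.3)=0.5$, $p_v(0.9)=0.4$, extended monotonically. At $\gamma_0=0.3$ the Proposition holds ($0.795>0.45$), and $R(\gamma_0)=0.855/0.165\approx 5.2$. Yet every $\gamma\in(0.3,0.4]$ gives $R=r(B)=2.25$, and every larger $\gamma$ gives an empty region. So ``sufficiently strict'' cannot carry the argument, and you need an explicit extra hypothesis. Two options work:
\begin{itemize}
\item Assume positive mass at or arbitrarily near $(0,1)$, with the needed continuity. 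With an atom at $(0,1)$ you may simply take $\gamma_{\text{strict}}=1$: then $R(1)=r(0,1)\ge r$ everywhere, hence $R(1)\ge R(\gamma_0)$.
\item Adopt your second option, that $r$ is a nondecreasing function of $\Delta$ alone. An example is $\mathrm{logit}\,p_i(z)=\alpha_i-\kappa z$ with a common slope $\kappa>0$, which gives $\log r=\alpha_t-\alpha_v+\kappa\Delta$. Under this hypothesis your mediant argument is complete for every $\gamma_{\text{strict}}\ge\gamma_0$ with nonempty region.
\end{itemize}
The appeal to a stochastic dominance ``supported by Figure~\ref{fig:observation}'' is not a usable proof ingredient as stated.
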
\vspace{-2mm}

\textbf{From Theory to Practice.}~~
Corollary~\ref{corollary:strict_dominance} shows that sufficiently enlarging the cut margin amplifies the dominance of true correction over miscorrection. In practice, however, selecting optimal thresholds $\gamma$ and $\xi$ is infeasible in a fully unlabeled setting. To bridge this gap, we replace hard thresholding with a soft gating mechanism that assigns supervision weights monotonically increasing in the cut margin $|\Delta(x)|$, prioritizing samples with a larger reliability gap. For simplicity, we adopt a symmetric formulation with $\gamma = \xi$, reducing the gating mechanism to a single margin-based criterion.

\section{Training with Reliability-Gated Supervision} \label{sec:method}
\subsection{Estimating Prediction Reliability via Cut Statistics} \label{sec:cut_computation}

To quantify prediction reliability via the cut statistic defined in \eqref{eq:cut_original}, we first characterize the local geometry of the target domain under both $\theta_t$ and $\theta_v$. Let $\phi_i \colon \mathcal{X} \to \mathbb{R}^{d_i}$ denote the feature extractor for model $i \in \{t, v\}$, mapping inputs into a latent space where pairwise distances are computed.\footnote{In our experiments, $\phi_i(x)$ corresponds to the output of the penultimate layer of each model (i.e., pre-logits).} Using these representations, we construct a local neighborhood graph by identifying the $K$-nearest neighbors $\mathcal{N}_K(x)$ for each sample $x$ under a standard distance metric $d$. To mitigate spurious connections -- particularly for samples located in low-density regions -- we restrict the neighborhood to \textit{mutual} neighbors, i.e., $\mathcal{N}_K^{\text{mut}}(x) = \{x' \mid x' \in \mathcal{N}_K(x) \land x \in \mathcal{N}_K(x')\}$.
Then, we construct a distance-weighted cut statistic for each model $i \in \{t, v\}$ as follows
\begin{equation} \label{eq:cut_estimate}
    \hat{z}_i(x) = \frac{\sum_{x' \in \mathcal{N}_K^{\text{mut}}(x)} \exp(-d(\phi_i(x), \phi_i(x'))) \cdot \mathbbm{1}[\hat{y}_i(x) \neq \hat{y}_i(x')]}{\sum_{x' \in \mathcal{N}_K^{\text{mut}}(x)} \exp(-d(\phi_i(x), \phi_i(x')))}.
\end{equation}
Here, $w(x, x') \propto \exp(-d(\phi_i(x), \phi_i(x')))$ and normalization ensures that cut statistics remain comparable across regions with different sample densities. Finally, to translate the cut statistic into an intuitive measure of prediction reliability, we apply an exponential mapping $r_i(x) = \exp(-\hat{z}_i(x)) \in (0, 1]$, ensuring that highly consistent samples receive scores near $1$.

Naively computing these scores requires an additional forward pass over the entire target data at every epoch, which is prohibitively expensive in SFDA settings where target domains often contain hundreds of thousands of samples. To circumvent this overhead, we reuse the latent features cached during the optimization forward pass of the preceding epoch. To further stabilize the estimates, we apply an exponential moving average across epochs, i.e., $\bar{r}_i^{(e)}(x) = m \cdot \bar{r}_i^{(e-1)}(x) + (1 - m) \cdot r_i^{(e)}(x)$, where $e$ denotes the epoch index and $m \in [0,1)$ is the momentum parameter.
For notational simplicity, we omit the epoch index hereafter and use $\bar{r}_i(x)$ in place of $r_i(x)$; the complete epoch-indexed formulation and algorithm details are provided in Appendix~\ref{appendix:epoch_indexed}. These reliability scores $\bar{r}_t$ and $\bar{r}_v$ serve as the foundation for determining how the two models supervise one another, which we will describe in the next subsection.

\subsection{Reliability-Gated Mutual Correction} \label{sec:mutual_correction}
Given the per-sample reliability estimated from Section~\ref{sec:cut_computation}, we now formulate a novel mutual correction objective where $\theta_t$ and $\theta_v$ iteratively refine each other by utilizing their predictions as pseudo-labels. The central challenge in this cross-model supervision is that we must dynamically determine the \textit{direction} and \textit{strength} of the supervision for each sample while \textit{preventing} reliable predictions from being corrupted by erroneous supervision from the less reliable peer.

\textbf{Margin-based Reliability Gating.}~~
Recall from Section~\ref{sec:observation} that there exists a margin threshold that identifies regions of the target domain where one model can reliably supervise the other. Rather than relying on such discrete thresholds, we adopt a continuous formulation in which the reliability gap directly governs both the \textit{direction} and \textit{strength} of supervision on a per-sample basis. Formally, we define the cross-model \textit{margin-based reliability gating} mechanism:
\begin{equation}\label{eq:gating}
    T_{v \to t}(x) = \sigma(\beta(\bar{r}_v(x)-\bar{r}_t(x))), \quad T_{t \to v}(x) = \sigma(\beta(\bar{r}_t(x)-\bar{r}_v(x)))
\end{equation}
where $\sigma$ denotes the sigmoid function. By the symmetry of the sigmoid, $T_{v \to t}(x) + T_{t \to v}(x) = 1$ holds for all $x$, ensuring that the total supervisory budget is conserved across the two directions. Here, $T_{v \to t}(x)$ represents the weight assigned to the supervisory signal from $\theta_v$ to $\theta_t$ (and vice versa).
The temperature parameter $\beta > 0$ controls the sharpness of the gating mechanism. As $\beta$ increases, the weights become more sensitive to the reliability gap $|\bar{r}_v(x) - \bar{r}_t(x)|$, approaching the hard-threshold behavior implied by Proposition~\ref{prop:threshold}. Conversely, smaller values of $\beta$ produce a smoother transition, allowing the model to remain less committed when the reliability difference is small (e.g., when both models are similarly uncertain or similarly confident).
By applying this gating directly to the mutual correction signal, we obtain the following cross-supervision loss:
\begin{equation}\label{eq:cross}
    \mathcal{L}_{\text{cross}}^{t}(x) = T_{v \to t}(x) \cdot D_{\mathrm{KL}}\bigl(\mathrm{sg}(p_v) \,\|\, p_t\bigr), \quad 
    \mathcal{L}_{\text{cross}}^{v}(x) = T_{t \to v}(x) \cdot D_{\mathrm{KL}}\bigl(\mathrm{sg}(p_t) \,\|\, p_v\bigr),
\end{equation}
where $\mathrm{sg}(\cdot)$ denotes the stop-gradient operator.
As a result, when one model has a clear reliability advantage on a given sample, it naturally assumes a stronger supervisory role, while the influence of the less reliable model is correspondingly attenuated.


\textbf{Temporal Self-Anchoring.}~~
While soft gating in \eqref{eq:cross} effectively mitigates miscorrection, it does not eliminate it entirely. Even small, non-zero gating weights from an unreliable (incorrect) peer can allow residual error signals that can accumulate over successive epochs, gradually degrading predictions that were initially correct. To prevent this, we further introduce a \textit{temporal self-anchoring} mechanism that encourages each model to remain consistent with its own previous predictions, dynamically weighted by its own reliability score:
\begin{align}\label{eq:temporal_anchor}
    \mathcal{L}_{\text{anc}}^{t}(x) = T_{t \to v}(x) \cdot D_{\mathrm{KL}}(\tilde{p}_t \,\|\, p_t), \quad
    \mathcal{L}_{\text{anc}}^{v}(x) = T_{v \to t}(x) \cdot D_{\mathrm{KL}}(\tilde{p}_v \,\|\, p_v)
\end{align}
where $\tilde{p}_t$ and $\tilde{p}_v$ denote the predictions from the previous epoch. Since $T_{t \to v}(x)$ is large only when $\theta_t$ is highly reliable, and symmetrically for $T_{v \to t}(x)$, the anchoring term selectively preserves confident predictions while allowing less reliable ones to be flexibly updated through cross-model supervision.
 
\textbf{Training Objective.}~~
Combining reliability-gated mutual correction in \eqref{eq:cross} with temporal self-anchoring in \eqref{eq:temporal_anchor}, we define the overall training objective for each model as:
\begin{align}
    \mathcal{L}_{t} &= \underbrace{\mathbb{E}_{x} \left[ T_{v \to t} \cdot D_{\mathrm{KL}}(\mathrm{sg}(p_v) \| p_t) + T_{t \to v} \cdot D_{\mathrm{KL}}(\tilde{p}_t \| p_t) \right]}_{\text{sample-wise supervision}} + \underbrace{\lambda_{\mathrm{IIC}} \mathcal{L}_{\mathrm{IIC}} - \lambda_{\mathrm{div}} H(\bar{p}_t)}_{\text{batch-wise regularization}} \\
    \mathcal{L}_{v} &= \underbrace{\mathbb{E}_{x} \left[ T_{t \to v} \cdot D_{\mathrm{KL}}(\mathrm{sg}(p_t) \| p_v) + T_{v \to t} \cdot D_{\mathrm{KL}}(\tilde{p}_v \| p_v) \right]}_{\text{sample-wise supervision}} + \underbrace{\lambda_{\mathrm{IIC}} \mathcal{L}_{\mathrm{IIC}} - \lambda_{\mathrm{div}} H(\bar{p}_v)}_{\text{batch-wise regularization}}
\end{align}
where $\bar{p}_{t}$ and $\bar{p}_{v}$ denote the mean predictions over the current batch. 
While the sample-wise supervision drives local, reliability-aware updates, we complement it with two batch-level regularizers that impose global structure. The IIC loss $\mathcal{L}_{\mathrm{IIC}}$~\cite{iic_ji_iccv} maximizes the mutual information between $p_t$ and $p_v$ aggregated over the batch, aligning the prediction distributions of the two different views. This is particularly valuable on samples where the reliability gap $|\bar{r}_t - \bar{r}_v|$ is small, where the cross-supervision signal becomes nearly directionless. In addition, the diversity regularizer~\cite{shot_liang_icml} maximizes the entropy of $\bar{p}_i$, preventing collapse to a few dominant classes.

\section{Experiments}\label{sec:experiments}
\subsection{Experimental Setup}
\textbf{Settings.}~~
We evaluate \proposed against state-of-the-art baselines on four benchmarks: \textbf{Office-31}~\cite{office31_saenko_eccv} and
\textbf{Office-Home}~\cite{officehome_venkateswara_cvpr} as small- and medium-scale datasets, and \textbf{DomainNet-126}~\cite{domainnet_saito_iccv} and \textbf{VisDA}~\cite{visda_peng_arxiv} as large-scale datasets (details in Appendix~\ref{app:datasets}). We adopt ResNet-50~\cite{resnet_he_cvpr} as the backbone for the first three datasets and ResNet-101 for VisDA, under a closed-set SFDA protocol. Following recent works~\cite{difo_tang_cvpr,prode_tang_iclr}, we use CLIP~\cite{clip_radford_icml} with ViT-B/32 as the peer model in most experiments, applying CoOp~\cite{coop_zhou_ijcv} for prompt learning with a frozen image encoder. Beyond this default setup, we additionally evaluate \proposed in two practical scenarios: 
(i) replacing CLIP with a non-CLIP peer to test peer generality, and
(ii) privacy-constrained medical settings using Camelyon17 from the
\textbf{WILDS} benchmark~\cite{wilds_sagawa_arxiv} and diabetic retinopathy
adaptation from EyePACS~\cite{cuadros_eyepacs_journal} to APTOS 2019.

\subsection{Performance Evaluation}
\begin{wraptable}{R}{0.58\textwidth}
    \centering
    \vspace{-30pt}
    \scriptsize
    \caption{Closed-set SFDA results (Acc, \%) on \textbf{Office-31}.} \vspace{-1mm}
    \label{tab:office31}
    \setlength{\tabcolsep}{0.8pt} 
    \renewcommand{\arraystretch}{1.2}
    \begin{tabularx}{\linewidth}{lc | *{6}{C} | C}
        \hline
        Method & Venue & \tiny{A$\to$D} & \tiny{A$\to$W} & \tiny{D$\to$A} & \tiny{D$\to$W} & \tiny{W$\to$A} & \tiny{W$\to$D} & \tiny{Avg.} \\
        \hline
        Source  & --     & 79.1 & 76.5 & 59.9 & 95.5 & 61.4 & 98.8 & 78.5 \\
        \hline
        SHOT    & ICML20 & 94.4 & 88.4 & 73.2 & 96.7 & 74.9 & 99.8 & 87.9{\textcolor{blue}{\tiny /88.6}} \\
        NRC     & NIPS21 & 92.4 & 94.1 & 75.6 & 97.2 & 74.6 & 98.6 & 88.7{\textcolor{blue}{\tiny /89.4}} \\
        AaD     & NIPS22 & 87.6 & 86.5 & 58.9 & 90.6 & 58.7 & 90.6 & 78.8{\textcolor{blue}{\tiny /89.9}} \\
        AdaCon  & CVPR22 & 87.8 & 86.3 & 71.3 & 96.7 & 67.2 & 97.6 & 84.5{\textcolor{blue}{\tiny /--}} \\
        CoWA    & ICML22 & 88.0 & 90.9 & 74.6 & 98.1 & 75.8 & 99.4 & 87.8{\textcolor{blue}{\tiny /90.3}} \\
        PLUE    & CVPR23 & 89.4 & 89.2 & 70.6 & 97.1 & 67.8 & 96.6 & 85.1{\textcolor{blue}{\tiny /--}} \\
        ELR     & ICLR23 & 92.2 & 92.2 & 76.5 & 97.4 & 76.1 & 99.6 & 89.0{\textcolor{blue}{\tiny /89.6}} \\
        DIFO-V  & CVPR24 & 96.2 & 95.0 & 83.2 & 96.1 & 83.0 & 99.2 & 92.1{\textcolor{blue}{\tiny /92.5}} \\
        ProDe-V & ICLR25 & 96.8 & \textbf{\textcolor{red}{96.6}} & 82.9 & 97.2 & 82.0 & 99.0 & 92.4{\textcolor{blue}{\tiny /92.6}} \\
        \hline
        \rowcolor{gray!35}
        \proposed    & NIPS26     & \textbf{\textcolor{red}{98.6}} & 95.2 & \textbf{\textcolor{red}{83.9}} & \textbf{\textcolor{red}{98.2}} & \textbf{\textcolor{red}{83.3}} & \textbf{\textcolor{red}{100.0}} & \textbf{\textcolor{red}{93.2}} \\
        \hline
    \end{tabularx}
    \vspace{-10pt}
\end{wraptable}
\begin{table*}[h] 
\centering
\scriptsize
\newcolumntype{A}{>{\centering\arraybackslash}p{0.95cm}}
\newcolumntype{M}{>{\raggedright\arraybackslash}p{0.9cm}}
\newcolumntype{V}{>{\centering\arraybackslash}p{0.9cm}}
\caption{Comparison on the Office-Home and VisDA datasets~(Acc, \%).} \vspace{-1.5mm}
\label{tab:officehome_visda}
\setlength{\tabcolsep}{1.8pt} 
\renewcommand{\arraystretch}{1.1}
\begin{tabularx}{\textwidth}{MV | *{12}{C} A | A} 
\toprule
\multirow{2}{*}{Method} & \multirow{2}{*}{Venue} & \multicolumn{13}{c|}{Office-Home} & VisDA \\
\cmidrule(lr){3-15} \cmidrule(l){16-16}
& & {\tiny Ar$\to$Cl} & {\tiny Ar$\to$Pr} & {\tiny Ar$\to$Rw} & {\tiny Cl$\to$Ar} & {\tiny Cl$\to$Pr} & {\tiny Cl$\to$Rw} & {\tiny Pr$\to$Ar} & {\tiny Pr$\to$Cl} & {\tiny Pr$\to$Rw} & {\tiny Rw$\to$Ar} & {\tiny Rw$\to$Cl} & {\tiny Rw$\to$Pr} & {\tiny Avg.} & {\tiny Sy$\to$Re} \\
\midrule
Source   & -- & 43.6 & 67.0 & 73.9 & 49.9 & 60.2 & 62.6 & 51.7 & 40.9 & 72.6 & 64.1 & 46.3 & 78.1 & 59.2 & 49.1 \\
\midrule
SHOT     & ICML20 & 55.2 & 78.5 & 81.3 & 68.9 & 79.0 & 78.8 & 68.2 & 54.3 & 81.7 & 73.5 & 59.1 & 83.6 & 71.8{\textcolor{blue}{\tiny /71.8}} & 82.3{\textcolor{blue}{\tiny /82.9}} \\
NRC      & NIPS21 & 54.7 & 77.1 & 79.4 & 64.7 & 76.2 & 75.3 & 63.3 & 53.6 & 78.8 & 70.2 & 56.5 & 82.5 & 69.4{\textcolor{blue}{\tiny /72.2}} & 83.4{\textcolor{blue}{\tiny /85.9}} \\
AaD      & NIPS22 & 57.7 & 78.5 & 80.8 & 68.0 & 79.6 & 78.9 & 66.5 & 56.6 & 81.6 & 70.9 & 57.3 & 83.7 & 71.7{\textcolor{blue}{\tiny /72.7}} & 78.8{\textcolor{blue}{\tiny /88.0}} \\
AdaCon   & CVPR22 & 49.5 & 76.1 & 77.0 & 63.5 & 73.2 & 74.4 & 60.7 & 47.1 & 77.9 & 67.8 & 49.7 & 80.3 & 66.4{\textcolor{blue}{\tiny /--}} & 67.2{\textcolor{blue}{\tiny /78.7}} \\
CoWA     & ICML22 & 56.7 & 79.0 & 80.9 & 68.6 & 78.7 & 79.8 & 68.1 & 55.6 & 82.6 & 72.4 & 61.7 & 83.6 & 72.3{\textcolor{blue}{\tiny /72.5}} & 84.3{\textcolor{blue}{\tiny /86.9}} \\
PLUE     & CVPR23 & 46.3 & 61.9 & 71.6 & 59.2 & 63.7 & 65.8 & 58.0 & 44.3 & 72.1 & 63.0 & 49.2 & 76.6 & 61.0{\textcolor{blue}{\tiny /--}} & 86.6{\textcolor{blue}{\tiny /90.0}} \\
ELR      & ICLR23 & 55.9 & 77.6 & 79.9 & 61.4 & 76.8 & 76.5 & 62.4 & 53.3 & 80.9 & 68.9 & 58.7 & 83.6 & 69.7{\textcolor{blue}{\tiny /72.6}} & 85.8{\textcolor{blue}{\tiny /85.8}} \\
DIFO-V   & CVPR24 & 70.7 & 90.5 & 89.2 & 82.4 & 90.2 & 89.2 & 80.6 & 71.1 & 89.4 & 82.9 & 71.0 & 91.0 & 83.2{\textcolor{blue}{\tiny /83.1}} & 89.8{\textcolor{blue}{\tiny /90.3}} \\
ProDe-V  & ICLR25 & 71.8 & 92.2 & 90.5 & 82.7 & 91.6 & 90.6 & 82.4 & 72.8 & 90.8 & 82.8 & 72.7 & 92.0 & 84.4{\textcolor{blue}{\tiny /84.5}} & 90.6{\textcolor{blue}{\tiny /91.0}} \\
\midrule
\rowcolor{gray!35}
\proposed     & NIPS26     & \textbf{\textcolor{red}{76.9}} & \textbf{\textcolor{red}{93.0}} & \textbf{\textcolor{red}{91.9}} & \textbf{\textcolor{red}{85.9}} & \textbf{\textcolor{red}{92.5}} & \textbf{\textcolor{red}{91.6}} & \textbf{\textcolor{red}{85.0}} & \textbf{\textcolor{red}{76.7}} & \textbf{\textcolor{red}{92.0}} & \textbf{\textcolor{red}{85.7}} & \textbf{\textcolor{red}{76.1}} & \textbf{\textcolor{red}{93.2}} & \textbf{\textcolor{red}{86.7}} & \textbf{\textcolor{red}{90.9}} \\
\bottomrule
\end{tabularx}
\vspace{0em}

\caption{Comparison on the DomainNet dataset (Acc, \%).} \vspace{-1.5mm}
\label{tab:domainnet}
\setlength{\tabcolsep}{2pt}
\renewcommand{\arraystretch}{1.1}
\begin{tabularx}{\textwidth}{lc | *{12}{C} A} 
\toprule
Method & Venue & \multicolumn{13}{c}{DomainNet} \\
\cmidrule(lr){3-15}
& & {\tiny C$\to$P} & {\tiny C$\to$R} & {\tiny C$\to$S} & {\tiny P$\to$C} & {\tiny P$\to$R} & {\tiny P$\to$S} & {\tiny R$\to$C} & {\tiny R$\to$P} & {\tiny R$\to$S} & {\tiny S$\to$C} & {\tiny S$\to$P} & {\tiny S$\to$R} & {\tiny Avg.} \\
\midrule
Source   & --     & 44.7 & 59.9 & 47.5 & 53.3 & 75.3 & 46.2 & 55.1 & 62.6 & 46.4 & 55.2 & 50.8 & 59.6 & 54.7 \\
\midrule
SHOT     & ICML20 & 63.1 & 78.6 & 60.1 & 68.4 & 80.7 & 62.0 & 68.0 & 68.3 & 58.7 & 70.8 & 64.9 & 77.7 & 68.4{\textcolor{blue}{\tiny /--}} \\
NRC      & NIPS21 & 60.0 & 75.7 & 55.4 & 59.6 & 80.8 & 58.0 & 60.0 & 67.3 & 54.3 & 66.9 & 65.0 & 77.7 & 65.1{\textcolor{blue}{\tiny /--}} \\
AaD      & NIPS22 & 62.4 & 79.1 & 59.3 & 67.1 & 81.6 & 62.6 & 69.7 & 66.7 & 58.7 & 71.1 & 63.6 & 79.7 & 68.5{\textcolor{blue}{\tiny /--}} \\
AdaCon   & CVPR22 & 56.9 & 64.1 & 51.9 & 57.5 & 64.9 & 52.9 & 60.7 & 64.4 & 51.5 & 62.7 & 60.1 & 61.1 & 59.0{\textcolor{blue}{\tiny /--}} \\
CoWA     & ICML22 & 64.7 & 80.6 & 63.0 & 70.1 & 80.6 & 64.3 & 70.5 & 69.5 & 63.1 & 71.7 & 67.3 & 80.0 & 70.5{\textcolor{blue}{\tiny /--}} \\
PLUE     & CVPR23 & 55.5 & 62.0 & 53.9 & 57.2 & 68.4 & 53.0 & 58.5 & 61.8 & 52.3 & 60.8 & 58.5 & 65.6 & 58.9{\textcolor{blue}{\tiny /--}} \\
ELR      & ICLR23 & 64.8 & 76.1 & 61.3 & 66.7 & 81.8 & 63.7 & 69.1 & 69.2 & 62.0 & 72.3 & 66.8 & 77.5 & 69.3{\textcolor{blue}{\tiny /79.4}} \\
DIFO-V   & CVPR24 & 76.5 & 87.8 & 75.4 & 78.8 & 87.7 & 75.7 & 80.0 & 77.3 & 75.6 & 79.2 & 76.4 & 87.8 & 79.9{\textcolor{blue}{\tiny /80.0}} \\
ProDe-V  & ICLR25 & 80.9 & 91.5 & 77.4 & 84.1 & 91.3 & 76.9 & 84.2 & 81.3 & 76.6 & 84.0 & 81.2 & 91.2 & 83.4{\textcolor{blue}{\tiny /85.0}} \\
\midrule
\rowcolor{gray!35}
\proposed     & NIPS26     & \textbf{\textcolor{red}{82.8}} & \textbf{\textcolor{red}{92.3}} & \textbf{\textcolor{red}{81.8}} & \textbf{\textcolor{red}{86.2}} & \textbf{\textcolor{red}{92.2}} & \textbf{\textcolor{red}{81.0}} & \textbf{\textcolor{red}{86.9}} & \textbf{\textcolor{red}{82.9}} & \textbf{\textcolor{red}{81.7}} & \textbf{\textcolor{red}{86.6}} & \textbf{\textcolor{red}{83.0}} & \textbf{\textcolor{red}{92.3}} & \textbf{\textcolor{red}{85.8}} \\
\bottomrule
\end{tabularx}
\vspace{-1em}
\end{table*}

\begin{table*}[h]
    \centering
    \scriptsize
    \caption{Mutual correction between two ResNet-50 models from different source domains (Office-Home, Acc, \%). Bef./Aft.: accuracy before/after adaptation.} \vspace{-1.5mm}
    \label{tab:symmetric}
    \renewcommand{\arraystretch}{1.3}
    \setlength{\tabcolsep}{1.5pt}
    \newcolumntype{N}{>{\centering\arraybackslash}p{0.38cm}}
    \newcolumntype{D}{>{\centering\arraybackslash}p{0.50cm}}
    \begin{tabular}{NN|DD|DD || NN|DD|DD || NN|DD|DD || NN|DD|DD}
        \toprule
        \multicolumn{6}{c|}{Target: \textbf{Ar}} & 
        \multicolumn{6}{c|}{Target: \textbf{Cl}} & 
        \multicolumn{6}{c|}{Target: \textbf{Pr}} & 
        \multicolumn{6}{c}{Target: \textbf{Rw}} \\
        \cmidrule(lr){1-6} \cmidrule(lr){7-12} \cmidrule(lr){13-18} \cmidrule(lr){19-24}
        & & \multicolumn{2}{c|}{$\theta_1$} & \multicolumn{2}{c||}{$\theta_2$} & 
        & & \multicolumn{2}{c|}{$\theta_1$} & \multicolumn{2}{c||}{$\theta_2$} & 
        & & \multicolumn{2}{c|}{$\theta_1$} & \multicolumn{2}{c||}{$\theta_2$} & 
        & & \multicolumn{2}{c|}{$\theta_1$} & \multicolumn{2}{c}{$\theta_2$} \\
        $\theta_1$ & $\theta_2$ & Bef. & Aft. & Bef. & Aft. & 
        $\theta_1$ & $\theta_2$ & Bef. & Aft. & Bef. & Aft. & 
        $\theta_1$ & $\theta_2$ & Bef. & Aft. & Bef. & Aft. & 
        $\theta_1$ & $\theta_2$ & Bef. & Aft. & Bef. & Aft. \\
        \midrule
        Cl & Pr & 49.9 & 72.2 & 51.7 & 72.2 & 
        Ar & Pr & 43.6 & 59.8 & 40.9 & 59.9 & 
        Ar & Cl & 67.0 & 81.3 & 60.2 & 81.2 & 
        Ar & Cl & 73.9 & 83.1 & 62.6 & 83.1 \\
        \rowcolor{gray!35}
        Cl & Rw & 49.9 & \textcolor{red}{\textbf{75.8}} & 64.2 & \textcolor{red}{\textbf{75.8}} & 
        Ar & Rw & 43.6 & \textcolor{red}{\textbf{61.7}} & 46.3 & \textcolor{red}{\textbf{61.6}} & 
        Cl & Rw & 60.2 & \textcolor{red}{\textbf{86.4}} & 78.1 & \textcolor{red}{\textbf{86.4}} & 
        Ar & Pr & 73.9 & \textcolor{red}{\textbf{84.3}} & 72.6 & \textcolor{red}{\textbf{84.2}} \\
        Pr & Rw & 51.7 & 74.4 & 64.2 & 74.4 & 
        Pr & Rw & 40.9 & 60.4 & 46.3 & 60.4 & 
        Ar & Rw & 67.0 & 83.0 & 78.1 & 83.0 & 
        Cl & Pr & 62.6 & 83.9 & 72.6 & 83.9 \\
        \bottomrule
    \end{tabular}
   \vspace{-3mm}
\end{table*}

In all benchmark experiments, we report the \textit{last-epoch performance}. For a fair comparison, the best results reported in the literature are highlighted in {\textcolor{blue}{blue}} alongside our reproduced baseline numbers (detailed per-task results are provided in Appendix~\ref{appendix:paper_best}). As shown in Table~\ref{tab:office31}, \proposed achieves an average accuracy of 93.2\% on Office-31, outperforming all reproduced baselines in 5 out of 6 adaptation tasks (+0.8\% over reproduced ProDe-V) and exceeding the best reported result by +0.6\%. Tables~\ref{tab:officehome_visda} and~\ref{tab:domainnet} demonstrate larger gains on Office-Home and DomainNet-126, with improvements of +2.3\% and +2.4\% over reproduced ProDe-V, and +2.2\% and +0.8\% over the paper-reported best results, respectively. On VisDA, our method remains competitive with the state of the art (90.9\% vs. 91.0\% paper-reported; +0.3\% over reproduced ProDe-V). We attribute these consistent improvements across benchmarks to the reliability-based gating mechanism, which filters out miscorrections and mitigates the confirmation bias inherent in naive cross-supervision.

\textbf{Peer Model Generality.}~~
To verify that \proposed does not rely on vision-language pretraining, we replace the CLIP peer model with a second ResNet-50 pretrained on a different source domain. Since the two models (denoted as $\theta_1$ and $\theta_2$) are trained on distinct source distributions, they naturally exhibit complementary failure modes on the target domain (i.e., the condition required by \proposed). As shown in Table~\ref{tab:symmetric}, both models improve consistently across all 12 source domain pairs, regardless of the initial performance gap between $\theta_1$ and $\theta_2$. Notably, the adapted performance surpasses non-CLIP SFDA baselines (e.g., SHOT, NRC, AaD, ELR) in most combinations, confirming that \proposed serves as a general-purpose correction framework applicable to diverse knowledge sources as long as they exhibit distinct failure modes with the source-pretrained model.

\subsection{Additional Analysis}
\begin{wraptable}{R}{0.55\textwidth}
    \centering
    \vspace{-10pt}
    \scriptsize
    \caption{Ablation and sensitivity analysis (Avg.\ Acc, \%).
    $\mathcal{L}_{\mathrm{c}}$: cross-supervision KL,
    $\mathcal{L}_{\mathrm{a}}$: temporal anchoring.}
    \label{tab:analysis}
    \renewcommand{\arraystretch}{1.1}
    \begin{minipage}[t]{0.52\linewidth}
        \setlength{\tabcolsep}{1pt}
        \textbf{(a) Component ablation}\\[2pt]
        \begin{tabularx}{\linewidth}{X | ccc}
            \hline
                 & O31 & OH & DN \\
            \hline
            $\mathcal{L}_{\mathrm{c}}\!=\!0, \mathcal{L}_{\mathrm{a}}\!=\!0$ & 91.2 & 81.8 & 72.1 \\
            $\mathcal{L}_{\mathrm{c}}\!=\!0, \mathcal{L}_{\mathrm{a}}\!=\!1$ & 91.2 & 83.3 & 78.7 \\
            $\mathcal{L}_{\mathrm{c}}\!=\!1, \mathcal{L}_{\mathrm{a}}\!=\!0$ & 92.4 & 86.0 & 83.6 \\
            $\mathcal{L}_{\mathrm{c}}\!=\!1, \mathcal{L}_{\mathrm{a}}\!=\!1$ & 92.1 & 86.2 & 84.3 \\
            \hline
            \rowcolor{gray!35}
            \textbf{{\proposed}} & \textbf{\textcolor{red}{93.2}} & \textbf{\textcolor{red}{86.7}} & \textbf{\textcolor{red}{85.8}} \\
            \hline
        \end{tabularx}
        \par\vspace{5pt}\noindent%
        \textbf{(c) CLIP adaptation}\\[2pt]
        \begin{tabularx}{\linewidth}{X | ccc}
            \hline
            & O31 & OH & DN \\
            \hline
            Zero-shot & 79.4 & 79.0 & 80.2 \\
            \rowcolor{gray!35}
            \hline
            \textbf{{Adapted}}   & \textbf{\textcolor{red}{92.8}} & \textbf{\textcolor{red}{85.8}} & \textbf{\textcolor{red}{84.0}} \\
            \hline
        \end{tabularx}
    \end{minipage}%
    \hfill
    \begin{minipage}[t]{0.44\linewidth}
        \setlength{\tabcolsep}{1pt}
        \textbf{(b) Variant}\\[2pt]
        \begin{tabularx}{\linewidth}{X | ccc}
            \hline
            Variant & O31 & OH & DN \\
            \hline
            Frozen $\theta_v$    & 89.0 & 84.8 & 84.6 \\
            Entropy gating       & 92.5 & 86.3 & 85.0 \\
            \rowcolor{gray!35}
            \textbf{{\proposed}} & \textbf{\textcolor{red}{93.2}} & \textbf{\textcolor{red}{86.7}} & \textbf{\textcolor{red}{85.8}} \\
            Oracle partition $S$ & 96.0 & 93.3 & 92.7 \\
            \hline
        \end{tabularx}
        \par\vspace{6pt}\noindent%
        \textbf{(d) $\beta$ sensitivity (OH)}\\[2pt]
        \begin{tabularx}{\linewidth}{C | C}
            \hline
            $\beta$ & OH \\
            \hline
            2  & 86.7 \\
            4  & 86.6 \\
            \rowcolor{gray!35}
            6  & \textbf{\textcolor{red}{86.7}} \\
            \hline
        \end{tabularx}
    \end{minipage}
    \vspace{-7pt}
\end{wraptable}

\textbf{Ablation Studies.}~~
We analyze the contribution of each component through ablation and sensitivity studies on Office-31, Office-Home, and DomainNet-126. Table~\ref{tab:analysis}(a) ablates each loss term. Here, $0/1$ denote variants of \proposed in which each term is disabled or enabled with uniform (constant) weighting, replacing our reliability-based gating. $\mathcal{L}_{\mathrm{c}}$ provides the primary performance gain, while $\mathcal{L}_{\mathrm{a}}$ improves stability by mitigating noise from pseudo-labels by anchoring to earlier predictions. Combining both with adaptive gating in \proposed achieves the best performance, highlighting their complementary roles and the importance of reliability-aware weighting. Table~\ref{tab:analysis}(b) compares key design variants: freezing $\theta_v$ degrades correction quality, and entropy-based gating underperforms cut-based gating, supporting the use of the cut statistic as a more reliable sample-level signal due to its incorporation of local neighborhood structure. The oracle partition provides an upper bound, indicating that further improvements could be achieved by better identifying relative expertise. Table~\ref{tab:analysis}(c) confirms that ViL adaptation is essential, as zero-shot CLIP significantly underperforms the adapted peer. Table~\ref{tab:analysis}(d) shows robustness to the gating sharpness parameter $\beta$, requiring minimum tuning (full ablation details are provided in Appendix~\ref{appendix:ablation_full}).

\textbf{Training Dynamics.}~~
As a diagnostic, we track ground-truth state transitions and per-group cut dynamics throughout adaptation (Figure~\ref{fig:state_change_and_cut}). The proportion of jointly correct samples $\mathcal{S}_{\mathrm{cc}}$ increases sharply from 56.6\% to 91.7\%, while the one-sided correct sets $\mathcal{S}_{\mathrm{c\bar{c}}}$ and $\mathcal{S}_{\mathrm{\bar{c}c}}$ nearly disappear, indicating effective bidirectional knowledge transfer. The jointly incorrect set $\mathcal{S}_{\mathrm{\bar{c}\bar{c}}}$ also decreases modestly (9.8\%$\to$7.1\%), suggesting gradual improvement through neighborhood generalization. To further assess the reliability signal, we group samples by their initial ground-truth state and track the per-group average cut value over training epochs: samples in $\mathcal{S}_{\mathrm{cc}}$ maintain consistently low cut values, reflecting stable neighborhoods, samples in the one-sided sets show a steady decrease as the models become aligned, and samples in $\mathcal{S}_{\mathrm{\bar{c}\bar{c}}}$ maintain consistent high cut values. This confirms that \proposed assigns low reliability to samples that are difficult to correct. See Appendix~\ref{appendix:state_transition} for additional results.

\begin{figure}[t]
    \centering
    \includegraphics[width=\linewidth]{Contents/Figures/Figure3.pdf}
    \caption{\textbf{Correctness pattern (top) and average cut trajectories (bottom) on Office-Home Cl$\to$Pr.}
    Samples are grouped by their initial state and tracked throughout adaptation.}
    \label{fig:state_change_and_cut}
    \vspace{-1em}
\end{figure}

\begin{wrapfigure}{r}{0.41\textwidth}
    \vspace{-5pt}
    \centering
    \includegraphics[width=\linewidth]{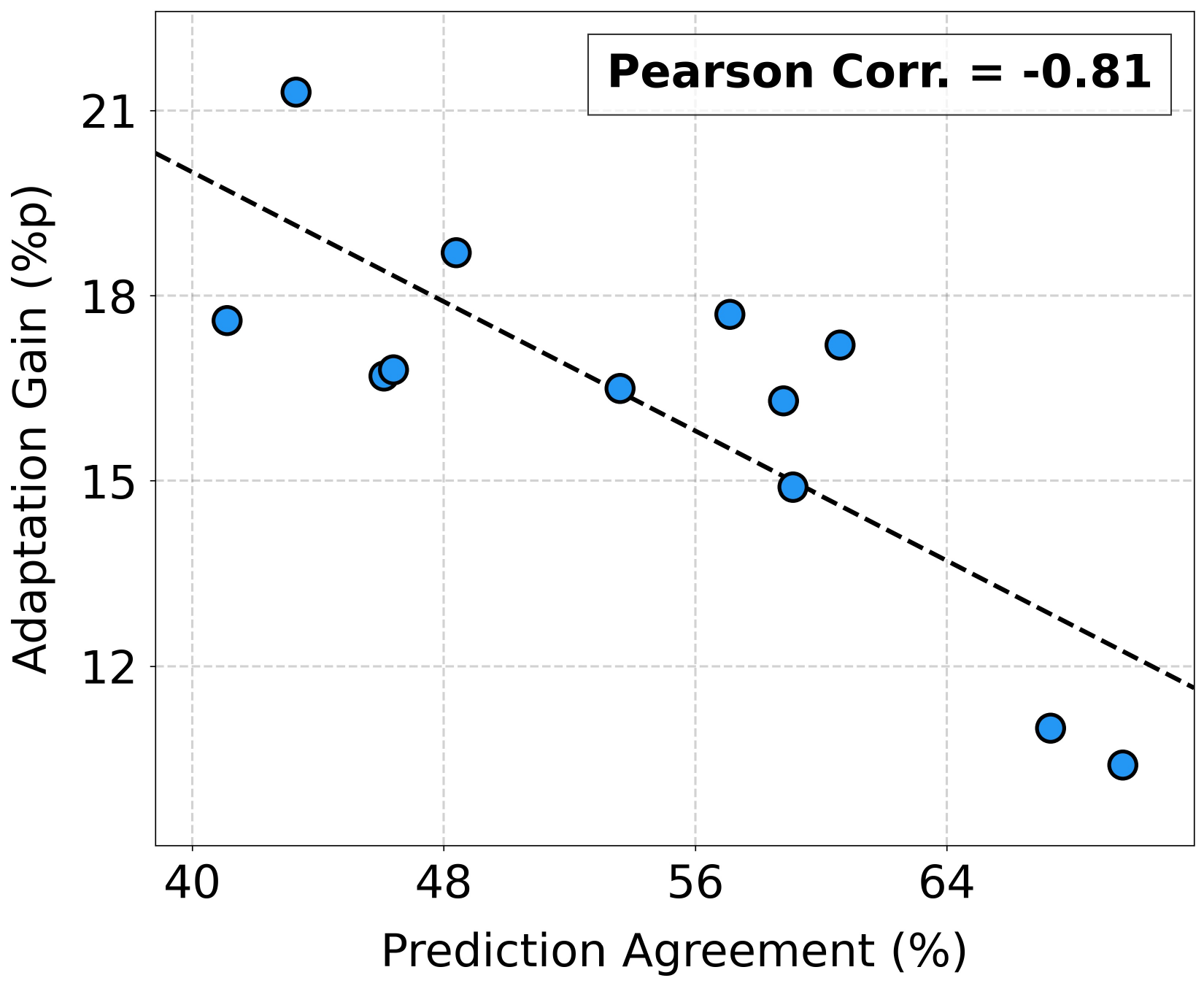}
    \caption{\textbf{Peer complementarity.}
    Prediction agreement vs.\ adaptation gain in the dual-ResNet setting of Table~\ref{tab:symmetric}.}
    \label{fig:peer_complementarity}
\end{wrapfigure}

\begingroup

\textbf{Effect of Peer Complementarity.}
To examine how peer complementarity affects mutual correction, we use the 12 dual-ResNet pairs in Table~\ref{tab:symmetric}. For each pair, we measure the \emph{pre-adaptation prediction agreement}, defined as the fraction of target
samples on which the two models predict the same class, and compare it with the \emph{adaptation gain}, defined as the average accuracy improvement of the two models after \proposed adaptation. As shown in Figure~\ref{fig:peer_complementarity}, higher prediction agreement is strongly associated with smaller adaptation gains (Pearson $r=-0.81$), supporting our motivation that complementary predictions provide more opportunities for cross-model correction. Nevertheless, the gain decreases gracefully as agreement increases: even at the highest observed agreement (69.6\%), SafeCut still yields a +10.4\%p improvement, with no degradation across the evaluated pairs. See Appendix~\ref{appendix:peer_complementarity} for detailed results.

\endgroup

\begin{wraptable}{R}{0.41\textwidth}
    \centering
    \scriptsize
    \caption{SFDA on Camelyon17 (Acc, \%).} \vspace{-1.5mm}
    \label{tab:camelyon17_sfda}
    \renewcommand{\arraystretch}{1.2}
    \setlength{\tabcolsep}{2pt}
    \begin{tabularx}{\linewidth}{l | C C}
        \toprule
        Method & Target & CLIP \\
        \midrule
        Source & 64.14 & 50.01 \\
        \midrule
        ProDe  & 76.27 & 50.00 \\
        $\mathcal{L}_{\mathrm{c}}\!=\!1, \mathcal{L}_{\mathrm{a}}\!=\!0$ & 77.29 & 77.20 \\
        $\mathcal{L}_{\mathrm{c}}\!=\!1, \mathcal{L}_{\mathrm{a}}\!=\!1$ & 87.96 & 87.78 \\
        \rowcolor{gray!35}
        \hline
        \textbf{\proposed} & \textbf{\textcolor{red}{94.96}} & \textbf{\textcolor{red}{94.20}} \\
        \bottomrule
    \end{tabularx}
    \vspace{-10pt}
\end{wraptable}
\textbf{Medical Domain Setup.}~~
Medical imaging is a canonical SFDA setting, where sharing source data is often restricted due to privacy concerns. We evaluate \proposed on Camelyon17 from WILDS benchmark~\cite{wilds_sagawa_arxiv}, training the source model on centers $\{0,3,4\}$, validating on $\{1\}$, and adapting to $\{2\}$. Unlike natural-image benchmarks, zero-shot CLIP transfers poorly to histopathology, even underperforming the source-pretrained target model ($50.01\%$ vs.\ $64.14\%$). In this setting, ProDe achieves only modest improvements on the target model and leaves (updated) CLIP similar to its zero-shot performance (50.00\%, with hyperparameters tuned via grid search; see Appendix~\ref{appendix:prode_camelyon}). This is because ProDe updates CLIP only through $\mathcal{L}_{\text{IIC}}$, providing no direct source-driven supervision; as a result, it fails to improve and continues to provide weak guidance. In contrast, \proposed trains both models through mutual, reliability-gated correction, enabling joint improvement. Using $\mathcal{L}_c$ alone matches ProDe, while adding $\mathcal{L}_a$ further stabilizes training. The complete model (\proposed) achieves $94.96\%$, significantly outperforming ProDe.  These results highlight the importance of reliability-aware mutual correction when the peer model is initially unreliable.

\begin{wraptable}{r}{0.43\textwidth}
    \vspace{-6pt}
    \centering
    \scriptsize

    \caption{EyePACS$\to$APTOS2019 (Acc, \%).}
    \label{tab:aptos_sfda}
    \vspace{-1.5mm}

    \renewcommand{\arraystretch}{1.15}
    \setlength{\tabcolsep}{3pt}

    \begin{tabularx}{\linewidth}{l | C C}
        \toprule
        Method & Target & CLIP \\
        \midrule
        Source & 39.0 & 25.4 \\
        \midrule
        ProDe & 35.2 & 35.3 \\
        \rowcolor{gray!35}
        \textbf{\proposed}
        & \textbf{\textcolor{red}{46.0}}
        & \textbf{\textcolor{red}{45.9}} \\
        \bottomrule
    \end{tabularx}
\end{wraptable}

To further evaluate this weak-peer regime, we additionally consider diabetic retinopathy severity classification, pretraining the source model on EyePACS~\cite{cuadros_eyepacs_journal} and adapting to APTOS 2019. As shown in Table~\ref{tab:aptos_sfda}, zero-shot CLIP achieves only $25.4\%$ accuracy on this 5-class task, indicating a particularly weak initial peer. ProDe reduces the target-model accuracy to $35.2\%$, whereas \proposed improves the target and peer models to $46.0\%$ and $45.9\%$, respectively. Together with the Camelyon17 results, these findings show that \proposed remains effective in specialized medical domains where pretrained ViL models transfer poorly, supporting the benefit of reliability-gated mutual correction beyond natural-image benchmarks.
\vspace{-1em}
\section{Conclusion} 
\label{sec:conclusion} 
\vspace{-1em}
In this paper, we presented \proposed, a novel SFDA framework driven by reliability-gated mutual correction. Unlike existing methods that unconditionally trust external models, we exploit the different failure modes of the target and peer models. Backed by solid theoretical grounding, \proposed uses the cut statistic as a label-free reliability proxy to dynamically control the direction and strength of cross-model supervision. Coupled with temporal self-anchoring, this mechanism amplifies valid corrective signals while mitigating miscorrections and suppressing error accumulation. Extensive experiments confirm that \proposed achieves state-of-the-art performance across multiple benchmarks. Crucially, \proposed not only generalizes across diverse peer models but also excels in challenging domains like medical imaging, demonstrating its versatility as a general-purpose correction framework.
\begin{ack}
We thank the NeurIPS reviewers for their valuable comments and suggestions. This work is supported by the National Research Foundation of Korea (NRF) grant funded by the Korea government (MSIT) (No. RS-2024-00358602) and the Institute of Information \& Communications Technology Planning \& Evaluation (IITP) grant funded by the Korea government (MSIT): Artificial Intelligence Graduate School Program (No. RS-2019-II190079, Korea University), Development of Unified Reasoning Technology Mimicking Human Cognition for Hierarchical Understanding and Unbounded Problem Solving (No. RS-2026-25522672), and Development of AI Co-Scientist based on Scientific Causal World Model (No.RS-2026-25507543).
\end{ack}

\bibliographystyle{unsrt}
\bibliography{bibliography}

\appendix
\clearpage
{\Huge\bfseries Appendix\par}
\vspace{2em}
{\Large\bfseries Table of Contents\par}
\vspace{0.3em}
\hrule
\vspace{1em}

\noindent\textbf{A\quad Proofs}\hfill\textbf{\pageref{app:proof}}\\[0.5em]
\hspace*{1.5em} A.1\quad Theoretical Justification under the Noisy Scenario\hfill\pageref{app:noisy_scenario}\\[1.0em]
\noindent\textbf{B\quad Evaluation Datasets}\hfill\textbf{\pageref{app:datasets}}\\[1.0em]
\noindent\textbf{C\quad Implementation Details}\hfill\textbf{\pageref{app:impl}}\\[0.5em]
\hspace*{1.5em} C.1\quad Epoch-Indexed Formulation\hfill\pageref{appendix:epoch_indexed}\\[1.0em]
\noindent\textbf{D\quad Additional Results}\hfill\textbf{\pageref{app:additional_results}}\\[0.5em]
\hspace*{1.5em} D.1\quad Comparison Against Paper-Reported Best Results\hfill\pageref{appendix:paper_best}\\[0.3em]
\hspace*{1.5em} D.2\quad VisDA Full Results\hfill\pageref{app:visda_full}\\[0.3em]
\hspace*{1.5em} D.3\quad Computational Analysis\hfill\pageref{app:compute}\\[0.3em]
\hspace*{1.5em} D.4\quad Extended Empirical Validation of Cut Statistic Properties\hfill\pageref{appendix:extended_obs}\\[0.3em]
\hspace*{1.5em} D.5\quad Extended Correctness Pattern Transition Results\hfill\pageref{appendix:state_transition}\\
[0.3em]
\hspace*{1.5em} D.6\quad Effect of Peer Complementarity\hfill\pageref{appendix:peer_complementarity}\\
[1.0em]
\noindent\textbf{E\quad Experimental Details}\hfill\textbf{\pageref{app:experimental details}}\\[0.5em]
\hspace*{1.5em} E.1\quad Implementation and Hyperparameter Details\hfill\pageref{appendix:hyperparams}\\[0.3em]
\hspace*{1.5em} E.2\quad ProDe Hyperparameter Search on Camelyon17\hfill\pageref{appendix:prode_camelyon}\\[0.3em]
\hspace*{1.5em} E.3\quad Full Per-Task Ablation Results\hfill\pageref{appendix:ablation_full}\\[1.0em]
\noindent\textbf{F\quad Limitation and Future Directions}\hfill\textbf{\pageref{appendix:limitations}}\\[1.0em]
\noindent\textbf{G\quad Broader Impacts}\hfill\textbf{\pageref{app:broader_impacts}}\\[1.0em]
\noindent\textbf{H\quad Asset Licenses and Terms of Use}\hfill\textbf{\pageref{app:asset_licenses}}\\[1.0em]

\vspace{0.5em}
\hrule
\clearpage

\section{Proofs}
\label{app:proof}

\begin{proposition}[\textbf{Reliability-Separated Regions}]
\label{app:proposition_clean}
There exist baseline margins $\gamma_0, \xi_0$ such that:
\begin{align}
    \mathbb{P}\big(\hat{y}_t = y \mid x \in \hat{\mathcal{S}}_{c\bar{c}}(\gamma_0)\big) &> \mathbb{P}\big(\hat{y}_v = y \mid x \in \hat{\mathcal{S}}_{c\bar{c}}(\gamma_0)\big), \\
    \mathbb{P}\big(\hat{y}_v = y \mid x \in \hat{\mathcal{S}}_{\bar{c}c}(\xi_0)\big) &> \mathbb{P}\big(\hat{y}_t = y \mid x \in \hat{\mathcal{S}}_{\bar{c}c}(\xi_0)\big). 
\end{align}
\end{proposition}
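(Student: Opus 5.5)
The plan is to derive the proposition from the monotonicity assumption stated after Observation~\ref{obs:monotonicity}. Write $a_i(z) := \mathbb{P}(\hat{y}_i = y \mid z_i = z)$ for $i \in \{t,v\}$; by assumption each $a_i$ is non-increasing on $[0,1]$.

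The key additional modelling step is to treat each model's correctness as depending on $x$ only through its own cut value:
\[
\mathbb{P}(\hat{y}_i = y \mid z_t, z_v) = a_i(z_i).
\]
Conditioning on the event $\{\Delta \ge \gamma\}$ then gives
\[
\mathbb{P}\big(\hat{y}_t = y \mid \hat{\mathcal{S}}_{c\bar{c}}(\gamma)\big) - \mathbb{P}\big(\hat{y}_v = y \mid \hat{\mathcal{S}}_{c\bar{c}}(\gamma)\big) = \mathbb{E}\big[a_t(z_t) - a_v(z_v) \,\big|\, z_v - z_t \ge \gamma\big].
\]
So it suffices to find a $\gamma_0$ for which this conditional expectation is strictly positive, and symmetrically a $\xi_0$ for the region $\hat{\mathcal{S}}_{\bar{c}c}(\xi_0)$.

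On the event $z_v \ge z_t + \gamma$, monotonicity gives $a_v(z_v) \le a_v(z_t + \gamma)$. Hence the integrand is at least
\[
g_\gamma(z_t) := a_t(z_t) - a_v(z_t + \gamma).
\]
I would make the two models comparable through a \emph{strict} calibration-gap condition: there exists $\gamma_0$ such that
\[
a_t(z) > a_v(z + \gamma_0) \quad \text{for all admissible } z.
\]
This holds, for instance, when the two accuracy curves differ by at most a bounded offset $\delta$ in cut-space, i.e.\ $a_v(z + \delta) \le a_t(z)$, with strict decrease somewhere; then any $\gamma_0 > \delta$ works. Taking expectations over $z_t$ restricted to the (assumed non-empty, positive-probability) region yields the first inequality. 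The second inequality follows by exchanging the roles of $t$ and $v$ and choosing $\xi_0$ from the mirrored offset.

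A cleaner fallback, if a uniform offset is too strong, is an extreme-margin argument. Take $\gamma_0$ close to $1$, so the region forces $z_t \approx 0$ and $z_v \approx 1$. The two conditional accuracies then approach $a_t(0)$ and $a_v(1)$, and the claim reduces to $a_t(0) > a_v(1)$. This last inequality is exactly ``non-trivial monotonicity'': the most locally consistent predictions of one model beat the least consistent of the other. Right-continuity of conditional probabilities, or simply a positive-mass assumption near the corners of the $(z_t, z_v)$ square, lets us pass to the limit.

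The main obstacle is that monotonicity of each $a_i$ separately does not by itself compare $a_t$ with $a_v$, and it does not control the dependence between $z_t$ and the correctness of $\hat{y}_v$. I therefore expect the proof to rest on two explicitly stated assumptions: the conditional-independence reduction above, and either the bounded offset or $a_t(0) > a_v(1)$ (and its mirror $a_v(0) > a_t(1)$). Verifying that these regions have positive mass, which Figure~\ref{fig:observation} suggests empirically, is the remaining technical point.
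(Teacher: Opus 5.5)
Your proposal is correct and follows essentially the same route as the paper. The paper also reduces, via the law of total probability and the assumption that each model's correctness depends on $x$ only through its own cut value, to $\mathbb{E}\big[a_t(z_t)-a_v(z_v)\mid \Delta(x)\ge\gamma_0\big]$, then argues the integrand is strictly positive on the region by invoking a ``cross-model reliability ordering''. Given monotonicity of $a_v$, that ordering is exactly your explicit condition $a_t(z)>a_v(z+\gamma_0)$, and it mirrors the appendix's noisy-monotonicity assumption with $\gamma_0>\epsilon/\eta$; your point that per-model monotonicity alone cannot compare $a_t$ and $a_v$ correctly identifies the assumption the paper leaves implicit.
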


\begin{proof}[Proof]
Motivated by the approximate monotonicity observed in Observation~\ref{obs:monotonicity}, we consider a sufficiently separated positive-margin region and let $C := \{x \in \hat{\mathcal{S}}_{c\bar{c}}(\gamma_0)\}$. By definition, any sample in this region satisfies $\Delta(x) = z_v(x) - z_t(x) \ge \gamma_0 > 0$.

Using the law of total probability, the reliability gap between the two models over region $C$ can be expressed as an expectation (See detail proof in noisy version~\ref{app:prop_noise}):
\begin{equation*}
    \mathbb{P}\big(\hat{y}_t = y \mid C\big) - \mathbb{P}\big(\hat{y}_v = y \mid C\big) = \mathbb{E}_{x \mid C}\Big[\mathbb{P}\big(\hat{y}_t = y \mid z_t(x)\big) - \mathbb{P}\big(\hat{y}_v = y \mid z_v(x)\big)\Big].
\end{equation*}
For any $x \in C$, we have $\Delta(x) \ge \gamma_0 > 0$, and hence $z_t(x) < z_v(x)$. Under the cross-model reliability ordering induced by the cut statistic in this reliability-separated region, a lower cut statistic corresponds to a higher probability of correctness; thus, $\mathbb{P}(\hat y_t = y \mid z_t(x)) > \mathbb{P}(\hat y_v = y \mid z_v(x))$,
implying that the integrand is strictly positive for all $x \in C$.
Therefore, it follows that:
\begin{equation*}
    \mathbb{P}\big(\hat{y}_t = y \mid x \in \hat{\mathcal{S}}_{c\bar{c}}(\gamma_0)\big) - \mathbb{P}\big(\hat{y}_v = y \mid x \in \hat{\mathcal{S}}_{c\bar{c}}(\gamma_0)\big) > 0.
\end{equation*}
An analogous derivation confirms the symmetric result for $\hat{\mathcal{S}}_{\bar{c}c}(\xi_0)$.
\end{proof}

\begin{theorem}[\textbf{Correction Dominance}]
\label{app:theorem_clean}
For the baseline margins $\gamma_0, \xi_0$ satisfying Proposition~\ref{app:proposition_clean}:
\begin{align}
    \mathbb{P}\big(\hat{y}_t = y \land \hat{y}_v \ne y \mid x \in \hat{\mathcal{S}}_{c\bar{c}}(\gamma_0)\big) &> \mathbb{P}\big(\hat{y}_t \ne y \land \hat{y}_v = y \mid x \in \hat{\mathcal{S}}_{c\bar{c}}(\gamma_0)\big), \\
    \mathbb{P}\big(\hat{y}_v = y \land \hat{y}_t \ne y \mid x \in \hat{\mathcal{S}}_{\bar{c}c}(\xi_0)\big) &> \mathbb{P}\big(\hat{y}_v \ne y \land \hat{y}_t = y \mid x \in \hat{\mathcal{S}}_{\bar{c}c}(\xi_0)\big).
\end{align}
\end{theorem}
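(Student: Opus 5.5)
The plan is to reduce the theorem to Proposition~\ref{app:proposition_clean} by decomposing each marginal correctness event according to the joint correctness pattern of the two models. Conditioning on $C := \{x \in \hat{\mathcal{S}}_{c\bar{c}}(\gamma_0)\}$, the law of total probability over the partition in \eqref{eq:subsets_oracle} gives
\begin{align*}
\mathbb{P}(\hat y_t = y \mid C) &= \mathbb{P}(\hat y_t = y \land \hat y_v = y \mid C) + \mathbb{P}(\hat y_t = y \land \hat y_v \ne y \mid C), \\
\mathbb{P}(\hat y_v = y \mid C) &= \mathbb{P}(\hat y_t = y \land \hat y_v = y \mid C) + \mathbb{P}(\hat y_t \ne y \land \hat y_v = y \mid C).
\end{align*}

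Subtracting the second line from the first, the jointly-correct term $\mathbb{P}(x \in \mathcal{S}_{cc} \mid C)$ cancels exactly. This leaves the identity
\begin{equation*}
\mathbb{P}(\hat y_t = y \mid C) - \mathbb{P}(\hat y_v = y \mid C) = \mathbb{P}(\hat y_t = y \land \hat y_v \ne y \mid C) - \mathbb{P}(\hat y_t \ne y \land \hat y_v = y \mid C).
\end{equation*}
The left-hand side is strictly positive by the first inequality of Proposition~\ref{app:proposition_clean} at the same margin $\gamma_0$. Hence the right-hand side is strictly positive, which is precisely the first claimed inequality. Note that the jointly-incorrect set $\mathcal{S}_{\bar c\bar c}$ never appears, since it contributes to neither marginal correctness event.

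The second inequality follows by the symmetric argument. I would condition on $\hat{\mathcal{S}}_{\bar{c}c}(\xi_0)$, swap the roles of $t$ and $v$, again cancel the $\mathcal{S}_{cc}$ mass, and invoke the second inequality of Proposition~\ref{app:proposition_clean}.

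The only point needing care is well-definedness: the conditioning events must have positive probability, so that the conditional probabilities exist. This is implicitly assumed by Proposition~\ref{app:proposition_clean} and I would state it explicitly. Beyond that, there is no substantive obstacle. The content of the theorem is that the marginal accuracy gap equals the net correction signal, so all the real work was already done in establishing the Proposition.
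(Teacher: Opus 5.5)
Your proposal is correct and matches the paper's proof essentially line for line: the paper also writes $\mathbb{P}(\hat y_t = y \mid C)$ and $\mathbb{P}(\hat y_v = y \mid C)$ as the jointly-correct mass plus the respective one-sided mass, subtracts to cancel the jointly-correct term, invokes Proposition~\ref{app:proposition_clean} for the positive left-hand side, and handles the second inequality symmetrically. Your explicit remark that the conditioning events must have positive probability is a small addition that the paper leaves implicit.
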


\begin{proof}
Let $E_1$ be the event $\{\hat{y}_t = y\}$ and $E_2$ be the event $\{\hat{y}_v = y\}$. For any conditional event $C$, the law of total probability yields:
\begin{align*}
    \mathbb{P}(E_1 \mid C) &= \mathbb{P}(E_1 \land E_2 \mid C) + \mathbb{P}(E_1 \land  E_2^c \mid C), \\
    \mathbb{P}(E_2 \mid C) &= \mathbb{P}(E_1 \land E_2 \mid C) + \mathbb{P}(E_1^c \land E_2 \mid C).
\end{align*}
Subtracting the second equation from the first, we establish the following fundamental identity:
\begin{equation}
    \mathbb{P}(E_1 \mid C) - \mathbb{P}(E_2 \mid C) = \mathbb{P}(E_1 \land E_2^c \mid C) - \mathbb{P}(E_1^c \land E_2 \mid C). \label{eq:prob_identity}
\end{equation}

For the first inequality, let the condition $C$ be $x \in \hat{\mathcal{S}}_{c\bar{c}}(\gamma_0)$. 
By Proposition~\ref{app:proposition_clean}, we have $\mathbb{P}(E_1 \mid C) > \mathbb{P}(E_2 \mid C)$. 
Applying this to Equation~\eqref{eq:prob_identity}, it directly follows that:
\begin{equation*}
    \mathbb{P}(E_1 \land E_2^c \mid C) - \mathbb{P}(E_1^c \land E_2 \mid C) > 0.
\end{equation*}
Substituting the events back to their original notation, we obtain:
\begin{equation*}
    \mathbb{P}\big(\hat{y}_t = y \land \hat{y}_v \ne y \mid x \in \hat{\mathcal{S}}_{c\bar{c}}(\gamma_0)\big) > \mathbb{P}\big(\hat{y}_t \ne y \land \hat{y}_v = y \mid x \in \hat{\mathcal{S}}_{c\bar{c}}(\gamma_0)\big),
\end{equation*}
which satisfies the first claim of the theorem.

Similarly, for the second inequality, let the condition $C'$ be $x \in \hat{\mathcal{S}}_{\bar{c}c}(\xi_0)$. 
Proposition~\ref{app:proposition_clean} states that $\mathbb{P}(E_2 \mid C') > \mathbb{P}(E_1 \mid C')$. 
Applying the same probability identity for condition $C'$, we get:
\begin{equation*}
    \mathbb{P}(E_2 \mid C') - \mathbb{P}(E_1 \mid C') = \mathbb{P}(E_1^c \land E_2 \mid C') - \mathbb{P}(E_1 \land E_2^c \mid C') > 0.
\end{equation*}
Therefore, we deduce:
\begin{equation*}
    \mathbb{P}\big(\hat{y}_v = y \land \hat{y}_t \ne y \mid x \in \hat{\mathcal{S}}_{\bar{c}c}(\xi_0)\big) > \mathbb{P}\big(\hat{y}_v \ne y \land \hat{y}_t = y \mid x \in \hat{\mathcal{S}}_{\bar{c}c}(\xi_0)\big),
\end{equation*}
\end{proof}

\subsection{Theoretical Justification under the Noisy Scenario}
\label{app:noisy_scenario}

While Section~\ref{sec:observation} assumes a strict monotonic relationship between the cut statistic and prediction accuracy, real-world distribution shifts in the target domain inevitably introduce noise. In an ideal, noise-free scenario, the difference in predictive accuracy between the two models would scale linearly with the cut margin on a per-sample basis. That is, within the valid bounds of probability, the accuracy difference would be proportional to the cut margin at a rate of η. In practice, however, individual samples exhibit empirical deviations from this ideal relationship. To account for such sample-wise variance, we introduce a noise tolerance term ϵ to establish a rigorous lower bound. Here, we formalize this relaxed condition as \textit{noisy monotonicity} and prove that correction dominance is still theoretically guaranteed provided the margin is sufficiently large.

\begin{assumption}[\textbf{Noisy Monotonicity}]
\label{app:assumption_noise}
For any target sample $x \in \mathcal{D}_t$ with cut statistics $z_t(x)$ and $z_v(x)$, there exist a decay rate $\eta > 0$ and noise level $\epsilon \ge 0$ such that:
\begin{equation}
    \mathbb{P}\big(\hat{y}_t=y \mid z_t(x)\big) - \mathbb{P}\big(\hat{y}_v=y \mid z_v(x)\big) \ge \eta \Delta(x) - \epsilon,
\end{equation}
where $\Delta(x) = z_v(x) - z_t(x)$ is the cut margin.
\end{assumption}

Based on this assumption, we can establish the lower bound for the reliability gap in the margin-separated regions.

\begin{proposition}[\textbf{Reliability Gap under Noise}]
\label{app:prop_noise}
Under Assumption~\ref{app:assumption_noise}, for the margin-separated regions $\hat{\mathcal{S}}_{c\bar{c}}(\gamma)$ and $\hat{\mathcal{S}}_{\bar{c}c}(\xi)$, the reliability gap between $\theta_t$ and $\theta_v$ is bounded by:
\begin{align}
    \mathbb{P}\big(\hat{y}_t=y \mid x \in \hat{\mathcal{S}}_{c\bar{c}}(\gamma)\big) - \mathbb{P}\big(\hat{y}_v=y \mid x \in \hat{\mathcal{S}}_{c\bar{c}}(\gamma)\big) &\ge \eta\gamma - \epsilon, \\
    \mathbb{P}\big(\hat{y}_v=y \mid x \in \hat{\mathcal{S}}_{\bar{c}c}(\xi)\big) - \mathbb{P}\big(\hat{y}_t=y \mid x \in \hat{\mathcal{S}}_{\bar{c}c}(\xi)\big) &\ge \eta\xi - \epsilon.
\end{align}
\end{proposition}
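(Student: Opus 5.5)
The plan is to reduce each conditional accuracy over the margin region to an average of the per-sample conditional accuracies, and then integrate the pointwise bound of Assumption~\ref{app:assumption_noise}. Fix $\gamma$ and write $C=\{x\in\hat{\mathcal{S}}_{c\bar{c}}(\gamma)\}$, assumed to have positive probability so that conditioning is well defined.

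The first step is to establish the tower-property identity
\begin{equation*}
\mathbb{P}(\hat y_t=y\mid C)=\mathbb{E}_{x\mid C}\big[\mathbb{P}(\hat y_t=y\mid z_t(x))\big],
\end{equation*}
and the analogous identity for $\theta_v$ with $z_v$. The justification is that membership in $C$ is determined by the pair $(z_t(x),z_v(x))$. I will therefore adopt, as the implicit modelling convention already used in the proof of Proposition~\ref{app:proposition_clean}, that the correctness of $\theta_i$ depends on $x$ only through its own cut value $z_i(x)$. In other words, conditioning additionally on $z_{-i}$, or on the event $C$, does not change $\mathbb{P}(\hat y_i=y\mid z_i)$. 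This conditional-independence step is the main point needing care, since it is an assumption rather than a consequence. I would state it explicitly, and note that it is exactly what makes the law-of-total-probability display in the clean proof valid.

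Next, subtract the two identities to write the gap as $\mathbb{E}_{x\mid C}\big[\mathbb{P}(\hat y_t=y\mid z_t(x))-\mathbb{P}(\hat y_v=y\mid z_v(x))\big]$. Applying Assumption~\ref{app:assumption_noise} pointwise bounds the integrand below by $\eta\Delta(x)-\epsilon$. Monotonicity of expectation then gives a lower bound of $\eta\,\mathbb{E}[\Delta(x)\mid C]-\epsilon$. Since every $x\in C$ satisfies $\Delta(x)\ge\gamma$ and $\eta>0$, this is at least $\eta\gamma-\epsilon$.

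For the second inequality, take $C'=\{x\in\hat{\mathcal{S}}_{\bar{c}c}(\xi)\}$, on which $\Delta(x)\le-\xi$. Here we need the mirrored assumption $\mathbb{P}(\hat y_v=y\mid z_v)-\mathbb{P}(\hat y_t=y\mid z_t)\ge\eta(-\Delta(x))-\epsilon$. This is the natural symmetric reading of Assumption~\ref{app:assumption_noise} with the roles of $t$ and $v$ swapped, and I would state it as such. Note that it does not follow from the one-sided inequality as written, because negating that inequality flips its direction. The identical averaging argument then yields the bound $\eta\xi-\epsilon$.

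A closing remark would observe that choosing $\gamma_0>\epsilon/\eta$ recovers Proposition~\ref{app:proposition_clean}, and hence, via the identity \eqref{eq:prob_identity}, Theorem~\ref{app:theorem_clean} in the noisy setting.
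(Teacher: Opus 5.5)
Your proposal is correct and follows the paper's proof essentially step for step. Like the paper, you rewrite each conditional accuracy over the margin region as $\mathbb{E}_{x\mid C}[\mathbb{P}(\hat y_i=y\mid z_i(x))]$, using the modelling assumption that correctness depends on $x$ only through $z_i(x)$ (the paper's ``primarily depends on the cut statistic''). You then bound the integrand via Assumption~\ref{app:assumption_noise} together with $\Delta(x)\ge\gamma$, and invoke a separate symmetric noisy-monotonicity relation for $\hat{\mathcal{S}}_{\bar{c}c}(\xi)$. Your explicit treatment of the conditional-independence step, and your observation that the mirrored inequality does not follow from the one-sided assumption, make precise two points the paper handles only informally.
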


\begin{proof}
Let $C$ denote the event $\{x \in \hat{\mathcal{S}}_{c\bar{c}}(\gamma)\}$. By the definition of this region in Equation~\eqref{eq:param_s}, any sample conditioned on $C$ strictly satisfies $\Delta(x) \ge \gamma$.

Using the definition of conditional probability, the probability that model $\theta_t$ is correct given $C$ is:
\begin{equation*}
    \mathbb{P}(\hat{y}_t=y \mid C) = \int_{x \in \mathcal{D}_t} \mathbb{P}(\hat{y}_t=y \mid x, C) \cdot \mathbb{P}(x \mid C) dx.
\end{equation*}
Since $\mathbb{P}(x \mid C) = 0$ for any $x \notin C$, we can restrict the integration domain to $C$. Furthermore, assuming the prediction correctness primarily depends on the cut statistic $z_t(x)$, we can express this as an expectation over the region $C$:
\begin{equation*}
    \mathbb{P}(\hat{y}_t=y \mid C) = \int_{x \in C} \mathbb{P}\big(\hat{y}_t=y \mid z_t(x)\big) \cdot \mathbb{P}(x \mid C) dx = \mathbb{E}_{x \mid C}\big[\mathbb{P}\big(\hat{y}_t=y \mid z_t(x)\big)\big].
\end{equation*}
Applying the same logic for $\hat{y}_v=y$ and subtracting the two yields:
\begin{equation*}
    \mathbb{P}(\hat{y}_t=y \mid C) - \mathbb{P}(\hat{y}_v=y \mid C) = \mathbb{E}_{x \mid C}\Big[\mathbb{P}\big(\hat{y}_t=y \mid z_t(x)\big) - \mathbb{P}\big(\hat{y}_v=y \mid z_v(x)\big)\Big].
\end{equation*}
By Assumption~\ref{app:assumption_noise}, the integrand is lower-bounded by $\eta \Delta(x) - \epsilon$. Since $\Delta(x) \ge \gamma$ for all $x \in C$, we have:
\begin{equation*}
    \mathbb{E}_{x \mid C}\Big[\mathbb{P}\big(\hat{y}_t=y \mid z_t(x)\big) - \mathbb{P}\big(\hat{y}_v=y \mid z_v(x)\big)\Big] \ge \mathbb{E}_{x \mid C}\big[\eta\gamma - \epsilon\big] = \eta\gamma - \epsilon.
\end{equation*}
Therefore, $\mathbb{P}\big(\hat{y}_t=y \mid C\big) - \mathbb{P}\big(\hat{y}_v=y \mid C\big) \ge \eta\gamma - \epsilon$.
\end{proof}
For $\hat{\mathcal{S}}_{\bar{c}c}(\xi)$, applying the symmetric noisy-monotonicity relation $\mathbb{P}(\hat y_v=y\mid z_v(x)) -\mathbb{P}(\hat y_t=y\mid z_t(x)) \ge \eta(-\Delta(x))-\epsilon$ together with $\Delta(x)\le-\xi$ yields the second inequality analogously.
Building upon this reliability gap, we now establish the condition under which true correction strictly dominates miscorrection in the presence of noise.
\newpage
\begin{theorem}[\textbf{Correction Dominance under Noise}]
\label{app:theorem_noise}
For the margin-separated region $\hat{\mathcal{S}}_{c\bar{c}}(\gamma)$ satisfying the condition $\gamma > \frac{\epsilon}{\eta}$, the probability of true correction strictly dominates miscorrection by at least the margin $\eta\gamma - \epsilon$:
\begin{equation}
    \mathbb{P}\big(\hat{y}_t = y \land \hat{y}_v \ne y \mid x \in \hat{\mathcal{S}}_{c\bar{c}}(\gamma)\big) - \mathbb{P}\big(\hat{y}_t \ne y \land \hat{y}_v = y \mid x \in \hat{\mathcal{S}}_{c\bar{c}}(\gamma)\big) \ge \eta\gamma - \epsilon > 0.
\end{equation}
Symmetrically, for $\hat{\mathcal{S}}_{\bar{c}c}(\xi)$ satisfying $\xi > \frac{\epsilon}{\eta}$:
\begin{equation}
    \mathbb{P}\big(\hat{y}_v = y \land \hat{y}_t \ne y \mid x \in \hat{\mathcal{S}}_{\bar{c}c}(\xi)\big) - \mathbb{P}\big(\hat{y}_v \ne y \land \hat{y}_t = y \mid x \in \hat{\mathcal{S}}_{\bar{c}c}(\xi)\big) \ge \eta\xi - \epsilon > 0.
\end{equation}
\end{theorem}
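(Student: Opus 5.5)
The plan is to combine the probability identity \eqref{eq:prob_identity} from the proof of Theorem~\ref{app:theorem_clean} with the lower bound of Proposition~\ref{app:prop_noise}. Let $E_1=\{\hat y_t=y\}$, $E_2=\{\hat y_v=y\}$, and let $C=\{x\in\hat{\mathcal{S}}_{c\bar{c}}(\gamma)\}$. We assume $C$ has positive probability, so conditioning is well defined; otherwise the statement is vacuous.

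Decompose $\mathbb{P}(E_1\mid C)$ and $\mathbb{P}(E_2\mid C)$ over the common event $E_1\land E_2$ and subtract. This gives exactly
\begin{equation*}
\mathbb{P}(E_1\land E_2^c\mid C)-\mathbb{P}(E_1^c\land E_2\mid C)=\mathbb{P}(E_1\mid C)-\mathbb{P}(E_2\mid C),
\end{equation*}
so the left-hand side of the claimed inequality, the true correction probability minus the miscorrection probability, equals the reliability gap. Proposition~\ref{app:prop_noise}, which holds under Assumption~\ref{app:assumption_noise}, bounds that gap below by $\eta\gamma-\epsilon$. The hypothesis $\gamma>\epsilon/\eta$ together with $\eta>0$ gives $\eta\gamma>\epsilon$, so the bound is strictly positive. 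That yields the first display.

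For the symmetric region, set $C'=\{x\in\hat{\mathcal{S}}_{\bar{c}c}(\xi)\}$. Applying the same identity with the roles of $E_1$ and $E_2$ exchanged gives
\begin{equation*}
\mathbb{P}(E_2\land E_1^c\mid C')-\mathbb{P}(E_2^c\land E_1\mid C')=\mathbb{P}(E_2\mid C')-\mathbb{P}(E_1\mid C').
\end{equation*}
The second inequality of Proposition~\ref{app:prop_noise} then gives the lower bound $\eta\xi-\epsilon$, which is positive because $\xi>\epsilon/\eta$.

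There is no real obstacle here: all the substantive content sits in Proposition~\ref{app:prop_noise}. That proposition relies on its modeling step that correctness depends on $x$ only through its own cut statistic, and its symmetric case uses the reflected form of Assumption~\ref{app:assumption_noise}. The only care needed in this proof is to state that the symmetric version is invoked explicitly, and to check that the identity is an exact equality rather than an inequality, so the lower bound transfers with no loss.
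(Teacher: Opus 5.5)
Your proposal is correct and follows essentially the same route as the paper. The paper also derives, via the law of total probability, the exact identity $\mathbb{P}(E_1\mid C)-\mathbb{P}(E_2\mid C)=\mathbb{P}(E_1\land E_2^c\mid C)-\mathbb{P}(E_1^c\land E_2\mid C)$; it then transfers the lower bound $\eta\gamma-\epsilon$ from Proposition~\ref{app:prop_noise}, uses $\gamma>\epsilon/\eta$ for strict positivity, and treats the $\xi$ case symmetrically. Your remarks on $\mathbb{P}(C)>0$ and on the reflected form of Assumption~\ref{app:assumption_noise} are small clarifications that the paper leaves implicit; it mentions the latter only in passing after Proposition~\ref{app:prop_noise}.
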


\begin{proof}
Let $E_1$ be the event $\{\hat{y}_t = y\}$ and $E_2$ be the event $\{\hat{y}_v = y\}$. For any conditional event $C$, the law of total probability yields:
\begin{align*}
    \mathbb{P}(E_1 \mid C) &= \mathbb{P}(E_1 \land E_2 \mid C) + \mathbb{P}(E_1 \land  E_2^c \mid C), \\
    \mathbb{P}(E_2 \mid C) &= \mathbb{P}(E_1 \land E_2 \mid C) + \mathbb{P}(E_1^c \land E_2 \mid C).
\end{align*}
Subtracting the second equation from the first, we establish the following fundamental identity:
\begin{equation}
    \mathbb{P}(E_1 \mid C) - \mathbb{P}(E_2 \mid C) = \mathbb{P}(E_1 \land E_2^c \mid C) - \mathbb{P}(E_1^c \land E_2 \mid C). \label{eq:prob_identity_noise}
\end{equation}

For the first inequality, let the condition $C$ be $x \in \hat{\mathcal{S}}_{c\bar{c}}(\gamma)$. 
By Proposition~\ref{app:prop_noise}, we have:
\begin{equation*}
    \mathbb{P}(E_1 \mid C) - \mathbb{P}(E_2 \mid C) \ge \eta\gamma - \epsilon.
\end{equation*}
Applying this to Equation~\eqref{eq:prob_identity_noise}, it directly follows that:
\begin{equation*}
    \mathbb{P}(E_1 \land E_2^c \mid C) - \mathbb{P}(E_1^c \land E_2 \mid C) \ge \eta\gamma - \epsilon.
\end{equation*}
Given the condition $\gamma > \frac{\epsilon}{\eta}$, we know that $\eta\gamma - \epsilon > 0$. 
Substituting the events back to their original notation, we obtain:
\begin{equation*}
    \mathbb{P}\big(\hat{y}_t = y \land \hat{y}_v \ne y \mid x \in \hat{\mathcal{S}}_{c\bar{c}}(\gamma)\big) - \mathbb{P}\big(\hat{y}_t \ne y \land \hat{y}_v = y \mid x \in \hat{\mathcal{S}}_{c\bar{c}}(\gamma)\big) \ge \eta\gamma - \epsilon > 0,
\end{equation*}
which satisfies the first claim of the theorem.

Similarly, for the second inequality, let the condition $C'$ be $x \in \hat{\mathcal{S}}_{\bar{c}c}(\xi)$. 
Proposition~\ref{app:prop_noise} states that $\mathbb{P}(E_2 \mid C') - \mathbb{P}(E_1 \mid C') \ge \eta\xi - \epsilon$. 
Applying the same probability identity for condition $C'$, we get:
\begin{equation*}
    \mathbb{P}(E_2 \mid C') - \mathbb{P}(E_1 \mid C') = \mathbb{P}(E_1^c \land E_2 \mid C') - \mathbb{P}(E_1 \land E_2^c \mid C') \ge \eta\xi - \epsilon.
\end{equation*}
Given $\xi > \frac{\epsilon}{\eta}$, we deduce:
\begin{equation*}
    \mathbb{P}\big(\hat{y}_v = y \land \hat{y}_t \ne y \mid x \in \hat{\mathcal{S}}_{\bar{c}c}(\xi)\big) - \mathbb{P}\big(\hat{y}_v \ne y \land \hat{y}_t = y \mid x \in \hat{\mathcal{S}}_{\bar{c}c}(\xi)\big) \ge \eta\xi - \epsilon > 0.
\end{equation*}
\end{proof}


\section{Evaluation Datasets}
\label{app:datasets}

In this paper, our method is evaluated on five widely used benchmarks for domain adaptation as follows.

\begin{itemize}[leftmargin=1.5em]
    \item \textbf{Office-31}~\citep{office31_saenko_eccv} is a small-scale dataset consisting of three domains: Amazon~(A), DSLR~(D), and Webcam~(W), all featuring real-world objects in office environments. The Amazon domain contains images downloaded from \texttt{amazon.com} under studio lighting conditions, whilst the DSLR and Webcam domains consist of images captured with a high-resolution SLR camera and a low-resolution webcam, respectively. The dataset contains 4,652 images across 31 categories in total.

    \item \textbf{Office-Home}~\citep{officehome_venkateswara_cvpr} is a medium-scale dataset containing approximately 15,500 images of 65 everyday object categories across four visually distinct domains: Artistic images~(Ar), Clip Art~(Cl), Product images~(Pr), and Real-World images~(Rw). The dataset has on average 70 images per category and is notably more challenging than Office-31 due to its larger number of categories and greater domain diversity.

    \item \textbf{VisDA-C}~\citep{visda_peng_arxiv} is a large-scale dataset designed for the simulation-to-reality domain shift in object classification. The source domain comprises 152,397 synthetic images rendered from 3D CAD models under varying viewpoints and lighting conditions, whilst the target validation domain consists of 55,388 real object images cropped from the Microsoft COCO dataset. The dataset spans 12 object categories and poses a substantial distribution gap due to the synthetic-to-real nature of the shift.

    \item \textbf{DomainNet-126}~\citep{domainnet_saito_iccv} is a large-scale and challenging benchmark derived from the original DomainNet dataset~\citep{momentmatching_peng_iccv}, which contains approximately 600k images of 345 classes across 6 domains. Following~\citep{momentmatching_peng_iccv}, we use a curated subset of 4 domains---Clipart~(C), Painting~(P), Real~(R), and Sketch~(S)---and 126 classes with reduced label noise, resulting in roughly 145k images in total.
    
    \item \textbf{Camelyon17-WILDS}~\citep{wilds_sagawa_arxiv} is a 
    histopathology benchmark for evaluating robustness to domain shift across hospitals. Each input is a $96 \times 96$ tissue patch, and the task is binary classification of whether the patch contains tumor tissue. The 5 hospitals serve as distinct domains, 
    with models trained on 3 hospitals and evaluated on a held-out hospital, yielding 455,954 labeled examples in total.
    
    \item \textbf{EyePACS}~\citep{cuadros_eyepacs_journal} is a retinal fundus image dataset for diabetic retinopathy screening and severity assessment. It contains fundus photographs acquired under diverse imaging conditions, resulting in substantial variation in image appearance and quality. We use EyePACS as the source domain.

    \item \textbf{APTOS 2019}\footnote{ \url{https://www.kaggle.com/competitions/aptos2019-blindness-detection/data}} is a retinal fundus image dataset for diabetic retinopathy severity classification. The task consists of five classes corresponding to increasing levels of diabetic retinopathy severity. We use APTOS 2019 as the target domain.
\end{itemize}

\section{Implementation Details}
\label{app:impl}

\subsection{Epoch-Indexed Formulation}
\label{appendix:epoch_indexed}

The main text omits epoch indices for notational brevity; here we make the full temporal structure explicit and describe the efficient caching scheme that underlies the implementation.

\paragraph{Feature Caching.}
Computing cut statistics requires the complete feature set $\{\phi_i(x_j)\}_{j=1}^{n}$ over the entire target domain, as nearest-neighbor relationships cannot be resolved within a single mini-batch.
A naïve implementation would therefore require a dedicated full forward pass at every epoch solely for cut statistic computation, incurring significant overhead when the target domain contains hundreds of thousands of samples.

To eliminate this cost, we maintain a feature bank $\mathcal{B}_i^{\phi}$ and a prediction bank $\mathcal{B}_i^{p}$ for each model $i \in \{t, v\}$, populated \emph{in-place} during the standard optimization forward pass:
\begin{equation}
    \mathcal{B}_i^{\phi,(e)}[j] \;\leftarrow\; \phi_i^{(e)}(x_j), \qquad
    \mathcal{B}_i^{p,(e)}[j]   \;\leftarrow\; p_i^{(e)}(x_j), \qquad j \in \mathcal{B},
\end{equation}
so that by the end of epoch $e$ the banks hold the most recent representations without any additional forward pass.
Because the CLIP image encoder $\phi_v$ is entirely frozen, $\mathcal{B}_v^{\phi}$ is computed once during warmup and never modified thereafter.
At the end of each epoch, the epoch-indexed cut statistic is computed from the accumulated banks:
\begin{equation}
    \hat{z}_i^{(e)}(x) = \frac{
        \displaystyle\sum_{x' \in \mathcal{N}_K^{\mathrm{mut},(e)}(x)}
        \exp\!\bigl(-d(\mathcal{B}_i^{\phi,(e)}(x),\, \mathcal{B}_i^{\phi,(e)}(x'))\bigr)
        \cdot \mathbf{1}\!\left[\mathcal{B}_i^{p,(e)}(x) \neq \mathcal{B}_i^{p,(e)}(x')\right]
    }{
        \displaystyle\sum_{x' \in \mathcal{N}_K^{\mathrm{mut},(e)}(x)}
        \exp\!\bigl(-d(\mathcal{B}_i^{\phi,(e)}(x),\, \mathcal{B}_i^{\phi,(e)}(x'))\bigr)
    },
\end{equation}
yielding the raw reliability score $r_i^{(e)}(x) = \exp(-\hat{z}_i^{(e)}(x))$.

\paragraph{EMA Smoothing with Cosine Momentum Schedule.}
To stabilize reliability estimates across epochs, we apply an exponential moving average:
\begin{equation}
    \bar{r}_i^{(e)}(x) = m^{(e)} \cdot \bar{r}_i^{(e-1)}(x) + \bigl(1 - m^{(e)}\bigr) \cdot r_i^{(e)}(x),
\end{equation}
where the momentum $m^{(e)}$ follows a cosine schedule that increases monotonically:
\begin{equation}\label{eq:cosine_momentum}
    m^{(e)} = m_{\mathrm{start}} + (m_{\mathrm{end}} - m_{\mathrm{start}}) \cdot \frac{1 - \cos\!\left(\pi \cdot \frac{e}{E-1}\right)}{2}.
\end{equation}
This assigns low momentum in early epochs to allow rapid adaptation, and high momentum in later epochs to suppress noise once training stabilizes.
At initialization, we set $\bar{r}_i^{(0)} = r_i^{(0)}$ directly from the warmup pass.

\paragraph{Epoch-Indexed Training Objective.}
Substituting $\bar{r}_i^{(e-1)}$ into the gating functions of Eq.~\eqref{eq:gating}, the full epoch-indexed objective becomes:
\begin{align}\label{eq:training_objective}
    \mathcal{L}_{t}^{(e)} &= \mathbb{E}_{x}\!\left[
        T_{v \to t}^{(e-1)} \cdot D_{\mathrm{KL}}\!\left(\mathrm{sg}(p_v^{(e)}) \,\|\, p_t^{(e)}\right)
        + T_{t \to v}^{(e-1)} \cdot D_{\mathrm{KL}}\!\left(p_t^{(e-1)} \,\|\, p_t^{(e)}\right)
    \right] + \mathcal{R}(p_t^{(e)}), \\
    \mathcal{L}_{v}^{(e)} &= \mathbb{E}_{x}\!\left[
        T_{t \to v}^{(e-1)} \cdot D_{\mathrm{KL}}\!\left(\mathrm{sg}(p_t^{(e)}) \,\|\, p_v^{(e)}\right)
        + T_{v \to t}^{(e-1)} \cdot D_{\mathrm{KL}}\!\left(p_v^{(e-1)} \,\|\, p_v^{(e)}\right)
    \right] + \mathcal{R}(p_v^{(e)}),
\end{align}
where $\mathcal{R}(p) = \lambda_{\mathrm{IIC}}\mathcal{L}_{\mathrm{IIC}} - \lambda_{\mathrm{div}} H(\bar{p})$ denotes the batch-level regularization term.
Crucially, the gating weights $T^{(e-1)}$ are derived from the \emph{previous} epoch's reliability scores, preventing a circular dependency between the current supervision signal and the reliability estimate used to weight it. The complete procedure is summarized in Algorithm~\ref{alg:training}.
\begin{algorithm}[t]
\caption{Training of \proposed}
\label{alg:training}
\textbf{Input}: Unlabeled target data $\{x_j\}_{j=1}^{n}$, pretrained student $\theta_t$, frozen CLIP encoder $\phi_v$, learnable prompt $\theta_v$, total epochs $E$.\\
\textbf{Procedure}:
\begin{algorithmic}[1]
    \State \textbf{Initialisation}: Set $\theta_t \leftarrow \theta_s$; set prompt context $\boldsymbol{v}$ = ``a photo of a''.
    \State \textbf{Warmup}: Forward all $\{x_j\}$ through $\theta_t$ and $\phi_v$; populate banks $\mathcal{B}_i^{\phi}$, $\mathcal{B}_i^{p}$ for $i \in \{t, v\}$.
    \State Compute $r_i^{(0)}$ via Eq.~\eqref{eq:cut_estimate}; set $\bar{r}_i^{(0)} \leftarrow r_i^{(0)}$ for $i \in \{t, v\}$.
    \For{$e = 1, 2, \ldots, E$}
        \State Compute gating weights $T_{v \to t}^{(e-1)}$, $T_{t \to v}^{(e-1)}$ via Eq.~\eqref{eq:gating} using $\bar{r}_i^{(e-1)}$.
        \State \textbf{For each} mini-batch $\mathcal{B}$: forward through $\theta_t$ and $\phi_v$ to obtain $p_t^{(e)}$, $p_v^{(e)}$.
        \State \hspace{4.1em} Update banks: $\mathcal{B}_t^{\phi}[j] \leftarrow \phi_t^{(e)}(x_j)$, $\mathcal{B}_i^{p}[j] \leftarrow p_i^{(e)}(x_j)$ for $j \in \mathcal{B}$.
        \State \hspace{4.1em} Compute $\mathcal{L}_{t}^{(e)}$ and $\mathcal{L}_{v}^{(e)}$ via Eq.~\eqref{eq:training_objective}; update $\theta_t$ and $\boldsymbol{v}$.
        \State Compute $r_i^{(e)}$ from cached banks via Eq.~\eqref{eq:cut_estimate} (no extra forward pass).
        \State Update EMA: $\bar{r}_i^{(e)} \leftarrow m^{(e)} \cdot \bar{r}_i^{(e-1)} + (1 - m^{(e)}) \cdot r_i^{(e)}$ for $i \in \{t, v\}$.
    \EndFor
    \State \textbf{return} Adapted student model $\theta_t$.
\end{algorithmic}
\end{algorithm}

\section{Additional Results}
\label{app:additional_results}

\subsection{Comparison Against Paper-Reported Best Results}
\label{appendix:paper_best}
Tables~\ref{tab:office31_best}--\ref{tab:visda_best} compare \proposed against the 
best results reported in each paper, rather than our reproduced numbers. Methods for 
which per-task results were not reported in the original paper are excluded from the 
respective tables. \proposed achieves state-of-the-art performance on Office-31 
(+0.6\% over ProDe-V), Office-Home (+2.2\% over ProDe-V), and DomainNet-126 
(+0.8\% over ProDe-V), winning the majority of per-task comparisons on each 
benchmark. On VisDA-C, \proposed remains competitive (90.9\% vs.\ 91.0\% for 
ProDe-V), with per-class results detailed in Table~\ref{tab:visda_best}. These 
results confirm that the improvements reported in the main text are not artifacts 
of favorable reproduction conditions, but reflect genuine gains over the published 
state of the art.

\begin{table*}[h]
    \centering
    \scriptsize
    \caption{Comparison against paper-reported best results on \textbf{Office-31} (Acc, \%).}
    \label{tab:office31_best}
    \setlength{\tabcolsep}{1.5pt}
    \renewcommand{\arraystretch}{1.2}
    \begin{tabularx}{\textwidth}{lc | *{6}{C} | C}
        \toprule
        Method & Venue & \tiny{A$\to$D} & \tiny{A$\to$W} & \tiny{D$\to$A} & \tiny{D$\to$W} & \tiny{W$\to$A} & \tiny{W$\to$D} & Avg. \\
        \midrule
        Source   & --     & 79.1 & 76.5 & 59.9 & 95.5 & 61.4 & 98.8 & 78.5 \\
        \midrule
        SHOT     & ICML20 & 94.0 & 90.1 & 74.7 & 98.4 & 74.3 & 99.9 & 88.6 \\
        NRC      & NIPS21 & 96.0 & 90.8 & 75.3 & \textbf{\textcolor{red}{99.0}} & 75.0 & \textbf{\textcolor{red}{100.0}} & 89.4 \\
        AaD      & NIPS22 & 96.4 & 92.1 & 75.0 & \textbf{\textcolor{red}{99.1}} & 76.5 & \textbf{\textcolor{red}{100.0}} & 89.9 \\
        CoWA     & ICML22 & 94.4 & 95.2 & 76.2 & 98.5 & 77.6 & 99.8 & 90.3 \\
        ELR      & ICLR23 & 93.8 & 93.3 & 76.2 & 98.0 & 76.9 & \textbf{\textcolor{red}{100.0}} & 89.6 \\
        DIFO-V   & CVPR24 & 97.2 & 95.5 & 83.0 & 97.2 & 83.2 & 98.8 & 92.5 \\
        ProDe-V  & ICLR25 & 96.8 & \textbf{\textcolor{red}{96.4}} & 83.1 & 97.0 & 82.5 & 99.8 & 92.6 \\
        \midrule
        \rowcolor{gray!35}
        Ours     & NIPS26     & \textbf{\textcolor{red}{98.6}} & 95.2 & \textbf{\textcolor{red}{83.9}} & 98.2 & \textbf{\textcolor{red}{83.3}} & \textbf{\textcolor{red}{100.0}} & \textbf{\textcolor{red}{93.2}} \\
        \bottomrule
    \end{tabularx}
\end{table*}

\begin{table*}[h]
    \centering
    \scriptsize
    \caption{Comparison against paper-reported best results on \textbf{Office-Home} (Acc, \%).}
    \label{tab:officehome_best}
    \setlength{\tabcolsep}{1.5pt}
    \renewcommand{\arraystretch}{1.2}
    \begin{tabularx}{\textwidth}{lc | *{12}{C} | C}
        \toprule
        Method & Venue & \tiny{Ar$\to$Cl} & \tiny{Ar$\to$Pr} & \tiny{Ar$\to$Rw} & \tiny{Cl$\to$Ar} & \tiny{Cl$\to$Pr} & \tiny{Cl$\to$Rw} & \tiny{Pr$\to$Ar} & \tiny{Pr$\to$Cl} & \tiny{Pr$\to$Rw} & \tiny{Rw$\to$Ar} & \tiny{Rw$\to$Cl} & \tiny{Rw$\to$Pr} & Avg. \\
        \midrule
        Source   & --     & 43.6 & 67.0 & 73.9 & 49.9 & 60.2 & 62.6 & 51.7 & 40.9 & 72.6 & 64.1 & 46.3 & 78.1 & 59.2 \\
        \midrule
        SHOT     & ICML20 & 57.1 & 78.1 & 81.5 & 68.0 & 78.2 & 78.1 & 67.4 & 54.9 & 82.2 & 73.3 & 58.8 & 84.3 & 71.8 \\
        NRC      & NIPS21 & 57.7 & 80.3 & 82.0 & 68.1 & 79.8 & 78.6 & 65.3 & 56.4 & 83.0 & 71.0 & 58.6 & 85.6 & 72.2 \\
        AaD      & NIPS22 & 59.3 & 79.3 & 82.1 & 68.9 & 79.8 & 79.5 & 67.2 & 57.4 & 83.1 & 72.1 & 58.5 & 85.4 & 72.7 \\
        CoWA     & ICML22 & 56.9 & 78.4 & 81.0 & 69.1 & 80.0 & 79.9 & 67.7 & 57.2 & 82.4 & 72.8 & 60.5 & 84.5 & 72.5 \\
        ELR      & ICLR23 & 58.4 & 78.7 & 81.5 & 69.2 & 79.5 & 79.3 & 66.3 & 58.0 & 82.6 & 73.4 & 59.8 & 85.1 & 72.6 \\
        DIFO-V   & CVPR24 & 70.6 & 90.6 & 88.8 & 82.5 & 90.6 & 88.8 & 80.9 & 70.1 & 88.9 & 83.4 & 70.5 & 91.2 & 83.1 \\
        ProDe-V  & ICLR25 & 72.7 & 92.3 & 90.5 & 82.5 & 91.5 & 90.7 & 82.5 & 72.5 & 90.8 & 83.0 & 72.6 & 92.2 & 84.5 \\
        \midrule
        \rowcolor{gray!35}
        Ours     & NIPS26     & \textbf{\textcolor{red}{76.9}} & \textbf{\textcolor{red}{93.0}} & \textbf{\textcolor{red}{91.9}} & \textbf{\textcolor{red}{85.9}} & \textbf{\textcolor{red}{92.5}} & \textbf{\textcolor{red}{91.6}} & \textbf{\textcolor{red}{85.0}} & \textbf{\textcolor{red}{76.7}} & \textbf{\textcolor{red}{92.0}} & \textbf{\textcolor{red}{85.7}} & \textbf{\textcolor{red}{76.1}} & \textbf{\textcolor{red}{93.2}} & \textbf{\textcolor{red}{86.7}} \\
        \bottomrule
    \end{tabularx}
\end{table*}

\begin{table*}[h]
    \centering
    \scriptsize
    \caption{Comparison against paper-reported best results on \textbf{DomainNet-126} (Acc, \%).}
    \label{tab:domainnet_best}
    \setlength{\tabcolsep}{1.5pt}
    \renewcommand{\arraystretch}{1.2}
    \begin{tabularx}{\textwidth}{lc | *{12}{C} | C}
        \toprule
        Method & Venue & \tiny{C$\to$P} & \tiny{C$\to$R} & \tiny{C$\to$S} & \tiny{P$\to$C} & \tiny{P$\to$R} & \tiny{P$\to$S} & \tiny{R$\to$C} & \tiny{R$\to$P} & \tiny{R$\to$S} & \tiny{S$\to$C} & \tiny{S$\to$P} & \tiny{S$\to$R} & Avg. \\
        \midrule
        Source   & --     & 44.7 & 59.9 & 47.5 & 53.3 & 75.3 & 46.2 & 55.1 & 62.6 & 46.4 & 55.2 & 50.8 & 59.6 & 54.7 \\
        \midrule
        ELR      & ICLR23 & 77.2 & 91.2 & 68.5 & 79.8 & 92.7 & 67.5 & 75.6 & 82.2 & 65.7 & 85.1 & 77.6 & 89.3 & 79.4 \\
        DIFO-V   & CVPR24 & 76.6 & 87.2 & 74.9 & 80.0 & 87.4 & 75.6 & 80.8 & 77.3 & 75.5 & 80.5 & 76.7 & 87.3 & 80.0 \\
        ProDe-V  & ICLR25 & \textbf{\textcolor{red}{83.2}} & \textbf{\textcolor{red}{92.4}} & 79.0 & 85.0 & \textbf{\textcolor{red}{92.3}} & 79.3 & 85.5 & \textbf{\textcolor{red}{83.1}} & 79.1 & 85.5 & \textbf{\textcolor{red}{83.4}} & \textbf{\textcolor{red}{92.4}} & 85.0 \\
        \midrule
        \rowcolor{gray!35}
        Ours     & NIPS26     & 82.8 & 92.3 & \textbf{\textcolor{red}{81.8}} & \textbf{\textcolor{red}{86.2}} & 92.2 & \textbf{\textcolor{red}{81.0}} & \textbf{\textcolor{red}{86.9}} & 82.9 & \textbf{\textcolor{red}{81.7}} & \textbf{\textcolor{red}{86.6}} & 83.0 & 92.3 & \textbf{\textcolor{red}{85.8}} \\
        \bottomrule
    \end{tabularx}
\end{table*}

\begin{table*}[h]
    \centering
    \scriptsize
    \caption{Comparison against paper-reported best results on \textbf{VisDA-C} (per-class Acc, \%).}
    \label{tab:visda_best}
    \setlength{\tabcolsep}{2pt}
    \renewcommand{\arraystretch}{1.2}
    \begin{tabularx}{\textwidth}{lc | *{12}{C} | C}
        \toprule
        Method & Venue & \tiny{plane} & \tiny{bcycl} & \tiny{bus} & \tiny{car} & \tiny{horse} & \tiny{knife} & \tiny{mcycl} & \tiny{person} & \tiny{plant} & \tiny{sktbrd} & \tiny{train} & \tiny{truck} & Avg. \\
        \midrule
        Source   & --     & 60.7 & 21.7 & 50.9 & 68.5 & 71.8 &  5.3 & 86.3 & 20.2 & 67.1 & 43.2 & 83.3 & 10.6 & 49.1 \\
        \midrule
        SHOT     & ICML20 & 94.3 & 88.5 & 80.1 & 57.3 & 93.1 & 94.9 & 80.7 & 80.3 & 91.5 & 89.1 & 86.3 & 58.2 & 82.9 \\
        NRC      & NIPS21 & 96.8 & 91.3 & 82.4 & 62.4 & 96.2 & 95.9 & 86.1 & 80.6 & 94.8 & 94.1 & 90.4 & 59.7 & 85.9 \\
        AaD      & NIPS22 & 97.4 & 90.5 & 80.8 & 76.2 & 97.3 & 96.1 & 89.8 & 82.9 & 95.5 & 93.0 & 92.0 & 64.7 & 88.0 \\
        AdaCon   & CVPR22 & 97.0 & 84.7 & 84.0 & 77.3 & 96.7 & 93.8 & 91.9 & 84.8 & 94.3 & 93.1 & 94.1 & 49.7 & 86.8 \\
        CoWA     & ICML22 & 96.2 & 89.7 & 83.9 & 73.8 & 96.4 & 97.4 & 89.3 & 86.8 & 94.6 & 92.1 & 88.7 & 53.8 & 86.9 \\
        ELR      & ICLR23 & 97.1 & 89.7 & 82.7 & 62.0 & 96.2 & 97.0 & 87.6 & 81.2 & 93.7 & 94.1 & 90.2 & 58.6 & 85.8 \\
        PLUE     & CVPR23 & 97.3 & \textbf{\textcolor{red}{96.2}} & \textbf{\textcolor{red}{90.5}} & \textbf{\textcolor{red}{91.8}} & 90.0 & 94.2 & 87.4 & \textbf{\textcolor{red}{87.7}} & \textbf{\textcolor{red}{97.0}} & 84.3 & 93.0 & \textbf{\textcolor{red}{81.0}} & 90.0 \\
        DIFO-V   & CVPR24 & 97.5 & 89.0 & 90.8 & 83.5 & 97.8 & 97.3 & 93.2 & 83.5 & 95.2 & \textbf{\textcolor{red}{96.8}} & 93.7 & 65.9 & 90.3 \\
        ProDe-V  & ICLR25 & 98.5 & 92.4 & 86.6 & 80.5 & \textbf{\textcolor{red}{98.1}} & 98.0 & 92.3 & 84.3 & 94.7 & 97.0 & 94.1 & 75.6 & \textbf{\textcolor{red}{91.0}} \\
        \midrule
        \rowcolor{gray!35}
        Ours     & NIPS26     & \textbf{\textcolor{red}{98.7}} & 91.9 & 86.8 & 82.3 & 97.6 & \textbf{\textcolor{red}{98.1}} & \textbf{\textcolor{red}{94.1}} & 86.8 & 94.6 & 94.8 & \textbf{\textcolor{red}{94.5}} & 70.9 & 90.9 \\
        \bottomrule
    \end{tabularx}
\end{table*}

\clearpage
\subsection{VisDA Full Results}
\label{app:visda_full}

\begin{table*}[h]
    \centering
    \scriptsize
    \caption{Full per-class results on VisDA-C (Synthesis $\rightarrow$ Real, acc, \%).}
    \label{tab:visda_full}
    \setlength{\tabcolsep}{3pt}
    \renewcommand{\arraystretch}{1.2}
    \begin{tabularx}{\textwidth}{lc | *{12}{C} | C}
        \toprule
        Method & Venue & plane & bcycl & bus & car & horse & knife & mcycl & person & plant & sktbrd & train & truck & Per-class \\
        \midrule
        Source   & --     & 60.7 & 21.7 & 50.9 & 68.5 & 71.8 & 5.3  & 86.3 & 20.2 & 67.1 & 43.2 & 83.3 & 10.6 & 49.1 \\
        \midrule
        SHOT     & ICML20 & 94.9 & 86.9 & 79.7 & 54.8 & 93.6 & 94.1 & 78.5 & 81.4 & 90.6 & 90.0 & 84.5 & 58.3 & 82.3 \\
        NRC      & NIPS21 & 97.2 & 91.5 & 83.5 & 39.3 & 96.4 & 96.2 & 82.8 & 79.7 & 93.8 & 93.3 & 91.2 & 55.8 & 83.4 \\
        AaD      & NIPS22 & 97.5 & 91.8 & 70.6 & 45.5 & 92.3 & 90.0 & 65.6 & 65.6 & 95.0 & 88.4 & 87.5 & 56.0 & 78.8 \\
        AdaCon   & CVPR22 & 88.4 & 69.1 & 78.5 & 55.2 & 85.4 & 91.6 & 79.7 & 55.8 & 80.3 & 48.4 & 64.7 & 8.9  & 67.2 \\
        CoWA     & ICML22 & 95.5 & 84.2 & 83.0 & 68.9 & 94.3 & 96.9 & 88.8 & 83.1 & 91.3 & 89.0 & 85.5 & 51.5 & 84.3 \\
        ELR      & ICLR23 & 97.2 & 89.7 & 84.0 & 60.1 & 95.9 & 96.7 & 87.1 & 81.1 & 93.3 & 92.9 & 91.0 & 57.3 & 85.5 \\
        PLUE     & CVPR23 & 97.1 & 88.5 & 84.7 & 59.0 & 96.1 & 94.5 & 89.2 & \textbf{\textcolor{red}{86.6}} & 93.7 & 86.9 & 92.3 & 70.5 & 86.6 \\
        DIFO-V   & CVPR24 & 97.7 & 87.7 & 86.4 & 74.7 & 97.3 & \textbf{\textcolor{red}{98.1}} & 92.9 & 85.5 & 92.2 & \textbf{\textcolor{red}{96.2}} & 93.0 & 75.7 & 89.8 \\
        ProDe-V  & ICLR25 & 98.5 & \textbf{\textcolor{red}{93.4}} & 83.9 & 78.3 & \textbf{\textcolor{red}{98.1}} & 97.8 & 90.6 & 84.3 & 94.6 & 96.4 & 93.9 & \textbf{\textcolor{red}{77.1}} & 90.6 \\
        \midrule
        \rowcolor{gray!35}
        Ours     & NIPS26     & \textbf{\textcolor{red}{98.6}} & 92.1 & \textbf{\textcolor{red}{87.3}} & \textbf{\textcolor{red}{83.3}} & 97.8 & \textbf{\textcolor{red}{98.1}} & \textbf{\textcolor{red}{94.1}} & 86.0 & \textbf{\textcolor{red}{94.8}} & 94.6 & \textbf{\textcolor{red}{94.5}} & 69.5 & \textbf{\textcolor{red}{90.9}} \\
        \bottomrule
    \end{tabularx}
\end{table*}
Table~\ref{tab:visda_full} presents the full per-class breakdown on VisDA-C. 
\proposed achieves the highest per-class average (90.9\%), outperforming 
ProDe-V by 0.3\%p despite falling behind on individual categories such as 
\textit{bcycl}, \textit{horse}, and \textit{truck}. The largest gap occurs 
on \textit{truck} ($-7.6$\%p vs.\ ProDe-V), which we attribute to the 
visually ambiguous nature of the category under domain shift. Notably, 
\proposed achieves strong gains on \textit{car} (+5.0\%p) and \textit{bus} 
(+3.4\%p) over ProDe-V, categories that are particularly sensitive to 
appearance-level domain shift, suggesting that reliability-gated mutual 
correction is especially effective where visual statistics differ 
significantly between domains.

\subsection{Computational Analysis}
\label{app:compute}

\begin{wraptable}{r}{0.48\textwidth}
\vspace{-1.2em}
\scriptsize
\centering
\caption{\textbf{Computational cost analysis between \proposed~(ours) and ProDe.}}
\label{tab:efficiency}
\setlength{\tabcolsep}{3pt}
\begin{tabular}{lrrrrr}
\toprule
 & \multicolumn{4}{c}{\textbf{\proposed} (Ours)} & \multicolumn{1}{c}{\textbf{ProDe}} \\
\cmidrule(lr){2-5}\cmidrule(lr){6-6}
Dataset
  & \shortstack{Avg.\\KNN}
  & \shortstack{Avg.\\Train}
  & \shortstack{Avg.\\Infer.}
  & \shortstack{Total}
  & \shortstack{Total} \\
\midrule
Office-31      & 0.05\,s &   8.92\,s &  6.57\,s & \textbf{0.78\,h}  &  1.11\,h \\
Office-Home    & $<$0.01\,s & 17.85\,s & 15.83\,s & \textbf{5.62\,h}  &  5.87\,h \\
DomainNet-126  & 0.25\,s &  72.21\,s & 44.86\,s & \textbf{19.55\,h} & 29.58\,h \\
VisDA          & 0.40\,s & 160.80\,s & 80.70\,s & \textbf{2.02\,h}  &  3.49\,h \\
\bottomrule
\end{tabular}
\vspace{-0.5em}
\end{wraptable}
Table~\ref{tab:efficiency} reports the per-component runtime of \proposed alongside the total wall-clock time of ProDe.\footnote{All experiments are conducted on a single NVIDIA RTX PRO 6000 Blackwell Server Edition GPU (97\,GB VRAM).} \proposed maintains two models and computes per-sample reliability estimates via a mutual $k$-NN graph at each epoch, which introduces additional cost over a single-model baseline. However, two design choices keep this overhead manageable. First, the cut statistic is computed \emph{once per epoch} from feature representations cached during the preceding optimization forward pass, avoiding any separate full forward pass over the target domain; the resulting per-epoch KNN cost remains below 0.5\,s across all benchmarks. Second, target-domain visual features for the CLIP peer are extracted and cached prior to adaptation and reused throughout training via an EMA-updated reliability bank, eliminating repeated invocations of the CLIP backbone during training. As a result, despite the additional complexity of bidirectional mutual correction, \proposed completes adaptation faster than ProDe across all four benchmarks.

\begin{figure}[h]
    \centering
    \includegraphics[width=1.0\linewidth]{Contents/Figures/Figure4_OH_Observe.pdf}
    \caption{\textbf{Accuracy vs.\ cut statistic $z$ for all 12 Office-Home tasks (CLIP peer).}}
    \label{fig:oh_observe}
\end{figure}

\subsection{Extended Empirical Validation of Cut Statistic Properties}
\label{appendix:extended_obs}
\begin{figure}[t]
    \centering
    \includegraphics[width=1.0\linewidth]{Contents/Figures/Figure5_DN_Observe.pdf}
    \caption{\textbf{Accuracy vs.\ cut statistic $z$ for all 12 DomainNet-126 tasks (CLIP peer).}}
    \label{fig:dn_observe}
\end{figure}
\begin{figure}[t]
    \centering
    \includegraphics[width=1.0\linewidth]{Contents/Figures/Figure6_Hetero_Observe.pdf}
    \caption{\textbf{Accuracy vs.\ cut statistic $z$ for all 12 Office-Home source-pair combinations (dual-ResNet peer).}}
    \label{fig:hetero_observe}
\end{figure}
We provide extended empirical validation of the two key properties underlying our 
framework: (i) approximate monotonicity of predictive accuracy with respect to the 
cut statistic (Observation~\ref{obs:monotonicity}), and (ii) spatial separation of 
complementary correction regions in the joint cut plane. We examine these properties 
across three settings: Office-Home with a CLIP peer 
(Figures~\ref{fig:oh_observe},~\ref{fig:oh_density}), DomainNet-126 with a CLIP peer 
(Figures~\ref{fig:dn_observe},~\ref{fig:dn_density}), and Office-Home with a 
heterogeneous dual-ResNet peer (Figures~\ref{fig:hetero_observe},~\ref{fig:hetero_density}).

\paragraph{Approximate Monotonicity.}
Across all three settings and all 12 adaptation tasks per setting, both models 
consistently exhibit approximately monotonically decreasing accuracy with respect to 
their own cut statistic $z$, confirming that Observation~\ref{obs:monotonicity} holds 
broadly regardless of benchmark, peer architecture, and inter-model accuracy gap. 
This robustness is most directly evidenced by contrasting the two peer configurations: 
in the CLIP-based settings (Figures~\ref{fig:oh_observe},~\ref{fig:dn_observe}), 
$\theta_v$ maintains a large and persistent accuracy advantage over $\theta_t$ across 
the full range of $z$, whereas in the heterogeneous dual-ResNet setting 
(Figure~\ref{fig:hetero_observe}), the curves of $\theta_1$ and $\theta_2$ run in close 
proximity and frequently cross at intermediate cut values. Despite these contrasting 
regimes -- one with strong directional asymmetry and one with near-symmetric accuracy -- 
the monotonic relationship between cut value and predictive accuracy is preserved in 
both cases, confirming that the cut statistic functions as a reliable per-model 
reliability proxy independent of the degree of inter-model asymmetry.

\paragraph{Spatial Separation of Correction Regions.}
Figures~\ref{fig:oh_density},~\ref{fig:dn_density}, and~\ref{fig:hetero_density} 
display KDE density plots of $\mathcal{S}_{c\bar{c}}$ (blue) and $\mathcal{S}_{\bar{c}c}$ 
(orange) in the joint cut plane. The red diagonal line marks $\Delta(x) = 0$, i.e., 
equal cut values for both models; the baseline thresholds $\gamma_0$ and $\xi_0$ from 
Proposition~\ref{prop:threshold} reside in the upper and lower half-spaces relative 
to this line, respectively, and need not coincide with any fixed offset.

Across all three settings, the blue mode concentrates in the upper half-space 
($\Delta(x) > 0$) and the orange mode in the lower half-space ($\Delta(x) < 0$), 
confirming that the two complementary correction regions are separated by the 
$\Delta(x) = 0$ boundary as predicted. The degree of this separation, however, varies 
with the choice of target domain. In most tasks, the two modes are pushed toward 
opposite extremes of the joint cut plane, yielding a wide effective margin and making 
the identification of $\gamma_0$ and $\xi_0$ straightforward. When \textbf{Art (Ar)} 
serves as the target domain, both distributions shift toward the red line with smaller 
$|\Delta(x)|$, compressing the separation. Nevertheless, the relative ordering of the 
two modes -- blue above and orange below the $\Delta(x) = 0$ boundary -- is preserved 
even in this compressed regime, ensuring that valid baseline thresholds $\gamma_0 > 0$ 
and $\xi_0 > 0$ satisfying Proposition~\ref{prop:threshold} can still be identified 
within their respective half-spaces.

\begin{figure}[t]
    \centering
    \includegraphics[width=1.0\linewidth]{Contents/Figures/Figure7_OH_Density.pdf}
    \caption{\textbf{Joint cut density of $\mathcal{S}_{c\bar{c}}$ and $\mathcal{S}_{\bar{c}c}$ for all 12 Office-Home tasks (CLIP peer).}}
    \label{fig:oh_density}
\end{figure}
\begin{figure}[t]
    \centering
    \includegraphics[width=1.0\linewidth]{Contents/Figures/Figure8_DN_Density.pdf}
    \caption{\textbf{Joint cut density of $\mathcal{S}_{c\bar{c}}$ and $\mathcal{S}_{\bar{c}c}$ for all 12 DomainNet-126 tasks (CLIP peer).}}
    \label{fig:dn_density}
\end{figure}
\begin{figure}[t]
    \centering
    \includegraphics[width=1.0\linewidth]{Contents/Figures/Figure9_Hetero_Density.pdf}
    \caption{\textbf{Joint cut density of $\mathcal{S}_{c\bar{c}}$ and $\mathcal{S}_{\bar{c}c}$ for all 12 Office-Home source-pair combinations (dual-ResNet peer).}}
    \label{fig:hetero_density}
\end{figure}
\clearpage

\paragraph{Temporal Stability of Monotonicity.}
\begin{figure}
    \centering
    \includegraphics[width=1.0\linewidth]{Contents/Figures/Epoch_Wise_Observation.pdf}
    \caption{\textbf{Temporal stability of the cut–accuracy monotonicity.} Mean accuracy versus cut statistic $z$ for the target model (solid) and CLIP peer (dashed) at four checkpoints (Epoch~1, $T/3$, $2T/3$, $T$) on three Office-Home tasks. The monotonically decreasing trend and the relative ordering between the two curves are preserved throughout adaptation, while the target curve progressively converges toward the peer.}
    \label{fig:Epoch_Wise_Observation}
\end{figure}
Beyond the snapshot validation in Observation 3.1, a stronger property would be that the cut–accuracy monotonicity holds \emph{throughout} adaptation—ensuring that the reliability scores $\bar{r}_i$ accumulated through EMA remain well-calibrated as both models evolve, and that reliability-gated supervision retains its fidelity precisely when mutual correction is most active. Figure~\ref{fig:Epoch_Wise_Observation} verifies this stronger property by tracking the cut–accuracy relationship across four representative checkpoints (Epoch~1, $T/3$, $2T/3$, and $T$) on three Office-Home tasks (Ar$\to$Cl, Ar$\to$Rw, Ar$\to$Pr). Across every task and every checkpoint, both the target model and the CLIP peer maintain a clear monotonically decreasing trend, and the relative ordering of the two curves is consistently preserved. Even more encouragingly, as training progresses the target curve steadily shifts upward and tightens toward the peer curve, directly reflecting the joint improvement driven by reliability-gated mutual correction—while the qualitative cut–accuracy shape remains intact. These results establish the cut statistic as a temporally robust reliability proxy: its validity extends seamlessly from initialization through the entire training trajectory, providing a stable foundation on which reliability-aware supervision continues to deliver well-calibrated guidance throughout adaptation.

\clearpage
\subsection{Extended Correctness Pattern Transition Results}
\label{appendix:state_transition}

\begin{figure}[!t]
    \centering
    \includegraphics[width=1.0\linewidth]{Contents/Figures/Figure10_State_OH_Transition.pdf}
    \caption{\textbf{Correctness pattern transition (Epoch 0 vs.\ 50) for all 12 Office-Home 
    tasks (CLIP peer).}}
    \label{fig:oh_transition}
\end{figure}
\begin{figure}[!t]
    \centering
    \includegraphics[width=1.0\linewidth]{Contents/Figures/Figure11_State_DN_Transition.pdf}
    \caption{\textbf{Correctness pattern transition (Epoch 0 vs.\ 50) for all 12 DomainNet-126 
    tasks (CLIP peer).}}
    \label{fig:dn_transition}
\end{figure}
\begin{figure}[!t]
    \centering
    \includegraphics[width=1.0\linewidth]{Contents/Figures/Figure12_Hetero_theta_1_Transition.pdf}
    \caption{\textbf{Correctness pattern transition (Epoch 0 vs.\ 50) for all 12 Office-Home 
    source-pair combinations (dual-ResNet peer, $\theta_1$).}}
    \label{fig:hetero_transition}
\end{figure}

Figures~\ref{fig:oh_transition},~\ref{fig:dn_transition}, and~\ref{fig:hetero_transition} 
report the correctness pattern transitions from Epoch 0 to Epoch 50 across all 12 
adaptation tasks for Office-Home (CLIP peer), DomainNet-126 (CLIP peer), and Office-Home 
(dual-ResNet peer), respectively. Each bar group decomposes the target domain into the 
four oracle subsets -- $\mathcal{S}_{cc}$, $\mathcal{S}_{c\bar{c}}$, 
$\mathcal{S}_{\bar{c}c}$, and $\mathcal{S}_{\bar{c}\bar{c}}$ -- before and after 
adaptation.

\paragraph{Consistent Pattern Across All Settings.}
Three transitions are observed consistently across all benchmarks and peer configurations. 
First, the jointly correct proportion $\mathcal{S}_{cc}$ increases substantially in every 
task, confirming that bidirectional knowledge transfer effectively drives both models 
toward agreement on correct predictions. Second, the one-sided correct sets 
$\mathcal{S}_{c\bar{c}}$ and $\mathcal{S}_{\bar{c}c}$ nearly vanish after adaptation -- 
the populations that initially provided the source of mutual correction are progressively 
absorbed into $\mathcal{S}_{cc}$, indicating that cross-model supervision successfully 
resolves asymmetric errors in both directions. Third, the jointly incorrect set 
$\mathcal{S}_{\bar{c}\bar{c}}$ also exhibits a modest but consistent decrease across most 
tasks, suggesting that neighborhood-level generalization from corrected samples propagates 
improvement even to samples that neither model could directly supervise.

\paragraph{Amplified Effect in the Dual-ResNet Setting.}
While the above pattern holds in both peer configurations, two notable differences 
emerge in the dual-ResNet setting (Figure~\ref{fig:hetero_transition}). First, the 
one-sided sets $\mathcal{S}_{c\bar{c}}$ and $\mathcal{S}_{\bar{c}c}$ collapse more 
sharply toward zero compared to the CLIP-based settings -- in many source-pair 
combinations, both sets reduce to below 0.1\% after adaptation, a more complete 
elimination than observed in the CLIP case. This more thorough resolution of asymmetric 
errors reflects the structural symmetry of the two ResNet models: since neither model 
maintains a persistent directional advantage, correction signals flow bidirectionally 
with equal effectiveness, leaving minimal residual one-sided errors. Second, despite 
starting from lower initial $\mathcal{S}_{cc}$ proportions -- a consequence of both 
models being source-pretrained without the semantic priors of a vision-language model -- 
the absolute gain in $\mathcal{S}_{cc}$ is comparably large, demonstrating that 
\proposed can leverage complementary failure modes effectively even when the peer model 
is architecturally identical and initially less accurate than in the CLIP setting.

\subsection{Effect of Peer Complementarity}
\label{appendix:peer_complementarity}

SafeCut relies on complementary predictions between the two peer models: when their failure modes differ, one model can provide corrective supervision on samples where the other model is unreliable. To quantitatively examine how this complementarity affects adaptation, we use the 12 dual-ResNet settings reported in Table~\ref{tab:symmetric}. For each pair, we compute the prediction agreement before adaptation, defined as the proportion of target samples for which the two models predict the same class. We then measure the adaptation gain as the average accuracy improvement of the two models after \proposed adaptation.

\begin{table}[h]
    \centering
    \small
    \caption{Effect of peer complementarity in the dual-ResNet setting
    of Table~\ref{tab:symmetric}. Agreement denotes the prediction agreement between the
    two models before adaptation, and Gain denotes their average accuracy
    improvement after SafeCut adaptation.}
    \label{tab:peer_complementarity_full}
    \setlength{\tabcolsep}{5pt}
    \begin{tabular}{ccc|cc}
        \toprule
        Target & Source 1 & Source 2 & Agreement (\%) & Gain (\%p) \\
        \midrule
        Cl & Ar & Pr & 41.1 & 17.6 \\
        Ar & Cl & Pr & 43.3 & 21.3 \\
        Cl & Ar & Rw & 46.1 & 16.7 \\
        Cl & Pr & Rw & 46.4 & 16.8 \\
        Ar & Cl & Rw & 48.4 & 18.7 \\
        Ar & Pr & Rw & 53.6 & 16.5 \\
        Pr & Ar & Cl & 57.1 & 17.7 \\
        Rw & Cl & Pr & 58.8 & 16.3 \\
        Rw & Ar & Cl & 59.1 & 14.9 \\
        Pr & Cl & Rw & 60.6 & 17.2 \\
        Rw & Ar & Pr & 67.3 & 11.0 \\
        Pr & Ar & Rw & 69.6 & 10.4 \\
        \bottomrule
    \end{tabular}
\end{table}
We observe a strong negative relationship between pre-adaptation prediction agreement and the gain achieved by SafeCut (Pearson $r=-0.81$). Model pairs with lower agreement generally obtain larger improvements, supporting the premise that complementary predictions create more opportunities for cross-model correction. As the two models become more similar, the achievable benefit decreases, consistent with the fact that the one-sided correctness regions become less prevalent. Importantly, this reduction is gradual rather than harmful over the evaluated range. Even for the pair with the highest observed agreement (69.6\%), \proposed yields a +10.4 percentage-point improvement, and no performance degradation is observed across the 12 evaluated pairs. These results empirically support the role of peer complementarity in \proposed while also indicating that the potential gain naturally diminishes as the two peers become increasingly similar.

\newpage


\section{Experimental Details}
\label{app:experimental details}
\subsection{Implementation and Hyperparameter Details}
\label{appendix:hyperparams}
\begin{wraptable}{r}{0.45\textwidth}
    \centering
    \vspace{-12pt}
    \scriptsize
    \caption{Implementation details. \textcolor{red}{Red}: identical across 
             all applicable benchmarks; \underline{underline}: identical in 
             most benchmarks.}
    \label{tab:hyperparams}
    \setlength{\tabcolsep}{3.5pt}
    \renewcommand{\arraystretch}{1.15}
    \begin{tabularx}{\linewidth}{l | C C C C C}
        \toprule
        & \textbf{O31} & \textbf{OH} & \textbf{DN} & \textbf{VisDA} & \textbf{Cam17} \\
        \midrule
        \multicolumn{6}{l}{\textit{Optimization}} \\
        \hspace{4pt} Epochs
            & 30    & 50    & 50    & 30    & 20    \\
        \hspace{4pt} Batch size
            & 64    & 192   & 192   & 192   & 128   \\
        \hspace{4pt} LR
            & \underline{5e-3} & \underline{1e-2} & \underline{1e-2} 
            & \underline{5e-3} & \underline{5e-3} \\
        \hspace{4pt} Prompt LR
            & 5e-3  & 5e-4  & 5e-4  & 1e-3  & 5e-4  \\
        \midrule
        \multicolumn{6}{l}{\textit{Cut statistic}} \\
        \hspace{4pt} $K$
            & \underline{20} & \underline{50} & \underline{50} 
            & \underline{50} & \underline{20} \\
        \hspace{4pt} EMA $m_{\text{start}}$
            & \textcolor{red}{0.7} & \textcolor{red}{0.7}
            & \textcolor{red}{0.7} & \textcolor{red}{0.7}
            & \textcolor{red}{0.7} \\
        \hspace{4pt} EMA $m_{\text{end}}$
            & \textcolor{red}{0.95} & \textcolor{red}{0.95}
            & \textcolor{red}{0.95} & \textcolor{red}{0.95}
            & \textcolor{red}{0.95} \\
        \midrule
        \multicolumn{6}{l}{\textit{Loss weights}} \\
        \hspace{4pt} $\beta$
            & \underline{3.0} & \underline{6.0} & \underline{6.0} 
            & \underline{6.0} & \underline{5.0}     \\
        \hspace{4pt} $\lambda_{\mathrm{iic}}$
            & \underline{1.3} & \underline{1.3} & \underline{1.3} 
            & \underline{1.3} & \underline{1.0} \\
        \hspace{4pt} $\lambda_{\mathrm{div}}$
            & 0.8   & 0.5   & 0.05  & 0.05  & 0.1   \\
        \midrule
        \multicolumn{6}{l}{\textit{CLIP}} \\
        \hspace{4pt} Prompt init
            & \multicolumn{5}{c}{\texttt{a\_photo\_of\_a}} \\
        \bottomrule
    \end{tabularx}
    \vspace{-8pt}
\end{wraptable}
\paragraph{Hyperparameters}
Table~\ref{tab:hyperparams} summarizes the hyperparameters used across all benchmarks.
A single hyperparameter configuration is applied uniformly to all adaptation tasks 
within each dataset; no per-task search or adjustment is performed.
The EMA schedule for cut statistic estimation is held strictly constant across all 
benchmarks, with $m_{\mathrm{start}}=0.7$ and $m_{\mathrm{end}}=0.95$ following the 
cosine annealing schedule of Equation~\eqref{eq:cosine_momentum}, and the CLIP prompt 
is initialized uniformly as \texttt{a\_photo\_of\_a} throughout. The gating sharpness 
$\beta$, the IIC weight $\lambda_{\mathrm{iic}}$, and the neighborhood size $K$ are 
consistent across most benchmarks, with minor adjustments that follow intuitive 
dataset-scale considerations: smaller datasets (Office-31, Camelyon17) use a smaller 
$K$ due to reduced sample counts, and $\beta$ is slightly lower for Office-31 and 
Camelyon17 given their smaller scale and reduced domain complexity. The diversity 
weight $\lambda_{\mathrm{div}}$ is the only parameter that varies more substantially 
across benchmarks, reflecting differences in class balance and dataset scale rather 
than task-specific tuning.
\paragraph{Baseline Reproduction.}
All baselines are reproduced under a unified experimental setup following the 
protocol of ProDe~\cite{prode_tang_iclr}, using the codebase and pretrained source 
models provided in the ProDe repository.\footnote{\url{https://github.com/tntek/source-free-domain-adaptation}} For methods not included in the ProDe repository, namely ELR~\cite{elr_yi_iclr} and 
AaD~\cite{aad_yang_nips}, we refer to their respective official implementations.\footnote{\url{https://github.com/liyi01827/SFDA_LLN}}~\footnote{\url{https://github.com/Albert0147/AaD_SFDA}} Pretrained source models are taken directly from the ProDe repository without 
modification, ensuring that all methods are evaluated from identical initializations.
All experiments are conducted using PyTorch with a single NVIDIA GPU~\footnote{NVIDIA RTX PRO 6000  Blackwell Server Edition GPU (97\,GB VRAM).}.

\subsection{ProDe Hyperparameter Search on Camelyon17}
\label{appendix:prode_camelyon}

\begin{wraptable}{r}{0.52\textwidth}
    \centering
    \vspace{-12pt}
    \scriptsize
    \caption{Hyperparameter search space for ProDe on Camelyon17.}
    \label{tab:prode_grid}
    \setlength{\tabcolsep}{4pt}
    \renewcommand{\arraystretch}{1.2}
    \begin{tabularx}{\linewidth}{l | X}
        \toprule
        \textbf{Hyperparameter} & \textbf{Search Space} \\
        \midrule
        Epochs $E$
            & $\{1,\ 2,\ 3,\ 4,\ 5,\ 10\}$ \\
        $\lambda_{\mathrm{div}}$
            & $\{0.05,\ 0.1\}$ \\
        $\lambda_{\mathrm{iic}}$
            & $\{0.3,\ 0.5,\ 0.6,\ 0.7,\ 1.0\}$ \\
        LR
            & $\{10^{-3},\ 3{\times}10^{-3},\ 5{\times}10^{-3},\ 10^{-2},$
              $\ 0.05,\ 0.06,\ \ldots,\,\ 0.25,\ 0.30,\ 0.35,\ 0.40\}$ \\
        Prompt LR
            & $\{10^{-5},\ 5{\times}10^{-5},\ 10^{-4},\ 5{\times}10^{-4},$
              $\ 10^{-3},\ 5{\times}10^{-3},\ 0.01,\ 0.02,\ \ldots,\ 0.10\}$ \\
        Prompt init
            & \texttt{a\_photo\_of\_a} (fixed) \\
        \bottomrule
    \end{tabularx}
    \vspace{-8pt}
\end{wraptable}

As noted in Section~\ref{sec:experiments}, ProDe achieves only modest improvement 
on Camelyon17 despite extensive hyperparameter tuning. To ensure a fair comparison, 
we conduct a thorough grid search over all hyperparameters that could plausibly affect 
ProDe's performance in this setting, with the full search space summarized in 
Table~\ref{tab:prode_grid}. The CLIP prompt is initialized as \texttt{a\_photo\_of\_a}, 
identical to the setting used in ProDe's original natural-image experiments. The best 
result obtained across all configurations is $76.27\%$ on the target model, with CLIP 
remaining near its zero-shot performance ($50.00\%$) regardless of the configuration.
We attribute this performance ceiling to a fundamental limitation of ProDe's update 
structure in this setting, rather than suboptimal hyperparameter selection. ProDe 
supervises CLIP exclusively through $\mathcal{L}_{\mathrm{IIC}}$~\citep{iic_ji_iccv}, a self-supervised 
clustering objective that operates without any source-driven signal. When the peer 
model starts from a poor initialization -- as is the case for zero-shot CLIP on 
histopathology, where it underperforms even the source-pretrained model ($50.01\%$ 
vs.\ $64.14\%$) -- $\mathcal{L}_{\mathrm{IIC}}$ alone is insufficient to recover 
meaningful representations, leaving CLIP unable to provide reliable guidance 
throughout training. This one-directional update structure lacks the mechanism to 
bootstrap a weak peer model, making the performance gap robust to hyperparameter 
variation.

\subsection{Full Per-Task Ablation Results}
\label{appendix:ablation_full}
\paragraph{Component Ablation (Table~\ref{tab:ablation_full}).}
Table~\ref{tab:ablation_full} reports the full per-task results for each loss 
configuration. For clarity, we present the explicit training objectives for each case.

\noindent\textbf{$\mathcal{L}_{\mathrm{c}}\!=\!0, \mathcal{L}_{\mathrm{a}}\!=\!0$:}
Cross-model correction and temporal anchoring are both disabled, reducing the 
objective to batch-level regularization only:
\begin{equation}
    \mathcal{L}_{t} = \lambda_{\mathrm{IIC}} \mathcal{L}_{\mathrm{IIC}} - \lambda_{\mathrm{div}} H(\bar{p}_t), \quad
    \mathcal{L}_{v} = \lambda_{\mathrm{IIC}} \mathcal{L}_{\mathrm{IIC}} - \lambda_{\mathrm{div}} H(\bar{p}_v).
\end{equation}

\noindent\textbf{$\mathcal{L}_{\mathrm{c}}\!=\!0, \mathcal{L}_{\mathrm{a}}\!=\!1$:}
Cross-model correction is disabled, but temporal anchoring is retained with uniform 
weighting (gate $\equiv 1$):
\begin{align}
    \mathcal{L}_{t} = \mathbb{E}_{x}\!\left[ D_{\mathrm{KL}}(\tilde{p}_t \| p_t) \right] + \lambda_{\mathrm{IIC}} \mathcal{L}_{\mathrm{IIC}} - \lambda_{\mathrm{div}} H(\bar{p}_t),\\
    \mathcal{L}_{v} = \mathbb{E}_{x}\!\left[ D_{\mathrm{KL}}(\tilde{p}_v \| p_v) \right] + \lambda_{\mathrm{IIC}} \mathcal{L}_{\mathrm{IIC}} - \lambda_{\mathrm{div}} H(\bar{p}_v).
\end{align}

\noindent\textbf{$\mathcal{L}_{\mathrm{c}}\!=\!1, \mathcal{L}_{\mathrm{a}}\!=\!0$:}
Cross-model correction is enabled with uniform weighting, while temporal anchoring 
is disabled:
\begin{align}
    \mathcal{L}_{t} = \mathbb{E}_{x}\!\left[ D_{\mathrm{KL}}(\mathrm{sg}(p_v) \| p_t) \right] + \lambda_{\mathrm{IIC}} \mathcal{L}_{\mathrm{IIC}} - \lambda_{\mathrm{div}} H(\bar{p}_t),\\
    \mathcal{L}_{v} = \mathbb{E}_{x}\!\left[ D_{\mathrm{KL}}(\mathrm{sg}(p_t) \| p_v) \right] + \lambda_{\mathrm{IIC}} \mathcal{L}_{\mathrm{IIC}} - \lambda_{\mathrm{div}} H(\bar{p}_v).
\end{align}

\noindent\textbf{$\mathcal{L}_{\mathrm{c}}\!=\!1, \mathcal{L}_{\mathrm{a}}\!=\!1$:}
Both cross-model correction and temporal anchoring are enabled with uniform weighting:
\begin{align}
    \mathcal{L}_{t} &= \mathbb{E}_{x}\!\left[ D_{\mathrm{KL}}(\mathrm{sg}(p_v) \| p_t) + D_{\mathrm{KL}}(\tilde{p}_t \| p_t) \right] + \lambda_{\mathrm{IIC}} \mathcal{L}_{\mathrm{IIC}} - \lambda_{\mathrm{div}} H(\bar{p}_t), \\
    \mathcal{L}_{v} &= \mathbb{E}_{x}\!\left[ D_{\mathrm{KL}}(\mathrm{sg}(p_t) \| p_v) + D_{\mathrm{KL}}(\tilde{p}_v \| p_v) \right] + \lambda_{\mathrm{IIC}} \mathcal{L}_{\mathrm{IIC}} - \lambda_{\mathrm{div}} H(\bar{p}_v).
\end{align}

\noindent\textbf{\proposed (Ours):}
Both terms are enabled with adaptive reliability gating $T_{v \to t}$ and 
$T_{t \to v}$, replacing the uniform weights above:
\begin{align}
    \mathcal{L}_{t} &= \mathbb{E}_{x}\!\left[ T_{v \to t} \cdot D_{\mathrm{KL}}(\mathrm{sg}(p_v) \| p_t) + T_{t \to v} \cdot D_{\mathrm{KL}}(\tilde{p}_t \| p_t) \right] + \lambda_{\mathrm{IIC}} \mathcal{L}_{\mathrm{IIC}} - \lambda_{\mathrm{div}} H(\bar{p}_t), \\
    \mathcal{L}_{v} &= \mathbb{E}_{x}\!\left[ T_{t \to v} \cdot D_{\mathrm{KL}}(\mathrm{sg}(p_t) \| p_v) + T_{v \to t} \cdot D_{\mathrm{KL}}(\tilde{p}_v \| p_v) \right] + \lambda_{\mathrm{IIC}} \mathcal{L}_{\mathrm{IIC}} - \lambda_{\mathrm{div}} H(\bar{p}_v).
\end{align}
\proposed consistently outperforms all ablated variants across all three benchmarks, 
confirming the complementary roles of reliability-aware cross-model correction, temporal anchoring.

\begin{table*}[h]
    \centering
    \scriptsize
    \caption{Full per-task component ablation results (acc, \%). 
    $\mathcal{L}_{\mathrm{c}}$: cross-supervision KL, 
    $\mathcal{L}_{\mathrm{a}}$: temporal anchoring.}
    \label{tab:ablation_full}
    \renewcommand{\arraystretch}{1.2}

    \textbf{(a) Office-31}\\[3pt]
    \begin{tabularx}{\textwidth}{l | *{6}{C} | C}
        \toprule
        Method & \tiny{A$\to$D} & \tiny{A$\to$W} & \tiny{D$\to$A} & \tiny{D$\to$W} & \tiny{W$\to$A} & \tiny{W$\to$D} & Avg. \\
        \midrule
        $\mathcal{L}_{\mathrm{c}}\!=\!0, \mathcal{L}_{\mathrm{a}}\!=\!0$ & 97.2 & 91.7 & 81.2 & 97.4 & 80.6 & 99.0 & 91.2 \\
        $\mathcal{L}_{\mathrm{c}}\!=\!0, \mathcal{L}_{\mathrm{a}}\!=\!1$ & 97.2 & 92.2 & 80.9 & 97.5 & 79.6 & 99.8 & 91.2 \\
        $\mathcal{L}_{\mathrm{c}}\!=\!1, \mathcal{L}_{\mathrm{a}}\!=\!0$ & 97.0 & 94.2 & 83.7 & 96.7 & \textbf{\textcolor{red}{84.2}} & 98.8 & 92.4 \\
        $\mathcal{L}_{\mathrm{c}}\!=\!1, \mathcal{L}_{\mathrm{a}}\!=\!1$ & 96.2 & 94.0 & 83.6 & 96.5 & 83.5 & 99.0 & 92.1 \\
        \midrule
        \rowcolor{gray!35}
        \proposed & \textbf{\textcolor{red}{98.6}} & \textbf{\textcolor{red}{95.2}} & \textbf{\textcolor{red}{83.9}} & \textbf{\textcolor{red}{98.2}} & 83.3 & \textbf{\textcolor{red}{100.0}} & \textbf{\textcolor{red}{93.2}} \\
        \bottomrule
    \end{tabularx}
    \vspace{8pt}

    \textbf{(b) Office-Home}\\[3pt]
    \begin{tabularx}{\textwidth}{l | *{12}{C} | C}
        \toprule
        Method & \tiny{Ar$\to$Cl} & \tiny{Ar$\to$Pr} & \tiny{Ar$\to$Rw} & \tiny{Cl$\to$Ar} & \tiny{Cl$\to$Pr} & \tiny{Cl$\to$Rw} & \tiny{Pr$\to$Ar} & \tiny{Pr$\to$Cl} & \tiny{Pr$\to$Rw} & \tiny{Rw$\to$Ar} & \tiny{Rw$\to$Cl} & \tiny{Rw$\to$Pr} & Avg. \\
        \midrule
        $\mathcal{L}_{\mathrm{c}}\!=\!0, \mathcal{L}_{\mathrm{a}}\!=\!0$ & 69.6 & 84.5 & 90.3 & 80.8 & 85.6 & 89.7 & 78.2 & 68.9 & 89.5 & 83.3 & 70.4 & 90.6 & 81.8 \\
        $\mathcal{L}_{\mathrm{c}}\!=\!0, \mathcal{L}_{\mathrm{a}}\!=\!1$ & 71.2 & 88.2 & 90.7 & 81.8 & 90.3 & 90.0 & 79.6 & 71.1 & 89.5 & 83.4 & 71.8 & 92.2 & 83.3 \\
        $\mathcal{L}_{\mathrm{c}}\!=\!1, \mathcal{L}_{\mathrm{a}}\!=\!0$ & 74.7 & 92.7 & 91.7 & 85.2 & \textbf{\textcolor{red}{92.7}} & 91.4 & 84.6 & 75.8 & 91.4 & 84.5 & 74.4 & 92.7 & 86.0 \\
        $\mathcal{L}_{\mathrm{c}}\!=\!1, \mathcal{L}_{\mathrm{a}}\!=\!1$ & 76.2 & 92.5 & 91.5 & 85.7 & 92.7 & 91.3 & \textbf{\textcolor{red}{85.0}} & 76.1 & 91.3 & 84.8 & 75.1 & 92.5 & 86.2 \\
        \midrule
        \rowcolor{gray!35}
        \proposed & \textbf{\textcolor{red}{76.9}} & \textbf{\textcolor{red}{93.0}} & \textbf{\textcolor{red}{91.9}} & \textbf{\textcolor{red}{85.9}} & 92.5 & \textbf{\textcolor{red}{91.6}} & \textbf{\textcolor{red}{85.0}} & \textbf{\textcolor{red}{76.7}} & \textbf{\textcolor{red}{92.0}} & \textbf{\textcolor{red}{85.7}} & \textbf{\textcolor{red}{76.1}} & \textbf{\textcolor{red}{93.2}} & \textbf{\textcolor{red}{86.7}} \\
        \bottomrule
    \end{tabularx}
    \vspace{8pt}

    \textbf{(c) DomainNet-126}\\[3pt]
    \begin{tabularx}{\textwidth}{l | *{12}{C} | C}
        \toprule
        Method & \tiny{C$\to$P} & \tiny{C$\to$R} & \tiny{C$\to$S} & \tiny{P$\to$C} & \tiny{P$\to$R} & \tiny{P$\to$S} & \tiny{R$\to$C} & \tiny{R$\to$P} & \tiny{R$\to$S} & \tiny{S$\to$C} & \tiny{S$\to$P} & \tiny{S$\to$R} & Avg. \\
        \midrule
        $\mathcal{L}_{\mathrm{c}}\!=\!0, \mathcal{L}_{\mathrm{a}}\!=\!0$ & 63.1 & 78.2 & 62.5 & 71.2 & 86.5 & 69.0 & 71.1 & 72.0 & 64.9 & 73.2 & 72.0 & 81.8 & 72.1 \\
        $\mathcal{L}_{\mathrm{c}}\!=\!0, \mathcal{L}_{\mathrm{a}}\!=\!1$ & 71.3 & 85.9 & 70.9 & 78.9 & 89.0 & 73.8 & 81.1 & 78.8 & 73.8 & 79.6 & 74.9 & 86.1 & 78.7 \\
        $\mathcal{L}_{\mathrm{c}}\!=\!1, \mathcal{L}_{\mathrm{a}}\!=\!0$ & 80.6 & 91.7 & 78.9 & 83.3 & 91.5 & 78.0 & 83.9 & 80.8 & 78.2 & 83.9 & 80.9 & 91.8 & 83.6 \\
        $\mathcal{L}_{\mathrm{c}}\!=\!1, \mathcal{L}_{\mathrm{a}}\!=\!1$ & 81.7 & 91.9 & 79.3 & 84.2 & 92.0 & 79.1 & 84.2 & 81.6 & 79.1 & 84.3 & 81.9 & 92.1 & 84.3 \\
        \midrule
        \rowcolor{gray!35}
        \proposed & \textbf{\textcolor{red}{82.8}} & \textbf{\textcolor{red}{92.3}} & \textbf{\textcolor{red}{81.8}} & \textbf{\textcolor{red}{86.2}} & \textbf{\textcolor{red}{92.2}} & \textbf{\textcolor{red}{81.0}} & \textbf{\textcolor{red}{86.9}} & \textbf{\textcolor{red}{82.9}} & \textbf{\textcolor{red}{81.7}} & \textbf{\textcolor{red}{86.6}} & \textbf{\textcolor{red}{83.0}} & \textbf{\textcolor{red}{92.3}} & \textbf{\textcolor{red}{85.8}} \\
        \bottomrule
    \end{tabularx}
\end{table*}

\paragraph{Design Variant Comparison (Table~\ref{tab:variant_full}).}
Table~\ref{tab:variant_full} compares three design variants against \proposed and an 
oracle upper bound. \textbf{Frozen $\theta_v$} holds the peer model fixed throughout 
adaptation (i.e., $\mathcal{L}_v \equiv 0$), so that $\theta_v$ remains at its 
zero-shot initialization. The consistent performance drop confirms that joint peer 
adaptation is essential: a static peer provides increasingly stale supervision as the 
target model evolves. \textbf{Entropy Gating} replaces the cut-based reliability 
weights with entropy-derived gates:
\begin{equation}
    T^{\mathrm{ent}}_{i}(x) = \exp\!\left(-H(p_i(x))\right), \quad i \in \{t, v\},
\end{equation}
where $H(p_i(x)) = -\sum_c p_i^c(x) \log p_i^c(x)$. While entropy captures 
per-sample confidence from the marginal output distribution, it does not incorporate 
local neighborhood structure. The consistent underperformance relative to \proposed 
supports the cut statistic as a more reliable reliability proxy. 
\textbf{Oracle Partition $\mathcal{S}$} provides an upper bound by assuming access to 
ground-truth correctness labels, applying cross-model supervision exclusively via 
cross-entropy on the subset where the supervising model is known to be correct:
\begin{align}
    \mathcal{L}^{\mathrm{oracle}}_{t} = 
    \mathbb{E}_{x \in \mathcal{S}_{\bar{c}c}} \!\left[ 
        -\!\sum_c \mathbbm{1}[c = \hat{y}_v(x)] \log p_t^c(x)
    \right], \\
    \mathcal{L}^{\mathrm{oracle}}_{v} = 
    \mathbb{E}_{x \in \mathcal{S}_{c\bar{c}}} \!\left[ 
        -\!\sum_c \mathbbm{1}[c = \hat{y}_t(x)] \log p_v^c(x)
    \right].
\end{align}
The gap between \proposed and Oracle $\mathcal{S}$ (e.g., $86.7\%$ vs.\ $93.3\%$ on 
Office-Home) establishes a concrete upper bound for reliability-gated mutual 
correction, and suggests that more precise approximations of the oracle partition 
remain a promising direction for future work.

\begin{table*}[h]
    \centering
    \scriptsize
    \caption{Full per-task variant comparison results (acc, \%).}
    \label{tab:variant_full}
    \renewcommand{\arraystretch}{1.2}

    \textbf{(a) Office-31}\\[3pt]
    \begin{tabularx}{\textwidth}{l | *{6}{C} | C}
        \toprule
        Variant & \tiny{A$\to$D} & \tiny{A$\to$W} & \tiny{D$\to$A} & \tiny{D$\to$W} & \tiny{W$\to$A} & \tiny{W$\to$D} & Avg. \\
        \midrule
        Frozen $\theta_v$              & 90.8 & 90.8 & 81.5 & 94.3 & 81.3 & 95.4  & 89.0 \\
        Entropy Gating                 & 97.8 & 94.2 & \textbf{\textcolor{red}{83.9}} & 97.5 & \textbf{\textcolor{red}{83.6}} & 98.4 & 92.5 \\
        \midrule
        \rowcolor{gray!35}
        \proposed                      & \textbf{\textcolor{red}{98.6}} & \textbf{\textcolor{red}{95.2}} & \textbf{\textcolor{red}{83.9}} & \textbf{\textcolor{red}{98.2}} & 83.3 & \textbf{\textcolor{red}{100.0}} & \textbf{\textcolor{red}{93.2}} \\
        \midrule
        Oracle partition $\mathcal{S}$ & 99.4 & 99.9 & 88.8 & 99.3 & 88.8 & 100.0 & 96.0 \\
        \bottomrule
    \end{tabularx}
    \vspace{8pt}

    \textbf{(b) Office-Home}\\[3pt]
    \begin{tabularx}{\textwidth}{l | *{12}{C} | C}
        \toprule
        Variant & \tiny{Ar$\to$Cl} & \tiny{Ar$\to$Pr} & \tiny{Ar$\to$Rw} & \tiny{Cl$\to$Ar} & \tiny{Cl$\to$Pr} & \tiny{Cl$\to$Rw} & \tiny{Pr$\to$Ar} & \tiny{Pr$\to$Cl} & \tiny{Pr$\to$Rw} & \tiny{Rw$\to$Ar} & \tiny{Rw$\to$Cl} & \tiny{Rw$\to$Pr} & Avg. \\
        \midrule
        Frozen $\theta_v$              & 73.5 & 90.4 & 90.9 & 84.2 & 90.7 & 90.6 & 83.6 & 73.4 & 91.1 & 84.1 & 74.4 & 90.8 & 84.8 \\
        Entropy Gating                 & 75.7 & 92.7 & \textbf{\textcolor{red}{92.0}} & 85.5 & 92.3 & \textbf{\textcolor{red}{91.7}} & \textbf{\textcolor{red}{85.0}} & 76.1 & 91.8 & 85.1 & 75.7 & 92.3 & 86.3 \\
        \midrule
        \rowcolor{gray!35}
        \proposed                      & \textbf{\textcolor{red}{76.9}} & \textbf{\textcolor{red}{93.0}} & \textbf{\textcolor{red}{92.0}} & \textbf{\textcolor{red}{85.9}} & \textbf{\textcolor{red}{92.5}} & 91.6 & \textbf{\textcolor{red}{85.0}} & \textbf{\textcolor{red}{76.7}} & \textbf{\textcolor{red}{92.0}} & \textbf{\textcolor{red}{85.7}} & \textbf{\textcolor{red}{76.1}} & \textbf{\textcolor{red}{93.2}} & \textbf{\textcolor{red}{86.7}} \\
        \midrule
        Oracle partition $\mathcal{S}$ & 88.0 & 97.0 & 95.8 & 91.9 & 97.1 & 95.7 & 92.0 & 87.8 & 95.8 & 93.0 & 88.2 & 97.3 & 93.3 \\
        \bottomrule
    \end{tabularx}
    \vspace{8pt}

    \textbf{(c) DomainNet-126}\\[3pt]
    \begin{tabularx}{\textwidth}{l | *{12}{C} | C}
        \toprule
        Variant & \tiny{C$\to$P} & \tiny{C$\to$R} & \tiny{C$\to$S} & \tiny{P$\to$C} & \tiny{P$\to$R} & \tiny{P$\to$S} & \tiny{R$\to$C} & \tiny{R$\to$P} & \tiny{R$\to$S} & \tiny{S$\to$C} & \tiny{S$\to$P} & \tiny{S$\to$R} & Avg. \\
        \midrule
        Frozen $\theta_v$              & 81.8 & 91.7 & 80.0 & 84.7 & 91.7 & 80.0 & 84.6 & 81.9 & 79.5 & 85.0 & 81.8 & 91.8 & 84.6 \\
        Entropy Gating                 & 82.4 & 92.2 & 80.2 & 85.1 & \textbf{\textcolor{red}{92.2}} & 80.2 & 85.2 & 82.5 & 80.2 & 84.9 & 82.4 & 92.2 & 85.0 \\
        \midrule
        \rowcolor{gray!35}
        \proposed                      & \textbf{\textcolor{red}{82.8}} & \textbf{\textcolor{red}{92.3}} & \textbf{\textcolor{red}{81.8}} & \textbf{\textcolor{red}{86.2}} & \textbf{\textcolor{red}{92.2}} & \textbf{\textcolor{red}{81.0}} & \textbf{\textcolor{red}{86.9}} & \textbf{\textcolor{red}{82.9}} & \textbf{\textcolor{red}{81.7}} & \textbf{\textcolor{red}{86.6}} & \textbf{\textcolor{red}{83.0}} & \textbf{\textcolor{red}{92.3}} & \textbf{\textcolor{red}{85.8}} \\
        \midrule
        Oracle partition $\mathcal{S}$ & 91.2 & 96.1 & 90.2 & 93.4 & 96.0 & 90.2 & 93.3 & 91.2 & 90.1 & 93.5 & 91.2 & 96.1 & 92.7 \\
        \bottomrule
    \end{tabularx}
\end{table*}

\paragraph{CLIP Adaptation.}
Table~\ref{tab:clip_adaptation_full} compares zero-shot CLIP against the peer model 
adapted within \proposed across all tasks. Despite zero-shot CLIP's strong 
vision-language pretraining, prompt learning within \proposed consistently and 
substantially improves peer accuracy across all benchmarks: $+13.4\%$ on Office-31 
($79.4\% \to 92.8\%$), $+7.7\%$ on Office-Home ($79.0\% \to 86.7\%$), and $+3.8\%$ 
on DomainNet-126 ($80.2\% \to 84.0\%$). These gains confirm that target-domain 
specialization of the peer model is essential: an adapted peer provides 
higher-fidelity supervision signals throughout mutual correction, and the improvement 
in peer accuracy directly translates to stronger correction of the target model.

\begin{table*}[h]
    \centering
    \scriptsize
    \caption{Full per-task CLIP adaptation results (acc, \%). Zero-shot: CLIP without 
    adaptation. Adaptation: CLIP adapted as peer model in \proposed.}
    \label{tab:clip_adaptation_full}
    \renewcommand{\arraystretch}{1.2}

    \textbf{(a) Office-31}\\[3pt]
    \begin{tabularx}{\textwidth}{l | *{6}{C} | C}
        \toprule
        & \tiny{A$\to$D} & \tiny{A$\to$W} & \tiny{D$\to$A} & \tiny{D$\to$W} & \tiny{W$\to$A} & \tiny{W$\to$D} & Avg. \\
        \midrule
        Zero-shot  & 81.1 & 80.0 & 77.0 & 80.0 & 77.0 & 81.1 & 79.4 \\
        \rowcolor{gray!35}
        Adaptation & \textbf{\textcolor{red}{98.6}} & \textbf{\textcolor{red}{94.8}} & \textbf{\textcolor{red}{83.5}} & \textbf{\textcolor{red}{98.0}} & \textbf{\textcolor{red}{82.0}} & \textbf{\textcolor{red}{99.8}} & \textbf{\textcolor{red}{92.8}} \\
        \bottomrule
    \end{tabularx}
    \vspace{8pt}

    \textbf{(b) Office-Home}\\[3pt]
    \begin{tabularx}{\textwidth}{l | *{12}{C} | C}
        \toprule
        & \tiny{Ar$\to$Cl} & \tiny{Ar$\to$Pr} & \tiny{Ar$\to$Rw} & \tiny{Cl$\to$Ar} & \tiny{Cl$\to$Pr} & \tiny{Cl$\to$Rw} & \tiny{Pr$\to$Ar} & \tiny{Pr$\to$Cl} & \tiny{Pr$\to$Rw} & \tiny{Rw$\to$Ar} & \tiny{Rw$\to$Cl} & \tiny{Rw$\to$Pr} & Avg. \\
        \midrule
        Zero-shot  & 64.6 & 86.4 & 87.2 & 77.9 & 86.4 & 87.2 & 77.9 & 64.6 & 87.2 & 77.9 & 64.6 & 86.4 & 79.0 \\
        \rowcolor{gray!35}
        Adaptation & \textbf{\textcolor{red}{76.9}} & \textbf{\textcolor{red}{93.0}} & \textbf{\textcolor{red}{91.9}} & \textbf{\textcolor{red}{85.9}} & \textbf{\textcolor{red}{92.5}} & \textbf{\textcolor{red}{91.6}} & \textbf{\textcolor{red}{85.0}} & \textbf{\textcolor{red}{76.7}} & \textbf{\textcolor{red}{92.0}} & \textbf{\textcolor{red}{85.7}} & \textbf{\textcolor{red}{76.1}} & \textbf{\textcolor{red}{93.2}} & \textbf{\textcolor{red}{86.7}} \\
        \bottomrule
    \end{tabularx}
    \vspace{8pt}

    \textbf{(c) DomainNet-126}\\[3pt]
    \begin{tabularx}{\textwidth}{l | *{12}{C} | C}
        \toprule
        & \tiny{C$\to$P} & \tiny{C$\to$R} & \tiny{C$\to$S} & \tiny{P$\to$C} & \tiny{P$\to$R} & \tiny{P$\to$S} & \tiny{R$\to$C} & \tiny{R$\to$P} & \tiny{R$\to$S} & \tiny{S$\to$C} & \tiny{S$\to$P} & \tiny{S$\to$R} & Avg. \\
        \midrule
        Zero-shot  & 77.5 & 89.8 & 74.5 & 78.9 & 89.8 & 74.5 & 78.9 & 77.5 & 74.5 & 78.9 & 77.5 & 89.8 & 80.2 \\
        \rowcolor{gray!35}
        Adaptation & \textbf{\textcolor{red}{81.1}} & \textbf{\textcolor{red}{91.8}} & \textbf{\textcolor{red}{79.1}} & \textbf{\textcolor{red}{84.3}} & \textbf{\textcolor{red}{91.6}} & \textbf{\textcolor{red}{78.5}} & \textbf{\textcolor{red}{84.6}} & \textbf{\textcolor{red}{80.9}} & \textbf{\textcolor{red}{79.2}} & \textbf{\textcolor{red}{84.2}} & \textbf{\textcolor{red}{81.5}} & \textbf{\textcolor{red}{91.8}} & \textbf{\textcolor{red}{84.0}} \\
        \bottomrule
    \end{tabularx}
\end{table*}
\clearpage


\section{Limitations and Future Directions}
\label{appendix:limitations}

\proposed establishes a principled foundation for reliability-gated mutual correction, while several directions remain open for further exploration. The framework benefits from complementary prediction patterns between the two peers, while increasingly similar predictions provide fewer opportunities for cross-model correction. Samples on which both models are incorrect also remain comparatively challenging, as neither peer provides an immediately corrective signal, although some of these samples are gradually improved through changes in the surrounding neighborhood structure. These observations suggest several promising extensions. In particular, samples in low-margin regions could benefit from complementary reliability cues, such as prediction entropy or temporal consistency, to further sharpen directional discrimination. Incorporating additional pseudo-label-based strategies~\citep{dividemix_li_iclr, flexmatch_zhang_nips} could also provide richer supervisory signals during adaptation and potentially move performance closer to the oracle correction bound identified in Table~\ref{tab:analysis}(b). We leave these directions as future work.

\section{Broader Impacts}
\label{app:broader_impacts}
\proposed contributes to the broader goal of deploying machine learning models reliably across diverse real-world environments without requiring labeled target data or access to sensitive source data. This has direct positive implications in privacy-sensitive domains such as healthcare and finance, where sharing training data is often legally or ethically restricted. By enabling effective adaptation under such constraints, \proposed lowers the barrier to deploying high-quality models in underserved or data-scarce settings. Beyond privacy, the source-free constraint also reduces the computational and logistical costs associated with retaining and transferring large-scale source datasets, making model deployment more accessible to practitioners with limited infrastructure. Furthermore, the mutual correction framework introduced in \proposed is model-agnostic and does not assume a specific type of peer model, broadening its applicability to heterogeneous deployment scenarios where diverse pretrained models are increasingly available. Finally, by establishing a theoretically grounded reliability criterion for cross-model supervision, \proposed contributes to the broader agenda of trustworthy machine learning, providing principled mechanisms to mitigate confirmation bias and error propagation in self-supervised adaptation pipelines. We do not foresee significant negative societal impacts from this work. However, as with any adaptation method that operates without human supervision, there remains a risk that errors shared across both peer models go undetected, potentially leading to incorrect predictions in high-stakes applications such as medical diagnosis. Practitioners deploying \proposed in such settings should complement adaptation with appropriate human oversight and validation protocols.

\section{Asset Licenses and Terms of Use}
\label{app:asset_licenses}

\paragraph{Datasets.}
Office-31~\citep{office31_saenko_eccv} does not specify an explicit license and is used in this work solely for academic research purposes, with proper attribution to the original authors. Office-Home~\citep{officehome_venkateswara_cvpr} is distributed under a fair-use notice for non-commercial research and educational purposes. DomainNet-126~\citep{domainnet_saito_iccv} contains copyrighted material distributed under a fair-use notice for non-commercial research and educational purposes; uses beyond this scope may require permission from the respective copyright holders. VisDA~\citep{visda_peng_arxiv} is used for non-commercial research and educational purposes in accordance with its usage terms. Camelyon17 from the WILDS benchmark~\citep{wilds_sagawa_arxiv} is distributed under the Creative Commons CC0 1.0 Universal license (public domain). EyePACS~\citep{cuadros_eyepacs_journal} is used for academic research with attribution to EyePACS, as requested by the dataset provider for research and publication use. APTOS 2019 is used under the competition rules permitting non-commercial use of the competition data for academic research and education.

\paragraph{Models.} CLIP~\citep{clip_radford_icml} is released by OpenAI under the MIT license. The ResNet-50 and ResNet-101 implementations used in our experiments are provided via torchvision, which is released under the BSD 3-Clause license. CoOp~\citep{coop_zhou_ijcv} is released under the MIT license.

\end{document}